\newtheorem{theorem}{Theorem}[section]
\newtheorem{lemma}[theorem]{Lemma}
\newtheorem{definition}{Definition}[section]
\newcommand{\tepoch}{\tau_{\text{epoch}}}
\author{Shira Vansover-Hager\thanks{Blavatnik School of Computer Science and AI, Tel Aviv University} \and Tomer Koren\thanks{Blavatnik School of Computer Science and AI, Tel Aviv University, Google Research} \and Roi Livni\thanks{School of Electrical \& Computer Engineering, Tel Aviv University}}
\title{Rapid Overfitting of Multi-Pass Stochastic Gradient Descent \\ in Stochastic Convex Optimization}
\date{}
\begin{document}

\maketitle

\begin{abstract}
We study the out-of-sample performance of multi-pass stochastic gradient descent (SGD) in the fundamental stochastic convex optimization (SCO) model. 
While one-pass SGD is known to achieve an optimal $\Theta(1/\sqrt{n})$ excess population loss given a sample of size $n$, much less is understood about the multi-pass version of the algorithm which is widely used in practice. Somewhat surprisingly, we show that in the general non-smooth case of SCO, just a few epochs of SGD can already hurt its out-of-sample performance significantly and lead to overfitting. 
In particular, using a step size $\eta = \Theta(1/\sqrt{n})$, which gives the optimal rate after one pass, can lead to population loss as large as $\Omega(1)$ after just one additional pass. 
More generally, we show that the population loss from the second pass onward is of the order $\Theta(1/(\eta T) + \eta \sqrt{T})$, where $T$ is the total number of steps. 
These results reveal a certain phase-transition in the out-of-sample behavior of SGD after the first epoch, as well as a sharp separation between the rates of overfitting in the smooth and non-smooth cases of SCO.
Additionally, we extend our results to with-replacement SGD, proving that the same asymptotic bounds hold after $O(n \log n)$ steps. Finally, we also prove a lower bound of $\Omega(\eta \sqrt{n})$ on the generalization gap of one-pass SGD in dimension $d = \smash{\widetilde O}(n)$, improving on recent results of Koren et al.~(2022) and Schliserman et al.~(2024).
\end{abstract}

\section{Introduction}

Stochastic gradient descent (SGD) is one of the most fundamental algorithms in machine learning and optimization, widely used for training large-scale models. Theoretical analysis of SGD has traditionally focused on its behavior in the one-pass (single-epoch) setting, particularly in the context of convex optimization, where it is well understood that SGD achieves an optimal excess population loss of order $\Theta(1/\sqrt{n})$ when trained on a sample of size $n$. This rate is known to be minimax optimal under standard assumptions~\citep{nemirovski1983problem}.

However, in practice, it is common to run multiple passes over the training data, a scheme often referred to as \emph{multi-pass SGD}, where, in each pass, training examples are sampled without-replacement, with or without reshuffling between passes. Despite its empirical success, the theoretical implications of performing multiple passes are far less understood, particularly with respect to generalization and out-of-sample performance. The standard intuition, based on empirical observations, suggests that more training should lead to better empirical performance, yet, more steps and passes might eventually lead to overfitting as the model becomes too closely tailored to the training data being processed repeatedly.

Indeed, a substantial body of work has studied the convergence of multi-pass SGD in finite-sum problems~\citep[e.g.,][]{recht2012beneath, hardt2016train, nagaraj2019sgd, rajput2020closing, pmlr-v125-safran20a, safran2021random, koren2022benign, cha2023tighter}.
However, these studies primarily focus on optimization convergence in terms of \emph{empirical risk} and how it is influenced by without-replacement sampling, as opposed to with-replacement which is significantly easier to analyze theoretically.
Far less attention has been given to convergence in terms of \emph{population risk}, which quantifies out-of-sample performance and is arguably the true goal in the context of machine learning.

In fact, to the best of our knowledge, the following basic question still remains, rather surprisingly, unanswered: 
\begin{displayquote}
\emph{How does the population risk performance of SGD, tuned to attain the minimax optimal rate after a single pass, 
deteriorate (if at all) after making just a few additional passes?}
\end{displayquote}
While there are several existing upper bounds on the population risk in the multi-pass scenario~\citep[e.g.,][]{hardt2016train,bassily2020stability}, none of them give a meaningful, nontrivial answer to this question in the general stochastic convex setting in which the $\Theta(1/\sqrt{n})$ minimax rate of \citet{nemirovski1983problem} applies. 

Our investigation in this work reveals a rather surprising answer to this basic question: we show that even in the fundamental, well-studied convex setting, multiple passes of SGD can lead to rapid overfitting and reach a trivially-large population loss after just \emph{one} additional pass.  In particular, using the canonical stepsize of $\eta = \Theta(1/\sqrt{n})$ that leads to the minimax rate after a single pass, results in population excess risk as high as $\Omega(1)$ after merely two passes. 
More generally, we establish tight population excess risk bounds for multi-pass SGD using any stepsize $\eta$ and number of steps $T$. Our results cover without-replacement, single-shuffle and multi-shuffle, multi-pass SGD as well as with-replacement SGD, demonstrating similar rapid overfitting in all cases.
An illustration of the population risk bounds we obtain across different epochs and stepsizes can be seen in~\cref{fig:bound_illustration}.

\subsection{Our contributions}

In some more detail, we examine the out-of-sample behavior of multi-pass SGD in the fundamental setting of Stochastic Convex Optimization (SCO). 
In this setting, we operate with a convex and Lipschitz (yet not necessarily smooth) loss function over a bounded convex domain.

In this setting, we make the following contributions:

\begin{itemize}
    \item We establish a tight bound of $\Theta(1/(\eta T) + \eta \sqrt{T})$ on the population excess risk of multi-pass (without replacement) SGD with stepsize $\eta$ over $T$ steps (see \cref{thm:main_result,thm:upper_bound} in \cref{sec:population}).
    This result applies to both the single-shuffle and multi-shuffle variants, but also more generally to any multi-pass scheme that processes examples according to an arbitrary sequence of permutations.

    \item We also prove a similar tight $\Theta(1/(\eta T) + \eta \sqrt{T})$ population excess risk bound for with-replacement SGD, that holds after $O(n \log{n})$ steps
    (see \cref{thm:with_replacement,thm:upper_bound_with_replacement}).

    \item Finally, we also provide a new $\Omega(\eta\sqrt{n})$ lower bound on the \emph{empirical risk} (and therefore also on the generalization gap) of single-pass SGD. Our result holds in dimension ${\widetilde O}(n)$ thus improving upon previous results by \citet{koren2022benign, schliserman2024dimensionstrikesgradientsgeneralization} that were established in dimension at least quadratic in $n$.
    
\end{itemize}

We note that all of our lower bound constructions apply in an overparameterized regime, where the dimension scales linearly in the sample size $n$. Our approach builds on recent techniques developed for analyzing the sample complexity of (approximate) empirical risk minimization in stochastic convex optimization~\citep{feldman2016generalization,amir2021sgd,koren2022benign,schliserman2024dimensionstrikesgradientsgeneralization,livni2024samplecomplexitygradientdescent}. In particular, we leverage methods from \citet{livni2024samplecomplexitygradientdescent}, who refined these constructions to achieve minimal dimension dependence of $O(n)$.

\subsection{Discussion and open questions}

It is insightful to contrast our lower bounds for multi-pass SGD with existing lower bounds for gradient methods in SCO. Most prior work on population risk lower bounds has focused on algorithms that approximately minimize the empirical risk, such as full-batch gradient descent~\citep{amir2021sgd,schliserman2024dimensionstrikesgradientsgeneralization,livni2024samplecomplexitygradientdescent}. A key observation in these works is that after just a single step of gradient descent, the entire training set has been ``touched'' and so can be effectively memorized in the optimization iterate. Once this occurs, subsequent gradient steps can be steered toward an overfitting solution with respect to the particular training set at hand.
Our results in the present paper reveal a similar memorization effect for multi-pass SGD: after a single full pass, SGD is also capable of effectively memorizing the training set and driving the optimization towards overfitting. However, a crucial (and quite remarkable) difference is that this effect cannot be manifested before completing the first pass, as doing so would contradict the classical \citet{nemirovski1983problem} stochastic approximation upper bounds.

Our result regarding the empirical risk of one-pass SGD complement earlier work by \citet{koren2022benign, schliserman2024dimensionstrikesgradientsgeneralization} and challenges the classic learning paradigm of minimizing empirical risk and generalization gap in relation to SGD. It reveals that the generalization gap is inadequate in explaining the minimax optimal generalization behavior of one-pass SGD, which constitute one of the fundamental cornerstones of convex optimization. Together with our lower bounds for multi-pass SGD, this result highlights a sharp phase transition between the first and later epochs: generalization bounds, such as those implied by algorithmic stability \citep{bousquet2002stability,hardt2016train,bassily2020stability}, are ineffective during the first pass but become tight after the second pass, whereas the optimal performance in the first pass is only explained by online-to-batch (aka stochastic approximation) arguments which collapse immediately after the first pass.

\paragraph{Open questions.}

Our findings raise several intriguing open questions for further investigation:

\begin{itemize} 

\item First, our construction leverages techniques for lower bounding the \emph{generalization gap} in SCO in order to control the population risk of multi-pass SGD. However, in principle, the population risk performance of multi-pass SGD should be unrelated to its generalization gap (e.g., as in the case of one-pass SGD). In particular, it is an interesting question whether our results could be reproduced in settings where uniform convergence holds (e.g., low-norm linear predictors with a Lipschitz loss), and the generalization gap is necessarily small.

\item Another interesting problem is to precisely characterize the rate at which overfitting develops during the second pass. 
Indeed, our results are only effective starting from the end of the second epoch and onward. As we discussed earlier, generalization bounds remain vacuous at the start of the second pass, and regret analysis fails due to its reliance on independent samples at each iteration. Thus, it appears that more refined techniques will be required to analyze SGD dynamics immediately after the first pass.

\item Finally, our work focuses on the general SCO setting, which allows for non-smooth convex loss functions. Establishing population risk bound for multi-pass SGD in the analogous smooth setting remains an interesting open question that is likely to require significantly different techniques than those used in the non-smooth case. 
\end{itemize}

\begin{figure}[t]
    \includegraphics[width=13cm]{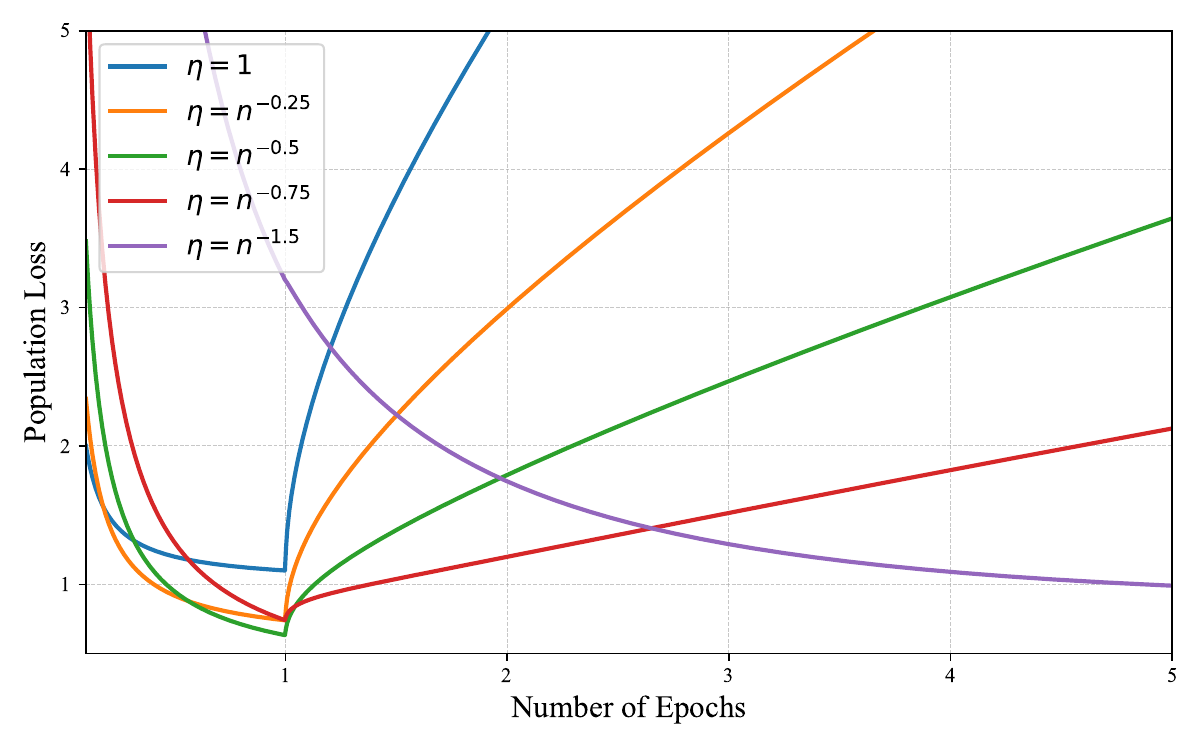}
    \centering
    \caption{An illustration of the minmax rates for the population loss of multi-pass SGD established in \cref{thm:main_result,thm:upper_bound}, through $K=5$ epochs and for different stepsizes $\eta$.}
    \label{fig:bound_illustration}
\end{figure}

\subsection{Additional related work}

\paragraph{Convergence bounds for multi-pass SGD.} 
Numerous studies have explored multi-pass SGD from an optimization perspective \citep[e.g.,][]{rajput2020closing, pmlr-v125-safran20a, cha2023tighter}. Notably, focusing on empirical performance in finite-sum problems, \citet{nagaraj2019sgd} derived upper bounds for smooth functions, and \citet{koren2022benign} extended these results to non-smooth functions. Some works specifically study the differences in optimization convergence rates between with-replacement and without-replacement sampling \citep[e.g.,][]{recht2012beneath, yun2021can, lai2020recht, de2020random, safran2021random}. 
Other works examine the population performance of multi-pass SGD. For example, \citet{sekhari2021sgd} focused on establishing upper bounds for a variant of multi-pass SGD that uses a validation set, and in a slightly non-standard formulation of SCO that allows for non-convex individual functions. In contrast, our main focus is on lower bounds for standard multi-pass SGD in the classical SCO model~\citep{shalev2010learnability}.
\citet{lei2021generalization} derive upper bounds using decreasing step sizes, without assuming Lipschitz continuity but requiring gradient norms to be bounded linearly by function values. In contrast, we establish tight bounds for fixed stepsize SGD under the standard Lipschitz assumption. Other upper bounds on population loss for the closely related with-replacement SGD can be found in \citet{hardt2016train,lei2020fine}. 

\paragraph{Lower bounds for SGD.}

In terms of lower bounds, \citet{zhang2023lowergeneralizationboundsgd} and \citet{nikolakakis2023selectfearminibatchschedules} analyzed SGD with replacement and multi-pass SGD, respectively, but both focused specifically on the unconstrained (smooth) case and allowed for unbounded function values, which in general, render the problem unlearnable.%
\footnote{Indeed, with an unbounded range of function values, even evaluating the loss of a \emph{single} model, let alone learning, becomes intractable. (E.g., without additional assumptions, standard concentration bounds scale linearly with the range of the random variables.)} We, on the other hand, focus exclusively on bounded domains and functions values bounded by a constant, where the classical \citet{nemirovski1983problem} results apply.
More recently, \citet{koren2022benign} established lower bounds for the \emph{generalization gap} of one-pass SGD. They demonstrate that even though SGD achieves optimal population loss rates after a single pass, its empirical risk can be as high as $\Omega(\eta\sqrt{T})$ after $T$ steps. Their construction was in dimension exponential in the sample size, which was later improved to quadratic by \citet{schliserman2024dimensionstrikesgradientsgeneralization}. Here, we further improve the dimensionality dependence to linear, which, as argued by \citet{feldman2016generalization,livni2024samplecomplexitygradientdescent}, is optimal.

\paragraph{Stochastic Convex Optimization (SCO).} 
Our work belongs to the study of sample complexity of learning algorithms in the fundamental SCO model. Previous research has shown that in SCO some algorithms overfit when the sample size depends on the dimension~\citep{shalev2010learnability, feldman2016generalization}, while others like online-to-batch algorithms achieve dimension independent rates~\citep{nemirovski1983problem}. This makes SCO a valuable model for testing the generalization and sample complexity of learning algorithms. \citet{amir2021sgd} were the first to establish a dimension dependent lower bound for the sample complexity of gradient descent (GD), they showed a population loss lower bound of $\Omega(\eta \sqrt{T})$ after $T$ steps using a step size $\eta$ in a construction with dimension exponential in the sample size $n$. Later, \citet{schliserman2024dimensionstrikesgradientsgeneralization} extended this result to functions with dimensions proportional to the square of the sample size, and \citet{livni2024samplecomplexitygradientdescent} further reduced it to linear size. This shows that to prevent GD from overfitting, a sample of size $\Omega(d)$ must be considered. While we build on ideas from these works, our focus is on SGD---rather than GD---which does generalize in its first epoch. Specifically, multi-pass SGD circumvents previous hardness constructions during its first pass and begins the second epoch with optimal generalization performance. This behavior requires a careful adaptation of existing constructions to account for the rapid deterioration of generalization after the first epoch.

\paragraph{Algorithmic stability.} 
Our work is also related to the study of stability of learning algorithms as means to establish generalization bounds \citep{shalev2010learnability,bousquet2002stability}. The crucial advantage of stability in the multi-epoch (without replacement) setting is the fact it does not require individual examples to be sampled i.i.d.---an assumption that only holds during the first pass. 
Notably, \citet{hardt2016train} show bounds on uniform stability of multi-pass SGD for smooth functions and \citet{bassily2020stability} show tight stability bounds for non-smooth functions. Their results demonstrate that in the non-smooth case, an additional $\Theta(\eta \sqrt{T})$ term must be added, that make such bounds vacuous (at least for the purpose of establishing risk bounds) unless the stepsize is extremely small as a function of $T$.
Importantly, these bounds apply uniformly across epochs and do not explain why SGD generalizes well in its first pass and overfitting starts only from the second pass. This suggests a gap between instability and generalization behavior, which calls for a more refined analysis that goes beyond plain stability arguments.

\section{Problem Setup}

\paragraph{Stochastic Convex Optimization (SCO).}
We study out-of-sample performance of multi-pass SGD in the context of the fundamental stochastic convex optimization (SCO) framework. 
A learning problem in SCO is defined by a convex and $G$-Lipschitz loss function $f: W \times Z \to \mathbb{R}$, where $Z$ is a sample space endowed with a population distribution $\mathcal{Z}$, and $W$ is a convex, compact domain with diameter bounded by $D$. 
A learning algorithm receives a training set $S = \{z_1, \dots, z_n\}$ of $n$ i.i.d. samples from $\mathcal{Z}$ and outputs a model $\widehat{w} \in W$, with the goal of minimizing the population loss:
$$ 
F(w) = \mathbb{E}_{z \sim \mathcal{Z}} [f(w, z)]. 
$$
Let $w^{\star} \in \arg\min_{w \in W} F(w)$ denote a minimizer of the objective. The empirical loss on the sample $S$ is given by:
$$ 
F_S(w) = \frac{1}{n} \sum_{i=1}^n f(w, z_i),
$$
and we denote by $w^{\star}_S \in \arg\min_{w \in W} F_S(w)$ its minimizer. 

\paragraph{One-pass Stochastic Gradient Descent (SGD).}
One-Pass SGD receives a training set $S = \{z_1, \dots, z_n\}$, a step size $\eta > 0$, a suffix averaging parameter $\tau \in [n+1]$ and a first-order oracle $O_z(w) \in \partial f(w,z)$ and proceeds as:
\[
\begin{alignedat}{2}
    &\text{Initialize:} \quad && w_0 = 0 \\
    &\text{For } t = 1, \dots, n: \quad && w_{t+1} = \Pi_{W} \left[w_t - \eta O_{z_{t}}(w_t)\right] \\
    &\text{Output:} \quad && \widehat{w}_{n, \tau} = \frac{1}{\tau} \sum_{t=n-\tau +1}^n w_t ,
\end{alignedat}
\]
where $\Pi_W: \mathbb{R}^d \rightarrow W$ denotes the projections onto $W$. Note that we consider general $\tau$-suffix averaging as an output.

\paragraph{Multi-pass SGD.}
In contrast to single-pass SGD, multi-pass SGD does several passes on permutations of the same training set. 
It receives a training set $S = \{z_1, \dots, z_n\}$, a number of epochs $K$, a step size $\eta > 0$, $K$ permutations $\{\pi_k : [n] \to [n]\}_{k \in [K]}$, a suffix averaging parameter $\tau \in [nK + 1]$ and a first-order oracle $O_z(w) \in \partial f(w, z)$, and proceeds as follows: 
\[
\begin{alignedat}{2}
    &\text{Initialize:} \quad && w_0 = 0 \\
    &\text{For } t = 1, \dots, nK: \quad && i_t = \pi_{\lfloor t /n \rfloor}(t \bmod n) \\
    &   && w_{t+1} = \Pi_{W} \left[w_t - \eta O_{z_{i_t}}(w_t)\right] \\
    &\text{Output:} \quad && \widehat{w}_{T, \tau} = \frac{1}{\tau} \sum_{t= T- \tau +1}^T w_t ,
\end{alignedat}
\]
where $T = nK$ is the total number of steps. We note two important special cases: the single-shuffle and multi-shuffle. In the single-shuffle case, $\pi_1$ is chosen uniformly at random, and $\pi_1 = \pi_k$ for all $k \in [K]$; in the multi-shuffle case, the $\{\pi_k\}_{k \in K}$ are chosen uniformly and independently.

\paragraph{With-replacement SGD.}
With replacement SGD receives a training set $S = \{z_1, \dots, z_n\}$, a number of steps $T$, a step size $\eta > 0$, a suffix averaging parameter $\tau \in [n+1]$ and a first-order oracle $O_z(w) \in \partial f(w,z)$. It then proceeds as follows:
\[
\begin{alignedat}{2}
    &\text{Initialize:} \quad && w_0 = 0 \\
    &\text{For } t = 1, \dots, T: \quad && i_t \sim \text{Unif}([n]) \\ 
    &   && w_{t+1} = \Pi_{W} \left[w_t - \eta O_{z_{i_t}}(w_t)\right] \\
    &\text{Output:} \quad && \widehat{w}_{n, \tau} = \frac{1}{\tau} \sum_{t=n-\tau +1}^n w_t .
\end{alignedat}
\]
Note that this is not a multi-pass algorithm \emph{per-se}, as the stochastic gradients used are independent regardless of the total number of steps $T$.

\section{Population Risk Bounds}
\label{sec:population}
We present a lower bound and a matching upper bound for the population loss of multi-pass SGD for $K \leq n$ epochs. 

\subsection{Lower bound for multi-pass SGD}

We begin with stating the lower bound, which constitutes our main contribution:
\begin{theorem}\label{thm:main_result}
For every $n \geq 24, 2 \leq K \leq n^2 $, $T = nK$, $d = 256 n$, $\eta > 0$ and $\tau \in [T+1]$, let $W = \{w \in \mathbb{R}^{2d+1} : \|w\|^2 \leq 1\}$ then there are a finite sample space $Z$, a distribution $\mathcal{Z}$ over $Z$, a $4$-Lipschitz convex function $f(w,z)$ in $\mathbb{R}^{2d+1}$ such that for any first order oracle of $f(w,z)$ with probability $\frac{1}{2}$ if we run without replacement SGD for $T$ steps with stepsize $\eta$ and with any sequence of permutations:
$$
    F(\widehat{w}_{T,\tau}) - F(w^{\star}) 
    = 
    \Omega \bigg(\min\Big\{\eta \sqrt{T} + \frac{1}{\eta T} , 1 \Big\} \bigg).
$$
\end{theorem}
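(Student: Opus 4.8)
The plan is to exhibit a single hard instance, obtained by adapting the $O(n)$-dimensional construction of \citet{livni2024samplecomplexitygradientdescent} for the generalization gap of gradient descent, but modified so that its overfitting mechanism is \emph{inactive during the first pass and active from the second pass onward}. I would decompose $\mathbb{R}^{2d+1}$ (with $d=256n$) into a few groups of coordinates and write $f = f_{\mathrm{slow}} + f_{\mathrm{mem}}$ on (essentially) orthogonal blocks, scaling everything so that $f$ is $4$-Lipschitz and the relevant iterates stay inside the unit ball. Here $f_{\mathrm{slow}}$ is a data-light convex ``ramp'' on a low-dimensional block, pointing towards a far optimum $w^\star_{\mathrm{slow}}$ at distance $\Theta(1)$: running SGD from the origin with stepsize $\eta$ for $T$ steps and outputting the $\tau$-suffix average provably incurs excess population risk $\Omega(\min\{1/(\eta T),1\})$ --- either $\eta T$ is small and no iterate has travelled far enough, or $\eta T$ is large and the averaged iterate sits at distance $\Theta(1/(\eta T))$ from $w^\star_{\mathrm{slow}}$ (this recovers the $D^2/(\eta T)$ term of the classical SGD rate as a lower bound, handled separately for short and long suffixes).

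The function $f_{\mathrm{mem}}$ is the overfitting gadget, on a block of dimension $\Theta(n)$: each example $z$ carries a sparse ``tag'' $v_z$, and the choice $d = 256n$ guarantees that a fresh example's tag is, with probability $\ge \tfrac12$, nearly orthogonal both to the span of the training tags and to whatever the iterate has accumulated. Structurally $f_{\mathrm{mem}}$ is a maximum of a benign ``loading'' branch --- whose subgradient simply accumulates $\eta v_z$ into a memorization block and whose value is mean-zero, hence contributes nothing to the population risk and vanishes at the origin --- and a ``trap'' branch that switches on exactly when the memorization coordinate correlated with the current tag exceeds a threshold $\theta = \Theta(\eta)$ matched to the per-pass loading increment. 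When active, the trap steers an auxiliary ``bad'' block along a data-dependent, sign-varying direction, and a companion term --- visible to fresh examples through the non-trap branch --- penalizes the accumulated displacement of that block.

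With the instance in hand, the analysis proceeds in two dynamical phases. In the \emph{first pass} I would show that, with probability $\ge \tfrac12$, the threshold $\theta$ is never reached: before an example is processed its tag is still nearly orthogonal to the partially-loaded memorization block, so only the loading branch fires, and the first $n$ steps behave exactly like plain SGD on $f_{\mathrm{slow}}$ together with pure loading, leaving every training tag's memorization coordinate above the threshold $\theta$. (This is consistent with the \citet{nemirovski1983problem} upper bound, since the first-pass output is an ordinary one-pass SGD iterate.) In the \emph{later passes} $2,\dots,K$, each re-processed training example now satisfies the activation condition, so the trap fires at essentially every step; the ``bad'' block accumulates a displacement of magnitude $\Theta(\min\{1,\eta\sqrt{T}\})$ --- the square-root rather than linear growth coming from the sign-varying trap direction, exactly as in the non-smooth stability lower bounds of \citet{bassily2020stability} --- with the unit-ball projection capping it at $\Theta(1)$. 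Since for $\tau \le (K-1)n$ the suffix average is supported entirely on these later passes, and otherwise still has a constant fraction of its mass there, $\widehat w_{T,\tau}$ inherits this displacement. Combining, $F(\widehat w_{T,\tau}) - F(w^\star) \gtrsim \min\{1,1/(\eta T)\} + \min\{1,\eta\sqrt{T}\} \gtrsim \min\{\eta\sqrt{T} + 1/(\eta T),\,1\}$, which is the claim, and the high-probability statement comes from the near-orthogonality event for the fresh tag (and a union bound over the first-pass events).

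The main obstacle I anticipate is the two-sided control of the gate across the first/second-pass boundary. On one side, the trap must \emph{not} trip during the first pass even though $d=\Theta(n)$ forces $\Theta(n)$ collisions among the training tags, so the construction and the choice of subgradient oracle at kinks must be robust to a constant fraction of prematurely-loaded coordinates. On the other side, the trap \emph{must} trip for almost every re-processed example from the second pass onward, uniformly over an \emph{arbitrary} sequence of permutations (covering single-shuffle, multi-shuffle, and everything in between), for every stepsize $\eta>0$ and every suffix length $\tau$. Making the threshold scale-match the loading increment, and propagating the resulting combinatorial conditions through the projected-SGD recursion while tracking the $\Theta(1/\sqrt{n})$-size orthogonality errors, is where the bulk of the technical work will lie.
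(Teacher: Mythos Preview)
Your plan departs substantially from the paper's route, and I believe it contains a real gap at the step where the bad block is supposed to reach magnitude $\Theta(\eta\sqrt{T})$.

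The paper does not build the first-pass/later-pass switch into the loss via a threshold gate. Instead it first proves the bound for a \emph{sample-dependent} oracle---one allowed to depend on all examples seen so far---and then applies a black-box reduction (encoding the history into one extra coordinate) to obtain a construction valid for \emph{any} standard oracle. With the sample-dependent oracle the first pass is trivial: output $0\in\partial f(0,\cdot)$ for $n$ steps and stay at the origin. After the first pass, having observed $S=\{V_1,\dots,V_n\}$, the oracle locates a Feldman vector $u_0\in U$ with $u_0\notin\bigcup_i V_i$ (such $u_0$ exists with probability $\ge\tfrac12$ since $|U|\ge 2^n$) and uses a ``staircase'' term $h$ to push the iterate toward $u_0$. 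Crucially the loss is an \emph{additive} decomposition $f=g+\alpha h$: for every training $V$ one has $u_0\notin V$, so $g(\cdot,V)=\max_{v\in V}\{c,\,w\cdot v\}$ is flat in the direction of growth and contributes zero subgradient during SGD, while on a fresh sample $u_0\in V$ with probability $\tfrac12$ and $g$ penalizes alignment with $u_0$. The $1/(\eta T)$ term is handled by a separate deterministic gadget.

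Your threshold gate replaces ``$u_0$ is absent from every training $V$'' by ``the tag $v_z$ has been loaded iff $z\in S$,'' which is natural, but it forces the penalty and the growth mechanism into the \emph{same} $\max$: the non-trap branch carries the penalty on $w_{\mathrm{bad}}$, the trap branch carries the outward push. For the trap to remain active on a training example its value must dominate the loading branch; yet the loading branch grows with $\|w_{\mathrm{bad}}\|$, while the trap branch---whose subgradient pushes $w_{\mathrm{bad}}$ outward---must, by convexity, decrease along that same direction. The two branches therefore cross once $\|w_{\mathrm{bad}}\|$ is of order the loaded-tag magnitude, i.e.\ $\Theta(\eta)$ (at best $\Theta(K\eta)$ if the tag keeps accumulating), which caps the bad-block growth far below $\eta\sqrt{T}$ for the small-$K$ regime that is the heart of the theorem. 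The appeal to the sign-varying construction of \citet{bassily2020stability} does not help here: that argument exhibits \emph{instability} between two neighbouring trajectories for a \emph{specific} oracle choice, not population-loss growth for an arbitrary oracle, and in any case the obstruction above is about function values in the $\max$, not about subgradient selection. The paper escapes precisely because $g$ and $h$ are added rather than maxed, and $g(\cdot,V)$ is \emph{identically constant} in the growth direction for every training $V$; identifying a direction that is flat for all training examples but penalized by a fresh one is exactly what requires knowing the full sample, hence the sample-dependent oracle and the subsequent reduction.
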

We note that this result applies to single, multi shuffle and more generally for any sequence of permutations. 
In particular, for $\eta = \Theta(1/\sqrt{n})$ that yields minimax optimal result for one-pass, can lead up to $\Omega(1)$ population loss after just one additional pass. 

We provide here a proof sketch, the full proof is presented in \cref{sec:proof_main_result}.

\begin{proof}[Proof sketch]
    To obtain the lower bound, we rely on the, recently introduced, notion of a \emph{sample-dependent oracle} introduced by \citet{livni2024samplecomplexitygradientdescent}, and the reduction from sample-dependent oracles to the standard setting of stochastic convex optimization. 
    \citet{livni2024samplecomplexitygradientdescent} observed that if the gradients of an algorithm such as Gradient Descent (GD) are dependent on the \emph{whole} sample, then one can utilize that to construct stochastic functions for which Gradient Descent overfits. It also turns out that one can reduce the standard setup of stochastic optimization to this weaker setting of a \emph{sample dependent} oracle, for GD. These observations allowed the construction of distributions that cause GD to overfit at certain regimes.
    
    SGD, though, differ from GD that observes the whole sample at the first iteration. In particular, it is impossible for the trajectory of SGD to depend on future seen examples. In turn, the reduction becomes more subtle. We note that this is not just an artifact of the proof technique, but is demonstrated by the fact that, indeed, SGD does minimize the population loss at the first epoch, and SGD circumvent the hard construction of \citet{livni2024samplecomplexitygradientdescent}.
    Nevertheless, we build on a similar approach, and we use a weaker sample dependent first order oracle, that cannot depend on the whole sample but may depend on past observations. As it turns out, such an oracle is enough to construct lower bounds, and also allows a reduction for the case of SGD. 
    
    Following \citet{livni2024samplecomplexitygradientdescent}, we rely on a loss function with the following structure: 
    $$
    f(w,V) = g(w,V) + \alpha h(w),
    $$
    where $g(w,V)$ is a variant of Feldman's function \citep{feldman2016generalization} that has the property that it has spurious empirical risk minimizers. Namely, using this function and a specific distribution, we can guarantee the high-probability existence of a ``bad'' vector, which for this bound will have high population loss. The second term, $h(w)$, is used to guide the iterates toward this bad vector, enabling us to achieve the lower bound. The constant $\alpha$ prevents projections from disrupting this process.
\paragraph{Feldman's function and existence of a spurious empirical minimizer.} Feldman's function, $g$, relies on the existence of a set $U \subset \mathbb{R}^d$ with $|U| \geq 2^{\Omega(d)}$ such that for every $u \neq v \in U$, we have
$$ 
\langle u, v \rangle \leq c_1 < c_2 \leq \|v\|^2. 
$$
We consider the following function:
$$ 
g_1(w, V) = \max_{v \in V} \left\{ c_1, v \cdot w \right\}. 
$$
The existence of such $U$ follows from a standard packing argument. Next, we let $Z = P(U)$, the powerset of $U$, and treat examples in $Z$ as subsets of $U$. Let $\mathcal{Z}$ be a distribution over $Z$ such that for $V\sim \mathcal{Z}$ every $u \in U$ is in $V$ with probability $\delta$. 

Notice that if $w \in V$, then $g_1(w, V) \geq c_2$, and otherwise, $g_1(w, V) = c_1$. 
Let $ S = \{V_1, \dots, V_n\} $. For this bound we will define a bad vector as an ERM with high population loss. Notice that any vector $ u_0 \notin \cup_{i=1}^n V_i $ is an ERM since $ F_S(u_0) - F_S(w^{\star}_S) = c_1 - c_1 = 0 $. If, furher, $u_0 \in U$, then its population loss is given by $F(u_0)\ge \delta (c_2 - c_1)$. By setting $ \delta = \frac{1}{2} $, we can show then that any such $u_0$ is a bad ERM. By further setting  $ d = \Omega(n) $, we can show that with high probability such a $ u_0\in U$ exists. 
  
\paragraph{Reaching spurious empirical risk minimizer.} 
Feldman's function demonstrated that there exists a ``bad vector" in the sense that there are minimizers of the empirical risk that yield large population loss. However, there is no guarantee that if we run SGD on Feldman's function we will reach such a minimzer (in fact, notice that SGD applied on $g$ will remain at zero). Therefore, our strategy is to add a further term $h$ that will cause SGD to converge towards $u_0$.  For simplicity of the overview we assume here that $u_0 \in \{0,1\}^d$.

For $T = O(d)$, we use the following function as $h$:    
$$
h_1(w) = \max_{i \in [d]} \{ 0, -w(i) \}.
$$

Assume we ran SGD for $n$ iterations. Notice that $0 \in \partial h_1(0)$, therefore, in our setup we are allowed to return a $0$ subgradient for $n$ iterations. After $n$ iterations, we start using $h$ after identifying a specific bad vector $u_0$. Here, we exploit the fact that our oracle is sample dependent, and we use $h_1$ to take one step on each of the positive coordinates of $u_0$ to reach $\eta u_0$ in at most $d$ steps. 

When $T \gg d$, $h_1$ won't suffice since after $d$ such steps, we can't further increase the desired loss. To achieve the $d^3$ term, we use the following function instead:
$$
h_2(w) = \max_{i \in [d]} \left\{ 0, -w(i), -(w(i) - w(i+1)) \right\}.
$$
With the following oracle:
\begin{enumerate}
    \item If there is an index $i$ such that $w(i) = w(i+1) \geq \eta$, then output: $e_{i+1} - e_{i}$.
    \item If there is no such index and if $i$ is the minimal index such that $w(i) = 0$, output: $-e_i$.
    \item If neither of the above conditions holds, output: $0$.
\end{enumerate}
By applying these gradient steps only for the positive indices of $u_0$, this will allow us to achieve $\Omega(\eta \sqrt{T})$ when $T \approx d^3$.

For the case when $d \ll T \ll d^3$, we will use $h_2$, but instead of considering one index at a time, we consider blocks of indices of size $B \geq 1$ to achieve the same effect.
\paragraph{Putting it together.} To compute and reach the bad vector we use the oracle's sample dependence and the fact that $ 0 \in \partial f(0, V) $ for all $V$, allowing us to stay at $0$. During the first epoch, we remain at $ 0 $ to observe the examples. At the start of the second epoch we have memorized the entire training set $ S $, we compute $ u_0 \notin \cup_{i=1}^n V_i $ and take gradient steps to reach it using $ h $.
\paragraph{Reduction.} The construction we described assumes a sample-dependent oracle. The key idea behind the reduction to this case is encoding information about past examples into the iterates. This allows the gradient to use knowledge of previous examples by examining only the current iterate and sample. 
\end{proof}

\subsection{Lower bound for with-replacement SGD}

Using the same technique, we also prove a lower bound for with-replacement SGD. The proof is given in \cref{sec:proof_main_result} and here we give a short sketch of the proof.

\begin{theorem}\label{thm:with_replacement}
For every $n \geq 24, 2 \leq K \leq n $, $T = Kn\log n$, $d = 256 n$, $\eta > 0$ and $\tau \in [T+1]$, let $W = \{w \in \mathbb{R}^{2d+1} : \|w\| \leq 1\}$ then there are a finite sample space $Z$, a distribution $\mathcal{Z}$ over $Z$, a $4$-Lipschitz convex function $f(w,z)$ in $\mathbb{R}^{2d+1}$ such that for any first order oracle of $f(w,z)$ with probability $\frac{1}{4}$ if we run with-replacement SGD for $T$ steps with stepsize $\eta$:
$$
    F(\widehat{w}_{T,\tau}) - F(w^{\star}) 
    = 
    \Omega \bigg(\min\Big\{\eta \sqrt{T} + \frac{1}{\eta T} , 1 \Big\} \bigg).
$$
\end{theorem}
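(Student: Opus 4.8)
The plan is to re-use, almost verbatim, the construction and the sample-dependent-oracle reduction behind \cref{thm:main_result}, and to replace the single observation epoch of $n$ without-replacement steps by an observation \emph{phase} of $\Theta(n\log n)$ with-replacement steps. We keep the loss $f(w,V)=g(w,V)+\alpha h(w)$ on the unit ball of $\mathbb{R}^{2d+1}$ with $d=256n$: here $g$ is Feldman's function, so that with high probability over $S=\{V_1,\dots,V_n\}$ there is a spurious empirical minimizer $u_0\in U\setminus\bigcup_{i=1}^n V_i$ with $F(u_0)-F(w^\star)=\Omega(1)$; $h$ is the steering term, using $h_1$ or its block-$h_2$ refinement according to whether $T=O(d)$ or $d\ll T\ll d^3$ (both sub-regimes occur over the stated range $2\le K\le n$), exactly as in \cref{thm:main_result}; and the reduction is the same one, simulating a sample-dependent oracle by a genuine first-order oracle that reads the already-seen examples out of auxiliary coordinates of the current iterate. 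As in \cref{thm:main_result}, a separate low-dimensional optimization gadget --- insensitive to with- versus without-replacement sampling --- contributes the $\Omega(1/(\eta T))$ term, and since $f$ is $4$-Lipschitz on a bounded domain the whole bound is trivially capped at $\Omega(1)$.

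First I would fix the observation phase to be the first $T_0=\Theta(n\log n)$ steps, chosen as a large enough constant multiple of $n\ln n$ that still fits inside the budget $T=Kn\log n$ because $K\ge2$. During these steps the oracle returns the zero subgradient in the ``payload'' block --- legitimate since $0\in\partial f(0,V)$ for every $V$ --- so the payload part of the iterate stays at $0$ while the auxiliary coordinates accumulate the union of the examples drawn so far. A coupon-collector estimate, $\Pr[\exists\,i\colon z_i\text{ not drawn in }T_0\text{ steps}]\le n(1-1/n)^{T_0}$, shows that for a suitable constant this probability is at most a small constant for all $n\ge24$; on this event $\bigcup_{j\le T_0}V_{i_j}=\bigcup_{i=1}^n V_i$, so after $T_0$ steps the iterate carries exactly the information that a single without-replacement epoch would have provided. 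This coupon-collector event is the only bad event not already present in \cref{thm:main_result}, which is precisely why the success probability drops from $\tfrac12$ to $\tfrac14$.

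On this good event I would then have the oracle compute $u_0$ from the memorized history and, over the remaining $T-T_0=\Theta(T)$ steps, run the steering oracle for $h$ exactly as in \cref{thm:main_result}, driving the payload iterate toward a scaling of $u_0$ so that the suffix average accumulates the overfitting term $\Omega(\eta\sqrt{T})$. The point that makes this robust to with-replacement sampling is that $u_0$ avoids \emph{every} $V_i$ in the sample and $\|u_0\|^2\ge c_2$, so along the steering trajectory $g(\,\cdot\,,V_{i_t})$ is locally constant and contributes a zero subgradient regardless of which example $z_{i_t}$ is drawn (the factor $\alpha$ absorbing the effect of the projection onto the unit ball); hence the steering dynamics are deterministic and identical to the without-replacement case. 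Combining the two gadgets, at the cost of a further constant factor for running them in parallel, yields $F(\widehat{w}_{T,\tau})-F(w^\star)=\Omega(\min\{\eta\sqrt{T}+1/(\eta T),\,1\})$ with probability at least $\tfrac14$.

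The step I expect to be the main obstacle is the timing and bookkeeping around the two phases together with the reduction. The observation phase must be long enough for the coupon-collector event to hold with a large constant probability, yet short enough that $\Omega(T)$ steps remain for the steering phase to reach $u_0$ and build up $\Omega(\eta\sqrt{T})$; and simultaneously the auxiliary memory coordinates used by the reduction must stay inside the unit ball over the now-longer ($\Theta(n\log n)$ rather than $n$) observation phase, which forces rescaling the encoding by a $\Theta(1/\log n)$ factor and re-checking that it neither leaks into nor distorts the payload dynamics. Beyond this, the with-replacement randomness is genuinely harmless: it only changes which example is presented at each step, never the payload iterate, which equals $0$ throughout observation and is driven purely by $h$ afterwards --- so once the coupon-collector event is secured the lower bound holds pointwise over the sampling.
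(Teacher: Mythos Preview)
Your proposal is correct and follows essentially the same approach as the paper: the paper packages the argument via a general lemma (\cref{lemma:general_epoch}) parameterized by an observation length $\tepoch$ and a probability $p$ that the whole sample has been seen by step $\tepoch$, then instantiates it with $\tepoch=n\log n$ and $p=\tfrac12$ from the coupon-collector bound, combining this with \cref{lemma:lower_bound_one_over_eta_t} for the $1/(\eta T)$ term and \cref{corollary:reduction} for the oracle reduction. Your worry about rescaling the auxiliary memory coordinates by $\Theta(1/\log n)$ is unnecessary --- the reduction from \citet{livni2024samplecomplexitygradientdescent} used here adds a single extra coordinate and is stated for arbitrary $T$, so no adjustment to the encoding is required.
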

\begin{proof}[Proof sketch] 
    Noticing that the key idea in the construction of \cref{thm:main_result} is to memorize the entire training set and then step in a ``bad direction'' uninterrupted, we can also apply the same approach here. In the case of with-replacement sampling, from folklore analysis of the coupon collector's problem we know that with probability $\frac{1}{2}$ we have seen the entire training set after $O(n\log n)$ steps, and from that point we can proceed similarly.
\end{proof}

\subsection{Matching upper bounds}

Finally, we give matching upper bounds to the two lower bounds presented above. 
Both bounds are straightforward consequences of standard techniques in convex optimization and algorithmic stability analysis; we only state the bounds here and defer proofs to~\cref{sec:proof_upper_bound}. 
\begin{theorem}\label{thm:upper_bound}
Let $f$ be a convex, non-smooth, and $G$-Lipschitz function, and let $S$ be a dataset of $n$ samples. If we run multi-pass SGD (either multi-shuffle or single-shuffle) with step size $\eta > 0$ for $K \geq 1$ epochs, for a total of $T = nK$ steps. Then, for any suffix average iterate $\tau = \Omega(T)$, we have the following guarantee:  
    $$\mathbb{E} \left[ F(\widehat{w}_{T, \tau}) - F(w^{\star}) \right] = O\left(\eta \sqrt{T} +\frac{1}{\eta T} + \frac{\eta T}{n}\right).$$   
\end{theorem}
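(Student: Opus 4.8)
The plan is to bound the expected population loss of the suffix-averaged iterate \emph{directly}, routing through the per-step processed losses $f(w_t,z_{i_t})$ rather than through the empirical loss $F_S(\widehat{w}_{T,\tau})$ of the averaged iterate. By convexity of $F$,
$$
\mathbb{E}\big[F(\widehat{w}_{T,\tau})\big] \;\le\; \frac{1}{\tau}\sum_{t=T-\tau+1}^{T}\mathbb{E}\big[F(w_t)\big],
$$
so it suffices to control $\mathbb{E}[F(w_t)]$ for each suffix iterate. The key observation is that $f(w_t,z_{i_t})$ is simultaneously (i) close to $F(w_t)$ by algorithmic stability, and (ii) controlled on average by a regret/online-to-batch argument; hence the empirical objective $F_S$ never enters, which is precisely what lets us avoid the reshuffling discrepancy in the non-smooth regime.

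For (i): fix the permutation sequence, so that $i_t$ becomes a deterministic index; then $z_{i_t}$ is just one of the i.i.d.\ training examples, and the classical ``uniform stability $\Rightarrow$ generalization'' argument \citep{bousquet2002stability} — replace $z_{i_t}$ by a fresh independent sample, use that $S$ and the perturbed dataset are equidistributed — gives $\mathbb{E}[F(w_t)] \le \mathbb{E}[f(w_t,z_{i_t})] + \epsilon_{\mathrm{stab}}(t)$, where $\epsilon_{\mathrm{stab}}(t)$ is the uniform stability of running multi-pass SGD for $t$ steps. Here I would invoke the uniform stability bound for multi-pass SGD on convex, $G$-Lipschitz (possibly non-smooth) losses of \citet{bassily2020stability}, which in our normalization ($G,D=O(1)$) reads $\epsilon_{\mathrm{stab}}(t) = O(\eta\sqrt{t} + \eta t/n)$; it is monotone in $t$ and order-agnostic, so $\epsilon_{\mathrm{stab}}(t)\le\epsilon_{\mathrm{stab}}(T)=O(\eta\sqrt{T}+\eta T/n)$ for all $t\le T$, uniformly over single- and multi-shuffle. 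Averaging over the suffix, $\mathbb{E}[F(\widehat{w}_{T,\tau})] \le O(\eta\sqrt{T}+\eta T/n) + \frac{1}{\tau}\sum_{t=T-\tau+1}^{T}\mathbb{E}[f(w_t,z_{i_t})]$.

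For (ii): viewing the run as online projected subgradient descent on the sequence $f(\cdot,z_{i_1}),\dots,f(\cdot,z_{i_T})$ with subgradients $g_t\in\partial f(w_t,z_{i_t})$, the textbook regret bound against the fixed comparator $w^{\star}$ over the suffix window gives, pathwise,
$$
\sum_{t=T-\tau+1}^{T}\big(f(w_t,z_{i_t})-f(w^{\star},z_{i_t})\big) \;\le\; \frac{\|w_{T-\tau+1}-w^{\star}\|^2}{2\eta} + \frac{\eta}{2}\sum_{t=T-\tau+1}^{T}\|g_t\|^2 \;\le\; \frac{D^2}{2\eta} + \frac{\eta G^2\tau}{2}.
$$
Taking expectations and using that each $i_t$ is a function of the permutations only (independent of the sample) while $w^{\star}$ is deterministic, we get $\mathbb{E}[f(w^{\star},z_{i_t})]=F(w^{\star})$ for every $t$, hence $\frac{1}{\tau}\sum_t\mathbb{E}[f(w_t,z_{i_t})] \le F(w^{\star}) + \frac{D^2}{2\eta\tau}+\frac{\eta G^2}{2} = F(w^{\star}) + O(1/(\eta T)+\eta)$, using $\tau=\Omega(T)$. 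Combining with the previous display and absorbing $\eta=O(\eta\sqrt{T})$ yields $\mathbb{E}[F(\widehat{w}_{T,\tau})-F(w^{\star})] = O(\eta\sqrt{T}+1/(\eta T)+\eta T/n)$.

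The one genuinely delicate point is the \emph{choice of decomposition}, not any single estimate. The naive route — bound $F_S(\widehat{w}_{T,\tau})$ by regret and add the generalization gap — forces a comparison between $\sum_t F_S(w_t)$ and $\sum_t f(w_t,z_{i_t})$; in the non-smooth reshuffling setting the within-epoch drift makes this gap $\Theta(\eta n)$ per step in the worst case, which is \emph{not} dominated by $\eta\sqrt{T}$ once $K\lesssim n$, and no elementary Lipschitz or without-replacement-sampling estimate beats it. Passing instead through the processed losses $f(w_t,z_{i_t})$ avoids this: regret controls exactly those terms, and stability relates exactly those terms back to $F(w_t)$, so the whole argument reduces to a black-box invocation of the \citet{bassily2020stability} stability bound together with the standard online gradient descent analysis.
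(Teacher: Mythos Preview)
Your argument is correct, and it takes a genuinely different route from the paper's. The paper proves \cref{thm:upper_bound} via the standard decomposition
\[
\mathbb{E}\big[F(\widehat{w})-F(w^\star)\big] \;\le\; \epsilon_{\mathrm{stab}} \;+\; \mathbb{E}\big[F_S(\widehat{w})-F_S(w^\star_S)\big],
\]
then bounds the first term using the \citet{bassily2020stability} stability estimate and the second using the without-replacement \emph{empirical} convergence result of \citet[Theorem~6]{koren2022benign}, which handles both single- and multi-shuffle and already absorbs the within-epoch drift you flag. So the ``naive route'' you warn against is exactly what the paper takes---it just outsources the hard part to an existing (non-elementary) optimization lemma rather than to a Lipschitz estimate.

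Your decomposition sidesteps that lemma entirely: by applying stability \emph{per iterate} to relate $\mathbb{E}[F(w_t)]$ to $\mathbb{E}[f(w_t,z_{i_t})]$, and then controlling the processed losses directly by the suffix regret inequality, you never touch $F_S$ and never need a without-replacement convergence bound. This is arguably more elementary and more transparent. Two small points worth making explicit in a final write-up: (i) you are invoking the Bassily et al.\ stability bound for the individual iterate $w_t$ and conditionally on a fixed permutation sequence, which is slightly stronger than the theorem as usually stated (for the algorithm's output, in expectation over the shuffle); it does follow immediately from their step-by-step recursion on $\|w_t-w_t'\|$, but it is worth saying so. (ii) The identity $\mathbb{E}[f(w^\star,z_{i_t})]=F(w^\star)$ uses only that $i_t$ is measurable with respect to the permutation randomness, so conditioning on $\pi$ (as you do) makes this immediate.
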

We will note that while the upper bound holds for the multi-shuffle and single-shuffle cases, our lower bound holds for every set of $K$ permutations. 
A similar result can be derived for  with-replacement SGD:
\begin{theorem}\label{thm:upper_bound_with_replacement}
Let $f$ be a convex, non-smooth, and $G$-Lipschitz function, and let $S$ be a dataset of $n$ samples. If we run with-replacement SGD with step size $\eta > 0$ for $K \geq 1$ epochs, for a total of $T = nK$ steps. Then, for any suffix average iterate $\tau = \Omega(T)$, we have the following guarantee:  
    $$\mathbb{E} \left[ F(\widehat{w}_{T, \tau}) - F(w^{\star}) \right] = O\left(\eta \sqrt{T} +\frac{1}{\eta T} + \frac{\eta T}{n}\right).$$   
\end{theorem}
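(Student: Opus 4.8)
The plan is to split the population excess risk into an empirical optimization error and a generalization gap, and to bound each using off-the-shelf tools: convergence of stochastic gradient descent on the empirical objective for the former, and uniform algorithmic stability of non-smooth SGD for the latter. Concretely, I would start from the decomposition
$$
\mathbb{E}\big[F(\widehat{w}_{T,\tau}) - F(w^{\star})\big] = \mathbb{E}\big[F(\widehat{w}_{T,\tau}) - F_S(\widehat{w}_{T,\tau})\big] + \mathbb{E}\big[F_S(\widehat{w}_{T,\tau}) - F_S(w^{\star}_S)\big] + \mathbb{E}\big[F_S(w^{\star}_S) - F(w^{\star})\big],
$$
and note that the last term is nonpositive: pointwise $F_S(w^{\star}_S) \le F_S(w^{\star})$, and $\mathbb{E}_S[F_S(w^{\star})] = F(w^{\star})$ since $w^{\star}$ does not depend on $S$. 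So it suffices to bound the first two terms.

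For the empirical optimization error, the key observation is that with-replacement SGD is exactly projected SGD applied to the convex finite-sum objective $F_S$ (convex as an average of the convex $f(\cdot,z_i)$), with i.i.d.\ stochastic subgradients $O_{z_{i_t}}(w_t)$ satisfying $\mathbb{E}_{i_t}[O_{z_{i_t}}(w_t)] \in \partial F_S(w_t)$ and $\|O_{z_{i_t}}(w_t)\| \le G$. A standard projected-SGD analysis, combined with a localized-regret bound for the suffix $\{T-\tau+1,\dots,T\}$, then gives $\mathbb{E}[F_S(\widehat{w}_{T,\tau}) - F_S(w^{\star}_S)] = O\big(D^2/(\eta\tau) + \eta G^2\big) = O\big(1/(\eta T) + \eta\big)$ whenever $\tau = \Omega(T)$, and the term $\eta$ is absorbed into $\eta\sqrt{T}$ since $T \ge 1$.

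For the generalization gap I would invoke uniform algorithmic stability. Conditioning on the index sequence $i_1,\dots,i_T$, with-replacement SGD is a deterministic map of $S$, and replacing a single sample perturbs the iterates only at the steps where that sample is drawn, of which there are $T/n$ in expectation; the stability analysis of non-smooth SGD of \citet{bassily2020stability} (which is stated precisely for sampling from the empirical distribution, i.e.\ with-replacement SGD) then shows that each iterate $w_t$ is uniformly stable with parameter $O\big(G^2(\eta\sqrt{T} + \eta T/n)\big)$. Since this per-iterate stability parameter is monotone in $t$, any suffix average $\widehat{w}_{T,\tau}$ inherits the same order of stability, and the standard stability-implies-generalization inequality yields $\mathbb{E}[F(\widehat{w}_{T,\tau}) - F_S(\widehat{w}_{T,\tau})] = O\big(\eta\sqrt{T} + \eta T/n\big)$. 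Adding the two contributions gives $O\big(\eta\sqrt{T} + 1/(\eta T) + \eta T/n\big)$, as claimed.

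I expect the only real work to be bookkeeping rather than new ideas: checking that the stability bound of \citet{bassily2020stability} applies verbatim to the projected, suffix-averaged variant (the projection onto $W$ is nonexpansive, hence harmless for stability, and suffix averaging is handled by convexity of the stability parameter in the iterate index), and verifying that the localized-regret estimate for the empirical error tolerates both the projection step and the restriction to a suffix. Both are routine, so the theorem follows by assembling these known ingredients.
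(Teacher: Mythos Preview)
Your proposal is correct and follows essentially the same route as the paper: decompose the population excess risk into the empirical optimization error plus the generalization gap, bound the former via the standard Nemirovski--Yudin SGD analysis applied to $F_S$ with the uniform distribution over $S$, and bound the latter via the uniform-stability bound of \citet{bassily2020stability} for with-replacement non-smooth SGD. The paper carries out the computation only for the full average and remarks that the suffix case follows by similar arguments, which is precisely the bookkeeping you flag at the end.
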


\section{Empirical Risk Bounds}
We further prove a lower bound of $\Omega(\eta \sqrt{T})$ for the empirical risk (equivalently, the generalization gap) of one-pass SGD using a construction in dimension $d=\smash{\widetilde O}(n)$, improving on results from \citet{koren2022benign, schliserman2024dimensionstrikesgradientsgeneralization}. 
\begin{theorem}\label{thm:one_pass_SGD}
    For every $n \geq 17, d = 712 n\log n$, $\eta > 0$, and $\tau \in [n+1]$ let $W = \{w \in \mathbb{R}^{2d+1} : \|w\| \leq 1\}$ then there are a datapoint set $Z$ and a distribution $\mathcal{Z}$ over $Z$, a $4$-Lipschitz convex function $f(w,z)$ in $\mathbb{R}^{2d+1}$ such that for any first order oracle of $f(w,z)$ with probability $\frac{1}{2}$ if we run one-pass SGD with $\eta$ as a learning rate for $n$ steps then:
    $$F_S(\widehat{w}_{n,\tau}) - F_S(w^{\star}_S) = \Omega\left(\min\left\{\eta \sqrt{n}, 1\right\}\right).$$
\end{theorem}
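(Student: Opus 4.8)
The plan is to mimic the ``bad vector + guiding term'' structure of the multi-pass construction, but now only needing to force high \emph{empirical} risk after a single pass, and to do so we exploit a sample-dependent oracle exactly as in \cref{thm:main_result}. I would take the loss in the familiar form $f(w,V)=g(w,V)+\alpha h(w)$, where $g$ is the Feldman-type component built on a packing $U\subset\mathbb{R}^d$ with $|U|=2^{\Omega(d)}$ satisfying $\langle u,v\rangle\le c_1<c_2\le\|v\|^2$ for distinct $u,v$, and $g_1(w,V)=\max_{v\in V}\{c_1,v\cdot w\}$. The key difference from the population-risk argument is that here I only care about the empirical risk $F_S(\widehat w_{n,\tau})$, so I want the suffix-averaged iterate itself to land near some $u_0\in U$ that lies in \emph{every} sampled set $V_i$ (rather than none of them), which would make $g_1(u_0,V_i)\ge c_2$ for all $i$ and hence $F_S(u_0)\ge c_2$, while $w^{\star}_S$ (e.g.\ the origin, or any point avoiding all $V_i$) achieves empirical value $c_1$, giving a gap $\Omega(c_2-c_1)=\Omega(1)$ in the $\eta\sqrt n=\Omega(1)$ regime, and $\Omega(\eta\sqrt n)$ when we instead use the $h_2$-type ``staircase'' guiding term to get a $\Theta(\eta\sqrt n)$ displacement along $\sqrt n$ coordinates.

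The mechanism I would use: during the single pass of $n$ steps, the oracle returns the zero subgradient on $g$ (legal since $0\in\partial_w g_1(0,V)$ when $w=0$ lies in the flat region) for a prefix of steps so the iterate stays at $0$ and silently records the examples $V_1,\dots,V_n$ seen so far; once enough of the sample has been observed, the oracle computes a vector $u_0$ that is consistent with the \emph{observed} sets (belonging to all of them — this is where the dimension needs to be slightly super-linear, $d=\widetilde O(n)$, so that by a union/packing bound such a common element exists with probability $1/2$), and then switches to using the $h$-component to drive the remaining iterates toward (a scaled copy of) $u_0$. Because we are in a single pass we cannot ``wait a full epoch'' before steering; instead I would split the $n$ steps so that the oracle commits to $u_0$ after seeing, say, the first $n/2$ examples and then has $n/2$ steps to walk toward it — accepting that $u_0$ need only be common to the first half of the sample, and separately arguing (again by the packing/union bound with the extra $\log n$ factor in $d$) that with constant probability it is in fact common to \emph{all} $n$ sets, or alternatively defining the ``bad'' event and the minimizer $w^{\star}_S$ so that only the first-half consistency is needed. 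The $\alpha h(w)$ term with small constant $\alpha$ ensures the $g$-subgradients (and the projection onto the unit ball) do not derail the walk, exactly as in the multi-pass proof. Finally I would invoke the sample-dependent-to-standard-oracle reduction from \citet{livni2024samplecomplexitygradientdescent} — encoding the observed examples into auxiliary coordinates of the iterate (this is why $W\subset\mathbb{R}^{2d+1}$) — to convert the sample-dependent oracle into a genuine first-order oracle of a fixed convex Lipschitz $f$.

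The main obstacle I anticipate is the tension between the single-pass budget and the memorization step: in the multi-pass setting one gets an entire free epoch to observe the sample before steering, whereas here the observation and the steering must share the same $n$ steps, so one must carefully balance how many examples are observed before committing to $u_0$ versus how many steps remain to reach the $\Theta(\eta\sqrt n)$-magnitude target, and simultaneously ensure the probabilistic existence of a suitable common vector $u_0$ with only $d=\widetilde O(n)$ coordinates (the $\log n$ factor is presumably exactly what is consumed here, via a coupon-collector- or union-bound-type argument over the packing). A secondary technical point is handling the suffix average: since $\tau$ is arbitrary in $[n+1]$, the walk toward $u_0$ must be arranged so that the relevant suffix of iterates is already concentrated near $u_0$ (e.g.\ reach $u_0$ quickly and then stay, using that $0\in\partial h(u_0)$), so that averaging does not dilute the displacement; this should follow the same bookkeeping as the suffix-averaging handling in the proof of \cref{thm:main_result}.
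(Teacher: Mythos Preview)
Your overall architecture---Feldman-type $g$ plus a guiding $h$, a sample-dependent oracle that first observes and then steers, followed by the reduction of \citet{livni2024samplecomplexitygradientdescent}---matches the paper. However, there is a genuine gap in the choice of the ``bad vector'': requiring $u_0\in\bigcap_{i=1}^n V_i$ (or even $u_0\in V_t$ for any $t$ in the walking phase) makes the walk impossible. Once $w_t$ has moved toward $u_0$, if $u_0\in V_t$ then the maximum in $g(w_t,V_t)$ is attained at $v=u_0$ and $0\notin\partial_w g(w_t,V_t)$; the oracle is then forced to include the $u_0$-direction in its output and cannot return a pure $\alpha h$-subgradient. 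Scaling by a small $\alpha$ does not help---it scales $h$ down, not $g$ up.

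The paper resolves this with a different probabilistic design: it sets the inclusion probability to $\delta=\Theta(1/n^{2})$ rather than $1/2$, and looks for a $u_0$ that lies in the \emph{intersection} of the first $\lceil n/16\rceil$ sets but is \emph{absent} from every one of $V_{\lceil n/16\rceil+1},\dots,V_n$. Presence in a constant fraction of the sample already yields $F_S(u_0)-F_S(w^{\star}_S)=\Omega(1)$ (after scaling), while absence from the tail is exactly what lets the oracle keep $0\in\partial_w g(w_t,V_t)$ throughout the walk. With $\delta=\Theta(1/n^{2})$ the tail-absence event has probability $1-O(1/n)$, and nonemptiness of $\bigcap_{i\le n/16}V_i$ requires $|U|\ge(1/\delta)^{n/16}=2^{\Theta(n\log n)}$; this---not a coupon-collector argument---is the source of the extra $\log n$ in $d=712\,n\log n$. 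Your fallback ``only first-half consistency is needed'' is the right instinct, but it must be paired with the tail-absence requirement and the change of $\delta$, neither of which appears in your proposal.
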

The proof is deferred to \cref{sec:proof_one_pass_SGD}. This result strengthens the previous results that one-pass SGD cannot be fully explained by the classical learning framework of minimizing empirical loss and generalization gap. Specifically, while SGD ensures low population loss with $ \eta = \Theta(1/\sqrt{n}) $, it can still cause a generalization gap of up to $ \Omega(1) $. Moreover, the construction is done in nearly linear dimension (up to a logarithmic factor), which is the lower bound for the dimension for such results, as uniform convergence must not hold.
\begin{proof}[Proof sketch]
    We use the same kind of construction as we did for the other proofs. Dinstinctively, though, to achieve high empirical risk, a ``bad vector" is a vector that appears frequently in the training set, but is infrequent on the population. To guarantee the existence of such a vector we change the distribution $\mathcal{Z}$ so that $u_0\in V$ with probability $\delta = O(1/n^2)$. 
    
    A similar argument (but reversed) as before demonstrates that there has to be a ``bad vector" that is frequent at the first $1/16n$ of the examples, observed by the algorithm, but does not appear at the tail of the training set.
    
    The appearance at a constant fraction of the training set ensures high empirical risk. On the other hand, absence from the tail allows uninterrupted advancement there in a similar technique as before. 
    Our strategy then is similar to before, 
    to compute the bad vector $ u_0 $, we also stay at $ 0 $, this time for $ n/16 $ steps. Then if $ \cap_{i=1}^{n/16} V_i \neq \emptyset $, we select the minimal vector $ u_0 $ from this set and take gradient steps to reach it using $ h $.
\end{proof}

\section{Proofs of \texorpdfstring{\cref{thm:main_result,thm:with_replacement}}{Theorems 3.1 and 3.2}}\label{sec:proof_main_result}
We first prove the results using a weaker notion of a sample-dependent oracle and later relax this dependence. Formally, a \textit{sample-dependent oracle} is a gradient oracle that, at each step, has access to both the current and all previous samples. We denote it as $ O_{\boldsymbol{S}} $, where $ \boldsymbol{S} = (S_1, \dots, S_T) $ is the ordered sequence of sample sets. At iteration $ t $, the algorithm receives $ S_t $, which may be a set of samples rather than a single sample. This general setting captures a broad class of algorithms, including variants of SGD, as well as GD and batch methods. The oracle is defined as:  
\begin{equation*}
   O_{\boldsymbol{S}}(S_{1:t-1}; S_t, w_t) = \frac{1}{|S_t|} \sum_{z \in S_t} O_z(S_{1:t-1}; w_t),
\end{equation*}
when $S_{1:0} = \emptyset$, $S_{1:t-1} = (S_1, \dots, S_{t-1})$ and $O_z(S_{1:t-1};w) \in \partial f(w,z)$.
The trajectory induced by $O_{\boldsymbol{S}}$ which is initialized at $w_0 = 0$ is specified by the following equation: 
\begin{equation*}
    w_{t+1} = w_t - \eta O_{\boldsymbol{S}}(S_{1:t-1}; w_t, S_t).
\end{equation*}
We will state the following general Lemma for an upper bound of $\Omega(\eta \sqrt{T})$ and later show how it proves \cref{thm:main_result,thm:with_replacement}.
\begin{lemma}\label{lemma:general_epoch}
      For every $ n $, $ \tepoch \geq 24 $, $ K \geq 2 $, and step size $ \eta > 0 $, define $ T = \tepoch K $ and $ d = 256n $ and let $W = \{w \in \mathbb{R}^{2d} : \|w\| \leq 1\}$. There exist a finite dataset $ Z $, a distribution $ \mathcal{Z} $ over $ Z $, a $ 3 $-Lipschitz convex function $ f(w, z) $ in $ \mathbb{R}^{2d} $, as well as a sample-dependent first-order oracle $ O_{\boldsymbol{S}} $ such that for every training set $ S \sim \mathcal{Z}^n $ drawn i.i.d. and every sequence of samples $ \boldsymbol{S} = (S_1, \dots, S_T) $ where $ S_t \subset S $ for all $ t \in [T] $, if with probability $ p $ it holds that $\bigcup_{t=1}^{\tepoch} S_t = S$,
then with probability at least $ \frac{1}{2} p $, for every suffix averaging $ \tau $:  
      $$F(\widehat{w}_{T,\tau}) - F(0) = \Omega\left(\min\left\{\eta \sqrt{\min\left\{ n^3 ,T\right\}}, 1\right\}\right).$$
\end{lemma}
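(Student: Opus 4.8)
The plan is to construct the loss function $f(w,V) = g(w,V) + \alpha h(w)$ on $\mathbb{R}^{2d}$ (splitting coordinates into a "$g$-block" of dimension $d$ and an "$h$-block" of dimension $d$), following the structure of the proof sketch. First I would invoke the packing argument to obtain the set $U \subseteq \mathbb{R}^d$ with $|U| \geq 2^{\Omega(d)}$ and the inner-product gap $\langle u,v\rangle \le c_1 < c_2 \le \|v\|^2$ for distinct $u,v \in U$; take $Z = P(U)$ and $\mathcal{Z}$ the product-type distribution in which each $u\in U$ lies in $V\sim\mathcal Z$ independently with probability $\delta = \tfrac12$. Set $g_1(w,V) = \max_{v\in V}\{c_1, v\cdot w\}$ acting on the first block. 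Since $d = 256n$ and $|U|$ is exponential in $d$, a union bound shows that with probability $\ge 1 - 2^{-\Omega(d)} \ge \tfrac12$ (say) there exists $u_0 \in U$ with $u_0 \notin \bigcup_{i=1}^n V_i$; conditioning on the event $\bigcup_{t=1}^{\tepoch} S_t = S$ (probability $p$), intersect the two events to keep probability $\ge \tfrac12 p$. For such $u_0$, any scalar multiple $\lambda u_0$ with $\|\lambda u_0\|\le 1$ has empirical $g$-loss exactly $c_1$ but population $g$-loss $\ge c_1 + \delta(c_2-c_1)\lambda = c_1 + \Omega(\lambda)$, while $F(0) = c_1$ as well (since $0\cdot v = 0 < c_1$... here one must be slightly careful and set constants so $g_1(0,V)=c_1$), giving an excess population loss of $\Omega(\lambda)$ once we drive the iterate to $\lambda u_0$.

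Next I would build the oracle so that the iterates actually reach $\approx \eta\sqrt{\min\{n^3,T\}}\cdot u_0$ (normalized to stay in the unit ball). During the first $\tepoch$ steps the oracle returns $0 \in \partial f(0,\cdot)$ on every coordinate — legitimate because $0$ is a subgradient of both $\max\{c_1, v\cdot w\}$ at $w=0$ (as $v\cdot 0 = 0 < c_1$) and of $h$ at $0$ — so the trajectory stays at $w_0 = 0$ and, using sample-dependence, the oracle "reads off" all of $S$ by the end of step $\tepoch$. From step $\tepoch+1$ on, the oracle identifies $u_0$ and uses $h$ to push the $h$-block toward a large multiple of $u_0$. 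To get the $\min\{n^3, T\}$ behavior: when $T \lesssim d^3 = \Theta(n^3)$, use $h_2(w)=\max_i\{0,-w(i),-(w(i)-w(i+1))\}$ with the "shift" oracle from the sketch, operating on blocks of coordinates indexed by the support of $u_0$ of size $B = \lceil d^3/T\rceil$ so that after $T-\tepoch = \Theta(T)$ steps the relevant coordinates reach magnitude $\Theta(\eta\sqrt{T})$ (each coordinate in a block of size $B$ accumulates $\Theta(\eta T / (dB)) \cdot$ something — here I need to recheck the arithmetic so that a chain of length $\ell$ fed for $m$ steps reaches height $\Theta(\eta m/\ell)$, and choosing $\ell,B$ balances to $\eta\sqrt{T}$); when $T \gtrsim n^3$, the construction saturates and the best we reach is $\Theta(\eta\sqrt{n^3}) = \Theta(\eta n^{3/2})$, capped at $1$ by the domain constraint. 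The role of $\alpha h$ with small constant $\alpha$ is to ensure the subgradients of $\alpha h$ don't get overwhelmed by projection: we want the projection onto $W$ to essentially act coordinatewise on the $h$-block in the direction of $u_0$, and choosing $\alpha$ small keeps the $g$-block contributions from interfering while keeping $f$ Lipschitz with the stated constant $3$.

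Then I would verify the quantitative claims: (i) Lipschitzness — each of $g_1$, $h_1$, $h_2$ is a max of linear functions with bounded coefficients (the $v$'s have $\|v\|=O(1)$, the $\pm e_i$ and $e_{i+1}-e_i$ have norm $O(1)$), so with $\alpha = O(1)$ we get a $3$-Lipschitz convex $f$; (ii) the excess population loss bound — the final iterate (and hence any suffix average $\widehat w_{T,\tau}$, since once the iterates pass into the "good region" for the last constant fraction of steps they all have $h$-block $\approx \lambda u_0$, and this is where $\tau = \Omega(T)$ from the upper-bound theorems is used, though note \cref{lemma:general_epoch} claims it for \emph{every} $\tau$, so I must argue the iterate stays near $\lambda u_0$ once it gets there — monotone coordinate growth under the shift oracle does give this) has $g$-block aligned with $u_0$ at scale $\lambda = \Theta(\min\{\eta\sqrt{\min\{n^3,T\}},1\})$, yielding $F(\widehat w_{T,\tau}) - F(0) = \Omega(\lambda)$. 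The main obstacle, I expect, is the combinatorial/arithmetic design of the chained-coordinate oracle for the intermediate regime $d \ll T \ll d^3$: getting the block size $B$ and chain structure so that after $\Theta(T)$ SGD steps the coordinates reach exactly height $\Theta(\eta\sqrt T)$ (not $\Theta(\eta T/d)$ which would be too big, nor $\Theta(\eta T/d^2)$ too small) requires the right "delay line" gadget, and simultaneously one must ensure these $h$-steps don't move the $g$-block (handled by disjoint coordinate blocks) and that the output $\widehat w_{T,\tau}$ — a suffix average over possibly the whole trajectory including the stalled-at-$0$ first epoch — still has $\Omega(\lambda)$ population loss, which forces the averaging window, or a monotonicity argument on the $g$-block coordinates, to be handled carefully. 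The reduction from the sample-dependent oracle to a genuine SGD oracle (encoding seen examples into unused iterate coordinates) is deferred and handled separately; here I only need the sample-dependent version as stated in \cref{lemma:general_epoch}.
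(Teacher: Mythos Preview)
Your proposal has a structural gap that breaks the argument: the idea of putting $g$ and $h$ on \emph{disjoint} coordinate blocks cannot work. You write that the oracle ``uses $h$ to push the $h$-block toward a large multiple of $u_0$'' and later that ``these $h$-steps don't move the $g$-block (handled by disjoint coordinate blocks)'', yet in the same breath claim the final iterate ``has $g$-block aligned with $u_0$''. These statements are inconsistent: if $h$-steps never touch the $g$-block and you start at $0$, the $g$-block remains $0$ for all $t$, so $g_1(\widehat w_{T,\tau},V)=\max_{v\in V}\{c_1,0\}=c_1$ for every $V$ and $F(\widehat w_{T,\tau})=F(0)$. No excess population risk is created.

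The actual mechanism, both in the paper and in the prior constructions it builds on, is that $g$ and $h$ act on the \emph{same} coordinates. The oracle uses subgradients of $\alpha h$ to drive $w_t$ toward a scaled copy of $u_0$, and what you have to check is that throughout this motion $0\in\partial g(w_t,V)$ for every \emph{training} sample $V$. This holds not by coordinate separation but by the packing inequality together with $u_0\notin\bigcup_i V_i$: since $w_t$ is supported on the positive coordinates of $u_0$, one gets $w_t\cdot v \le (\text{threshold in }g)$ for every $v\in\bigcup_i V_i$, so $g(\cdot,V)$ stays flat on the trajectory. The threshold in $g$ is then calibrated precisely so that it is never exceeded on the training set but \emph{is} exceeded by $w_{T'}\cdot u_0$, which contributes with probability $\tfrac12$ on a fresh sample. (The $2d$ in the statement is not a $g$-block/$h$-block split: for large $K$ the paper works in a $d'$-dimensional subspace with $d'<2d$, and for small $K$ it does split $w=(w^{(1)},w^{(2)})$ but $g$ reads the \emph{sum} $w^{(1)}+w^{(2)}$, not one half.) A secondary issue: your population-loss estimate ``$\ge c_1+\delta(c_2-c_1)\lambda$'' is not how the $\max$ behaves --- nothing happens until the threshold is crossed --- so the threshold, the scaling $\alpha=\min\{1,1/(\eta\sqrt{T})\}$ that keeps iterates in the ball, and the block size $B$ all have to be tuned jointly rather than treated as independent constants.
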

This Lemma shows that once an algorithm memorizes the entire training set we can quickly lead it to overfit.
\begin{proof}
We will prove here the Lemma for the case $K \geq 34$. For $2 \leq K \leq 34$ the proof is deferred to \cref{sec:proof_sample_dependent_k_geq_3}.
First we notice that there exists $d \leq d' < 2d$ such that $d' = 2^m$ for some $m \in \mathbb{N}$. Our construction will be over $\mathbb{R}^{d'}$ which of course implies that such a construction can be done in $\mathbb{R}^{2d}$ using only a subspace of dimension $d'$. 
From the proof of Lemma 8 in \citet{livni2024samplecomplexitygradientdescent} there exists $U \subset \{0,1\}^{d'}$ such that:
$$
\forall u\neq v\in U, u\cdot v \leq \frac{5}{16}d' \quad\text{and}\quad \|u\|^2 = \frac{7}{16}d',
$$
and
$$
|U| > e^{d'/258} > e^{d/258} \geq 2^{d/256} = 2^{n}.
$$
We will let $Z = P(U)$ and define a distribution $\mathcal{Z}$ over $Z$ such that for a random sample $V \sim \mathcal{Z}$, each $u \in U$ lies in $V$ with probability $\frac{1}{2}$. Let $\alpha = \min\left\{ \frac{1}{\eta \sqrt{2T}}, 1\right\}$. First we will assume $T \leq {d'}^3$ and we will consider the case where $T > {d'}^3$ at the end. Let $B\in \mathbb{N}$ be such that: 
$$ d' \leq B\left(\frac{T}{34}\right)^{1/3} < 2d'.$$
One can show that without loss of generality we can assume $B$ is also a power of $2$, in particular $d'$ is divisible by $B$ for a large enough $T$. This will imply two useful facts: 
\begin{equation}\label{upper_bound_on_T'}
    \frac{{d'}^3}{B^3} \leq \frac{T}{34} \Longrightarrow \tepoch + \frac{d'^3}{B^3} \leq \frac{T}{34} + \frac{T}{34} =  \frac{1}{17}T,
\end{equation}
since $T \geq 34\tepoch$ and
\begin{equation}\label{upper_bound_on_T}
    T \leq \frac{272d'^3}{B^3},
\end{equation}
which we will use later. We consider blocks of indices so we will present the following notation: for two sets of indices $I, J\subset [d']$ denote $I\prec J$ if $\max\{i\in I\} < \min\{j\in J\}$ and denote:
\begin{equation*}\label{eq:blocks_of_indices}
    e_{I} = \frac{1}{\sqrt{|I|}}\sum_{i\in I} e_i \text{ and } w(I) = \frac{1}{\sqrt{|I|}} \sum_{i\in I} w(i).
\end{equation*}
We will define the functions: 
\begin{align*}
    h(w) & = \max \bigg\{ 0, \max_{I \subset [d'], |I| = B}\{-w(I)\}, \\ & \qquad \max_{I \prec J \subset [d'], |I| = |J| = B}\{-(w(I)-w(J))\}\bigg\}
    \\ g(w,V) & = \frac{1}{\sqrt{d'}} \max_{v\in V} \left\{ \frac{45\eta \alpha d'^2}{2\cdot 16^2 B^{1.5}}, w\cdot v\right\} .
\end{align*}
Finally our loss function will be:
\begin{equation}\label{eq:f_second_case}
    f(w,V) = g(w,V) + \alpha h(w).
\end{equation}
Notice that $f$ is convex and $3$-Lipschitz. Next we define a sample dependent oracle $O_{\boldsymbol{S}}$. Given $w$ and all examples seen so far $\boldsymbol{S}_{1:t} = \{S_1, \dots,S_t\}$: 
\begin{enumerate}
    \item If $|\boldsymbol{S}_{1:t}| \leq \tepoch$ output 0.
    \item Otherwise check in lexicographic order if there exists $u_0 \in U$ such that $u_0 \notin (\cup_{t'=1}^{\tepoch} \cup_{V\in S_{t'}}V)$.
    \begin{itemize}
        \item If there doesn't exist such $u_0$ or $u_0 \in (\cup_{t'=\tepoch+1}^t \cup_{V\in S_{t'}}V)$ output an arbitrary sub-gradient.
        \item Otherwise we will compute the $\frac{7d'}{16}$ indices of $u_0$ that hold $u_0 = 1$ denote them $J = \{j_1, \dots, j_{7d'/16}\}$. Such a set exists since $\|u_0\|^2 = \frac{7d'}{16}$ and $u_0 \in \{0,1\}^{d'}$. We will divide the indices in $J$ to blocks of size $B$, for $ m = 1, \dots, \frac{7d'}{16B}$:
    $$
    I_m = \{j_{(m-1) \cdot B + 1}, \dots, j_{m\cdot B}\}.
    $$  
    We have three scenarios: 
    \begin{enumerate}
        \item \label{itm:good_event_geq_34_a} If there is a block $I_j$ such that $w(I_j) = w(I_{j+1}) > 0$ then output: $\alpha(e_{I_{j+1}} - e_{I_j})$.
        \item\label{itm:good_event_geq_34_b} If there is no such block and if $I_j$ is the minimal block such that $w(I_j) = 0 $ output: $-\alpha e_{I_j}$.
        \item\label{itm:good_event_geq_34_c} If both of the conditions stated above do not hold output: $0$.
    \end{enumerate}
    \end{itemize}
\end{enumerate} 
We will denote the following event: 
\begin{equation}\label{eq:good_event_geq_34}
    \mathcal{E} = \big\{\cup_{t=1}^{\tepoch} S_t = S \; \text{and} \; \exists u_0 \in U : u_0 \notin  \cup_{t=1}^{\tepoch}\cup_{V\in S_t} V\big\}
    .
\end{equation}
We will prove $O_{\boldsymbol{S}}$ is a valid oracle in the following lemma whose proof is deferred to \cref{sec:proofs_for_gradient_oracles_second_case}.
\begin{lemma}\label{lemma:gradient_oracle_second_case}
    $O_{\boldsymbol{S}}$ as stated above is a valid sample dependent first order oracle of $f$ as defined in \cref{eq:f_second_case}. Furthermore if $\mathcal{E}$ holds it will induce a trajectory such that we never leave the unit ball and no projection takes place.  
\end{lemma}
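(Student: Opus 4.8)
The plan is to walk along the trajectory induced by $O_{\boldsymbol S}$ and check, at every step $t$, that the vector it returns lies in $\partial f(w_t,z)$ for each $z\in S_t$, while simultaneously tracking $\|w_t\|$. Because $f(\cdot,V)=g(\cdot,V)+\alpha h$ is a sum of convex functions, $\partial g(w,V)+\alpha\,\partial h(w)\subseteq\partial f(w,V)$, so it suffices to write each output as $s_g+\alpha s_h$ with $s_g\in\partial g(w_t,z)$ and $s_h\in\partial h(w_t)$. From the shape $g(w,V)=\tfrac1{\sqrt{d'}}\max\{c,\max_{v\in V}w\cdot v\}$ (with $c$ the constant in the definition of $g$) one gets $0\in\partial g(w,V)$ as soon as $\max_{v\in V}w\cdot v\le c$; and $\partial h(w)$ is the convex hull of the gradients of the active affine pieces of $h$, namely $0$ when $h(w)=0$, $-e_I$ when $-w(I)=h(w)$, and $e_J-e_I$ when $w(J)-w(I)=h(w)$. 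Thus the validity assertion reduces to two facts about the trajectory: (i) the affine piece of $h$ that the oracle's output secretly uses is active, with $h(w_t)=0$, and (ii) $\max_{v\in V}w_t\cdot v\le c$ for every $V$ presented so far.

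I would prove (i), (ii), and the norm bound together by induction on $t$, carrying the invariant that $w_t$ is supported on $\mathrm{supp}(u_0)$ (with $u_0$ the vector fixed by the oracle's second branch), has non-negative coordinates that are constant on each block $I_m$, and has block values that are non-increasing multiples of $\eta\alpha$ with $w_t(I_m)\le(M+1-m)\,\eta\alpha$ for all $m$, where $M=\tfrac{7d'}{16B}$ is the number of blocks (and, as a consequence, no three consecutive blocks are equal and positive). While at most $\tepoch$ distinct batches have been seen, the oracle returns $0$ and, by induction, $w_t=0$: then $\max_{v\in V}w_t\cdot v=0\le c$ and $h(0)=0$, so $0=0+\alpha\cdot0$ is a legitimate subgradient. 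After that, either the oracle returns an arbitrary subgradient — valid by definition — or it sits in the branch tied to a fixed $u_0$ absent from every batch seen up to step $t$; there the update only moves $w_t$ along $\pm\alpha e_{I_m}$, so the structural invariant survives provided the bubble step of scenario (a) keeps block values non-negative (minimality of $j$ forces $w_t(I_{j-1})>w_t(I_j)$, and $w_t(I_{j+1})>0$ is a positive multiple of $\eta\alpha$ hence $\ge\eta\alpha$) and respects the cap (before the bubble $w_t(I_j)=w_t(I_{j+1})\le(M-j)\,\eta\alpha$, so afterward $w_t(I_j)\le(M+1-j)\,\eta\alpha$), while the fill step of scenario (b) trivially does. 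Given the invariant: in scenario (a), $w_t(I_j)=w_t(I_{j+1})>0$ makes the piece $w(I_{j+1})-w(I_j)$ attain $h(w_t)=0$ with gradient $e_{I_{j+1}}-e_{I_j}$; in scenario (b), $w_t(I_j)=0$ makes $-w(I_j)$ attain $h(w_t)=0$ with gradient $-e_{I_j}$; in scenario (c) the constant piece is active with gradient $0$ — matching the oracle exactly, which gives (i). For (ii), every $v$ in any batch seen so far lies in $U$ and differs from $u_0$, so $u_0\cdot v\le\tfrac5{16}d'$; combining this near-orthogonality with the block-cap, a rearrangement estimate (the worst $v$ loads its whole overlap with $\mathrm{supp}(u_0)$ onto the top-valued blocks), and the choices $d'\le B(T/34)^{1/3}<2d'$ together with the precise value of $c$, yields $\max_{v\in V}w_t\cdot v\le c$, i.e. (ii).

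For the norm, the key identity is $\langle w,e_I\rangle=w(I)$. In scenario (b) this gives $\|w_{t+1}\|^2=\|w_t\|^2+2\eta\alpha\,w_t(I_j)+\eta^2\alpha^2=\|w_t\|^2+\eta^2\alpha^2$ since $w_t(I_j)=0$; in scenario (a), using also $\langle e_{I_j},e_{I_{j+1}}\rangle=0$ and $w_t(I_j)=w_t(I_{j+1})$, $\|w_{t+1}\|^2=\|w_t\|^2+2\eta^2\alpha^2$; scenario (c) and the first-epoch steps leave the norm unchanged. Hence over at most $T$ steps $\|w_t\|^2\le2\eta^2\alpha^2 T\le1$ because $\alpha=\min\{\tfrac1{\eta\sqrt{2T}},1\}\le\tfrac1{\eta\sqrt{2T}}$. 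Under $\mathcal E$ the arbitrary-subgradient branch is never reached (since $\mathcal E$ forces $\cup_{t\le\tepoch}S_t=S$, hence $u_0\notin V$ for all $V\in S$), so every iterate stays in the unit ball and $\Pi_W$ acts as the identity at each step, meaning the claimed trajectory is precisely the one run by the algorithm; the regime $T>{d'}^3$ is reduced to the above by capping the relevant terms.

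The step I expect to be the main obstacle is establishing and maintaining the staircase invariant sharply enough that both conclusions survive throughout the trajectory: that the oracle's chosen affine piece of $h$ stays the active one — equivalently $h(w_t)=0$ at every step, not just at $0$ and at the eventual ``stuck'' configuration — and the quantitative inequality $\max_v w_t\cdot v\le c$, which is what pins down the exact constant in $g$ and which depends on the near-orthogonality $u_0\cdot v\le\tfrac5{16}d'$ of distinct packing vectors combined with the $\ell_\infty$ control of $w_t$ inherited from the norm budget. By comparison, the norm computation is routine once $\langle w,e_I\rangle=w(I)$ is observed — each step spends at most $2\eta^2\alpha^2$ of a unit budget.
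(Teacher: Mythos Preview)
Your plan mirrors the paper's proof: show $0\in\partial g(w_t,V)$ via the inner-product bound $w_t\cdot v\le c$ (a rearrangement estimate against the staircase cap $w_t(I_m)\le(M+1-m)\eta\alpha$), argue the oracle's output lies in $\alpha\,\partial h(w_t)$, and track $\|w_t\|^2$ through the per-step increments $\eta^2\alpha^2$ in scenario~(b) and $2\eta^2\alpha^2$ in scenario~(a). The paper is terser---it simply asserts ``we are clearly taking gradient steps for $\alpha h(w)$'' and then proves the $g$-bound---whereas you make the block invariant explicit, which is a genuine improvement in clarity.

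There is, however, a gap in your claim that $h(w_t)=0$ along the trajectory. The maximum defining $h$ ranges over \emph{all} size-$B$ subsets $I\prec J\subset[d']$, not just the special blocks $I_m\subset\mathrm{supp}(u_0)$. Because $w_t$ vanishes off $\mathrm{supp}(u_0)$, whenever there exist $B$ indices all to the left of $j_1=\min\mathrm{supp}(u_0)$ one can take those as $I$ and set $J=I_1$: then $I\prec J$, $w_t(I)=0$, and $w_t(J)=w_t(I_1)>0$ once the staircase has begun, so $h(w_t)\ge w_t(I_1)>0$. In that regime the pieces you invoke in scenarios~(a) and~(b)---each with value $0$ at $w_t$---are not maximizers of $h$, and their gradients need not lie in $\partial h(w_t)$; the paper's own proof glosses over this same point. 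A smaller issue: ``no three consecutive blocks equal and positive'' is not a consequence of the cap (e.g.\ block values $(2,2,2,0,\dots)\eta\alpha$ satisfy the cap when $M\ge4$); it is a dynamical property of how scenarios~(a) and~(b) alternate starting from $0$, and you need it---or something equivalent---to preserve the non-increasing order at the lower boundary $I_{j+1},I_{j+2}$ after a scenario-(a) step, which your argument leaves unaddressed.
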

We will now assume that $\mathcal{E}$ holds. Since $|U| \geq 2^n$ from \cref{lemma:high_probability} and from the definition of $\tepoch$,
\begin{align*}
    \Pr[\mathcal{E}] & = \Pr[\cup_{t=1}^{\tepoch} S_t = S] \cdot \Pr[\exists u_0 \in U : u_0 \notin (\cup_{t = 1}^{\tepoch} \cup_{V \in S_t} V) | \cup_{t=1}^{\tepoch} S_t = S] 
    \\
    & = \Pr[\cup_{t=1}^{\tepoch} S_t = S] \cdot \Pr[\exists u_0 \in U : u_0 \notin \cup_{V\in S} V] \geq \frac{1}{2}p .
\end{align*}
Next we will assume that $\mathcal{E}$ holds. Then after at most $T' = \tepoch + 1+\sum_{t=1}^{\frac{7d'}{16B}} \sum_{t'=1}^{t} t'$ steps we will have that for every $i\in I_t$, $w_{T'}(i) = \alpha\eta \sqrt{B}\left( \frac{7d'}{16B} + 1 - t\right)$, and for every $t' \geq T'$, $w_{t'} = w_{T'}$ which implies: 
\begin{align*}
    w_{t'} \cdot u_0 &= w_{T'} \cdot u_ 0= \sqrt{B} \eta\ \alpha \sum_{t=1}^{\frac{7d'}{16B}}\left(\frac{7d'}{16B} + 1 -t\right) 
    \\ & = \sqrt{B} \eta \alpha \sum_{t=1}^{\frac{7d'}{16B}}t
    \\ & \geq \frac{\sqrt{B}\eta\alpha }{2} \cdot \left( \frac{7d'}{16B}\right)^2
    \\ & = \sqrt{B}\eta\alpha \frac{49d'^2}{2\cdot (16B)^2}.
\end{align*}
Since $T' \leq \tepoch +\frac{d'^3}{B^3} \leq \frac{1}{17}T$ (see \cref{upper_bound_on_T'}), for every suffix averaging $\tau$:
$$\widehat{w}_{T, \tau} \cdot u_0 \geq  \frac{16}{17}w_{T'}\cdot u_0 \geq \sqrt{B} \eta \alpha \frac{46d'^2}{2\cdot(16B)^2}.$$
Also because we assumed $T \leq \frac{272d'^3}{B^3}$ (see \cref{upper_bound_on_T}), and with probability at least $\frac{1}{2}$ $u_0$ will appear in a fresh sample:
\begin{align*}
    F(\widehat{w}_{T,\tau}) - F(0) & \geq \frac{1}{2} \left(\frac{46\sqrt{B}\alpha\eta d'^{1.5}}{2\cdot(16B)^2}- \frac{45\sqrt{B}\alpha\eta d'^{1.5}}{2\cdot(16B)^2}\right)
    \\ &\geq  \frac{\alpha \eta}{4\cdot 16^2} \left(\frac{d'}{B}\right)^{1.5} 
    \\ & \geq \frac{1}{4\cdot \sqrt{2}\cdot \sqrt{272} \cdot 16^2} \min\{\eta \sqrt{T},1\}
    .
\end{align*}
This concludes the proof for the case $T \leq d'^3$, when $T > d'^3$ if we take the construction with $T=d'^3$ then after $T$ steps our oracle will keep giving $0$, hence we can use the above construction for any $T' > T$. So for $T > d'^3$ we have with probability $\frac{1}{2}p$:
\begin{align*}
    F(\widehat{w}_{T,\tau}) - F(0) & \geq \frac{1}{4\sqrt{2} \cdot \sqrt{272} \cdot 16^2} \min\{\eta \sqrt{d'^3},1\}
    \\ & \geq \frac{1}{4\sqrt{2} \cdot \sqrt{272} \cdot 16^2} \min\{\eta \sqrt{(256n)^3},1\}.
\end{align*}
Overall in both cases with probability $\frac{1}{2}p$:
\begin{align*}
F(\widehat{w}_{T,\tau}) - F(0) 
= 
\Omega\left(\min\left\{ 1, \eta \sqrt{\min\{n^3, T\}}\right\}\right).
&\qedhere    
\end{align*}
\end{proof}
We will now show how the lemma gives the desired results. 
\begin{proof}[Proof of \cref{thm:main_result}]
    The $\Omega(1/(\eta T) + \eta)$ term is given in \cref{lemma:lower_bound_one_over_eta_t}. For the $\Omega(\eta \sqrt{T})$ term we will use \cref{lemma:general_epoch}. Let  $\boldsymbol{S} = \big(\{z_{i_1}\}, \dots, \{z_{i_T}\}\big)$ be the ordered sequence of samples given to multi-pass SGD during a run of $T$ steps with training set $S$ of $n$ samples. After the first epoch we are sure to have observed the entire training set, so for $\tepoch = n$ and $p=1$ from \cref{lemma:general_epoch} we get the result using a sample-dependent oracle. We can relax this dependence using the reduction in \cref{corollary:reduction} and conclude the proof.
\end{proof}
\begin{proof}[Proof of \cref{thm:with_replacement}]
    The $\Omega(1/(\eta T) + \eta)$ term is given in \cref{lemma:lower_bound_one_over_eta_t}. For the $\Omega(\eta \sqrt{T})$ term we will use \cref{lemma:general_epoch}.
    Let $\boldsymbol{S} = \big(\{z_{i_1}\}, \dots, \{z_{i_T}\}\big)$ be the ordered sequence of samples given to with-replacement SGD during a run of $T$ steps with training set $S$ with $n$ samples. From folklore analysis of the coupon collector's problem it is known that with probability $\frac{1}{2}$ after at most $n\log n$ iterations we have memorized the entire training set, so for $\tepoch = n \log n$ and $p=\frac{1}{2}$ from \cref{lemma:general_epoch} we get the result using a sample-dependent oracle. We can relax this dependence using the reduction in \cref{corollary:reduction} and conclude the proof.
\end{proof}

\subsection*{Acknowledgments}

This project has received has received funding from the European Research Council (ERC) under the European Union’s Horizon 2020 research and innovation program {\small (grant agreement No.\ 101078075)}.
Views and opinions expressed are however those of the author(s) only and do not necessarily reflect those of the European Union or the European Research Council. Neither the European Union nor the granting authority can be held responsible for them.
This work received additional support from the Israel Science Foundation ({\small ISF, grant number 3174/23)}, and a grant from the Tel Aviv University Center for AI and Data Science (TAD) and a grant from the Israeli Council of Higher Education.

RL is supported by a VATAT grant, an ISF (2188/20) and an ERC Grant (FoG - 101116258).

\bibliography{ref}
\bibliographystyle{plainnat}

%%%%%%%%%%%%%%%%%%%%%%%%%%%%%%%%%%%%%%%%%%%%%%%%%%%%%%%%%%%%%%%%%%%%%%%%%%%%%%%
%%%%%%%%%%%%%%%%%%%%%%%%%%%%%%%%%%%%%%%%%%%%%%%%%%%%%%%%%%%%%%%%%%%%%%%%%%%%%%%
% APPENDIX
%%%%%%%%%%%%%%%%%%%%%%%%%%%%%%%%%%%%%%%%%%%%%%%%%%%%%%%%%%%%%%%%%%%%%%%%%%%%%%%
%%%%%%%%%%%%%%%%%%%%%%%%%%%%%%%%%%%%%%%%%%%%%%%%%%%%%%%%%%%%%%%%%%%%%%%%%%%%%%%
\appendix

\section{Proofs of Upper Bounds} \label{sec:proof_upper_bound}
To establish the upper bounds, we will use stability arguments, and state the following definition for uniform stability: 

\begin{definition}
    A randomized algorithm $A$ is $\epsilon$-uniformly stable if for all data sets $S, S' \in Z^n$ such that $S$ and $S'$ differ in at most one example, we have:
    $$
    \sup_{z}\mathbb{E} \left[ f(A(S),z) - f(A(S),z') \right] \leq \epsilon.
    $$
    We will denote by $\epsilon_{\text{stab}}$ the infimum over such $\epsilon$.
\end{definition}
We recall the important Lemma stating that uniform stability implies generalization in expectation:
\begin{lemma}[Lemma 7 in \citet{bousquet2002stability}]\label{lemma:stability-generalization-gap}
    Let $A$ be an algorithm that is $\epsilon$ uniformly stable, then:
    $$\left| \mathbb{E}_{S,A} \left[ F(A(S)) - F_S(A(S)) \right] \right| \leq \epsilon.$$
    In particular using our notation: 
    $$\left| \mathbb{E}_{S,A} \left[ F(A(S)) - F_S(A(S)) \right] \right| \leq \epsilon_{\text{stab}}.$$    
\end{lemma}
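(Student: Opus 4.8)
This is a standard symmetrization fact, so the plan is simply to reproduce the classical ghost-sample argument. I would introduce an independent copy $S' = (z_1',\dots,z_n')\sim\mathcal{Z}^n$ of the training set, drawn independently of $S$ and of the internal randomness of the (randomized) algorithm $A$. Since each $z_i'$ is independent of $S$, the population loss of $A(S)$ equals $\mathbb{E}_{z'\sim\mathcal{Z}}[f(A(S),z')]$; averaging this identity over $i\in[n]$ with $z' = z_i'$ gives the first key equality,
$$\mathbb{E}_{S,A}\big[F(A(S))\big] = \mathbb{E}_{S,S',A}\!\left[\frac1n\sum_{i=1}^n f(A(S),z_i')\right].$$

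\textbf{Renaming step.} Next I would exchange the roles of a real and a ghost example. For each $i$, let $S^{(i)}$ denote $S$ with its $i$-th entry $z_i$ replaced by $z_i'$. The joint law of $(S, z_i')$ coincides with that of $(S^{(i)}, z_i)$, because the two are related by swapping the names of the $i$-th ``real'' and ``ghost'' example, which are i.i.d.\ draws from $\mathcal{Z}$; coupling the randomness of $A$ as well, this yields $\mathbb{E}[f(A(S),z_i')] = \mathbb{E}[f(A(S^{(i)}),z_i)]$ for every $i$. Combined with the previous display and the trivial identity $\mathbb{E}_{S,A}[F_S(A(S))] = \mathbb{E}_{S,A}[\frac1n\sum_i f(A(S),z_i)]$, subtracting gives
$$\mathbb{E}_{S,A}\big[F(A(S)) - F_S(A(S))\big] = \frac1n\sum_{i=1}^n \mathbb{E}_{S,S',A}\!\big[f(A(S^{(i)}),z_i) - f(A(S),z_i)\big].$$

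\textbf{Applying stability and concluding.} Finally, I would bound each summand using uniform stability: fixing any realization of $(S,S')$, the datasets $S$ and $S^{(i)}$ differ in exactly one example and $z_i$ is a fixed point, so $|\mathbb{E}_A[f(A(S^{(i)}),z_i) - f(A(S),z_i)]| \le \epsilon$ by the definition of $\epsilon$-uniform stability (applied in both directions, absorbing the fixed $z_i$ into the supremum over $z$); taking the outer expectation over $(S,S')$ and then the triangle inequality over the $n$ terms gives $|\mathbb{E}_{S,A}[F(A(S)) - F_S(A(S))]| \le \epsilon$, and the $\epsilon_{\text{stab}}$ form follows by taking the infimum over admissible $\epsilon$. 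There is no genuine obstacle here; the only point deserving a word of care is that the renaming identity and the stability bound both rely on coupling $A$'s internal randomness across the instances $A(S)$ and $A(S^{(i)})$, which is legitimate precisely because uniform stability is stated as a bound in expectation over that randomness.
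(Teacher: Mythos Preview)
Your argument is correct and is exactly the classical ghost-sample/symmetrization proof of this fact. Note that the paper does not actually give its own proof of this lemma; it simply quotes it as Lemma~7 of \citet{bousquet2002stability}, so there is nothing to compare against beyond observing that your proof is the standard one underlying that citation.
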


Our upper bounds will  be established using the following standard inequality: 
\begin{align*}
    \mathbb{E}\left[ F(\widehat{w}) - F(w^{\star})\right] & = \mathbb{E}\left[ F(\widehat{w}) - F_S(\widehat{w})\right] + \mathbb{E}\left[ F_S(\widehat{w}) - F_S(w^{\star}_S)\right] + \overbrace{\mathbb{E}\left[ F_S(w^{\star}_S) - F_S(w^{\star}) \right]}^{\leq 0} + \overbrace{\mathbb{E}\left[ F_S(w^{\star}) - F(w^{\star})\right]}^{=0} \\
    & \leq  \mathbb{E}\left[ F(\widehat{w}) - F_S(\widehat{w})\right] + \mathbb{E}\left[ F_S(\widehat{w}) - F_S(w^{\star}_S)\right]
\end{align*}
Using \cref{lemma:stability-generalization-gap} yeilds:
\begin{align}\label{eq:generalization_gap_inequality}
    \mathbb{E}\left[ F(\widehat{w}) - F(w^{\star})\right] \leq \epsilon_{\text{stab}} + \mathbb{E}\left[ F_S(\widehat{w}) - F_S(w^{\star}_S)\right]
\end{align}
We will now continue to prove the upper bounds.
\begin{proof}[Proof of \cref{thm:upper_bound}]
We will prove the theorem for the average of the iterates. The proof for any suffix average $\tau = \Omega(T)$ can be derived using similar arguments. To bound the stability we will use the following Lemma which was originally given for multi-pass SGD with single-shuffle but can easily be extended to the multi shuffle case using similar arguments.
\begin{lemma}[Theorem 3.4 in \citet{bassily2020stability}] \label{lemma:stability-bound-non-smooth}  
    Let $f: W \times Z \rightarrow \mathbb{R}$ be a convex $G$-Lipschitz function. Then the uniform stability of multi-pass SGD with multi-shuffling or single-shuffling is bounded as: 
    $$
    \epsilon_{\text{stab}} \leq 2G^2\eta \sqrt{T} + 4G^2\frac{\eta T}{n}.
    $$
\end{lemma}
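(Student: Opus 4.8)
The plan is a coupling argument. Fix any two datasets $S,S'\in Z^n$ differing in a single coordinate, say at index $i^\star$, and run the two copies of multi-pass SGD with the \emph{same} draw of the permutations $\pi_1,\dots,\pi_K$. Write $w_t,w_t'$ for the two iterate sequences and $\delta_t=\|w_t-w_t'\|$. The two runs agree on every step that does not process the example at index $i^\star$, and in each of the $K$ epochs exactly one step processes it; call these $K$ steps \emph{divergent} and the remaining $T-K$ steps \emph{matched}. (This is true both for single-shuffle, where the divergent steps sit at positions $\pi^{-1}(i^\star)+(k-1)n$, and for multi-shuffle, where they sit at $\pi_k^{-1}(i^\star)+(k-1)n$; the precise positions will not matter.) Since $\widehat w_{T,\tau}$ is an average of a suffix of the $w_t$'s, $\|\widehat w_{T,\tau}-\widehat w_{T,\tau}'\|\le\max_{t}\delta_t$, so by $G$-Lipschitzness it suffices to produce a deterministic (independent of the $\pi_k$'s) bound on $\max_t\delta_t$ and multiply by $G$; this bound then controls $\epsilon_{\text{stab}}$ for the randomized algorithm as well, via the coupling.

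Next I would record two one-step inequalities. On a matched step, both iterates take a projected subgradient step on the \emph{same} function $f(\cdot,z)$: using non-expansiveness of $\Pi_W$, monotonicity of the subdifferential of a convex function (i.e.\ $\langle g-g',w_t-w_t'\rangle\ge0$ for $g\in\partial f(w_t,z)$, $g'\in\partial f(w_t',z)$), and $\|g\|,\|g'\|\le G$, one obtains $\delta_{t+1}^2\le\delta_t^2+4\eta^2G^2$. On a divergent step the two functions differ, so I would only use the triangle inequality together with $\|g\|,\|g'\|\le G$ to get $\delta_{t+1}\le\delta_t+2\eta G$. Then introduce the non-decreasing majorant $C_t$ defined by $C_0=0$ together with these two recursions taken with equality; by induction $\delta_t\le C_t$ for all $t$.

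The crucial step is to solve for $C_T$ without losing a factor. The key observation is that in \emph{all} cases $C_{t+1}^2-C_t^2\le 4\eta^2G^2+4\eta G\,C_t\,\mathbf{1}[\text{step }t\text{ is divergent}]$ (on matched steps the last term is absent, and on divergent steps $C_{t+1}^2=(C_t+2\eta G)^2$). Summing over $t=0,\dots,T-1$, using that there are exactly $K$ divergent steps and that $C_t$ is non-decreasing (so $\sum_{\text{divergent }t}C_t\le K\,C_T$), gives the self-referential inequality
$$C_T^2\ \le\ 4\eta^2G^2T+4\eta G K\,C_T ,$$
whose solution is $C_T\le 2\eta G K+2\eta G\sqrt{K^2+T}\le 2\eta G\sqrt T+4\eta G K$. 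Since $K=T/n$, this reads $C_T\le 2\eta G\sqrt T+4\eta G\,T/n$, and multiplying by $G$ yields $\epsilon_{\text{stab}}\le 2G^2\eta\sqrt T+4G^2\eta\,T/n$, as claimed; the multi-shuffle case is covered identically since only the \emph{count} $K$ of divergent steps entered.

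The part I expect to require the most care is this last step: a crude estimate such as $\delta_t\le 2\eta G t$ substituted into the divergent-step increments would only produce a \emph{multiplicative} bound of order $\eta G\sqrt{KT}=\eta G\,T/\sqrt n$, far weaker than the additive $\eta G(\sqrt T+T/n)$ we want. Keeping the divergent contributions tied to the running distance $C_t$ and resolving the quadratic inequality in $C_T$ is exactly what separates the $\sqrt T$ term (the $T$ matched steps, each with non-smooth slack $4\eta^2G^2$) from the $T/n$ term (the $K$ divergent steps). I would also double-check the subdifferential-monotonicity inequality, since the loss is not assumed smooth, and note that it is precisely dropping smoothness that forces the extra $\sqrt T$: in the smooth regime matched steps are genuinely non-expansive, so only the $K$ divergent steps contribute and the bound improves to $O(\eta G\,T/n)$.
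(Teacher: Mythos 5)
Your argument is correct: the coupling with shared permutations, the monotonicity-based bound $\delta_{t+1}^2\le\delta_t^2+4\eta^2G^2$ on matched steps, the $+2\eta G$ drift on the $K$ divergent steps, and the resolution of the self-referential quadratic $C_T^2\le 4\eta^2G^2T+4\eta GKC_T$ give exactly $\epsilon_{\text{stab}}\le 2G^2\eta\sqrt{T}+4G^2\eta T/n$ with the stated constants. The paper itself does not prove this lemma but imports it as Theorem 3.4 of \citet{bassily2020stability}, and your proposal is essentially a faithful reconstruction of that stability analysis, so there is nothing to add beyond noting that the quadratic-inequality step (rather than plugging a crude $\delta_t\le 2\eta Gt$ into the divergent increments) is indeed the point where the additive $\sqrt{T}+T/n$ form is won.
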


To bound the optimization error we will use the following:

\begin{lemma}[Theorem 6 in \citealp{koren2022benign}]\label{lemma:opt-non-smooth}
    Let $f:W \times Z \rightarrow \mathbb{R}$ be a convex and $G$-Lipschitz function. And let $S = \{z_1,\dots, z_n\}$ be some training set. Consider running $K \geq 1$ epochs of without replacement SGD over $f$ and $S$. Then, we have the following guarantee for $\widehat{w} = \frac{1}{nK}\sum_{t=1}^{T+1} w_t$:
    \begin{itemize}
        \item For the multi-shuffle case:
        $$
        \mathbb{E}\left[ F_S(\widehat{w}) - F_S(w^{\star}_S) \right] \leq 2\eta G^2\sqrt{n} + \frac{D^2}{2\eta nK} + \frac{1}{2}\eta G^2 .
        $$
        \item For the single-shuffle case:
        $$
        \mathbb{E}\left[ F_S(\widehat{w}) - F_S(w^{\star}_S) \right] \leq 8 G^2 \eta\sqrt{nK} + 8G^2\eta K+ \frac{D^2}{2\eta nK} + \frac{\eta G^2}{2} .
        $$
    \end{itemize}   
    
\end{lemma}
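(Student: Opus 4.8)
The plan is to combine the classical projected-subgradient regret bound with a bound on the bias that without-replacement sampling introduces into the stochastic gradients of $F_S$; the latter is where the multi-shuffle and single-shuffle variants diverge and is the technical heart of the argument. Write $g_t\in\partial f(w_t,z_{i_t})$ for the subgradient used at step $t$, so $\|g_t\|\le G$, and recall $\|w-u\|\le D$ for every $u,w\in W$.

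\emph{Regret bound and reduction.} Non-expansiveness of $\Pi_W$ gives $\|w_{t+1}-u\|^2\le\|w_t-u\|^2-2\eta\langle g_t,w_t-u\rangle+\eta^2G^2$ for every $u\in W$; summing over the $T=nK$ steps, the $\|w_t-u\|^2$ terms telescope, so $\sum_t\langle g_t,w_t-u\rangle\le\frac{D^2}{2\eta}+\frac{\eta G^2T}{2}$, and convexity of $f(\cdot,z)$ turns the left side into $\sum_t\big(f(w_t,z_{i_t})-f(u,z_{i_t})\big)$. Choosing $u=w^{\star}_S$ and using that each epoch processes a permutation of $S$, so $\sum_t f(w^{\star}_S,z_{i_t})=K\sum_{i=1}^n f(w^{\star}_S,z_i)=T\,F_S(w^{\star}_S)$, we obtain $\sum_t f(w_t,z_{i_t})-T\,F_S(w^{\star}_S)\le\frac{D^2}{2\eta}+\frac{\eta G^2T}{2}$. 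Since $\widehat w$ is, up to normalization and a single extra iterate, the average of $w_1,\dots,w_T$, Jensen gives $F_S(\widehat w)-F_S(w^{\star}_S)\le\frac1T\sum_t\big(F_S(w_t)-F_S(w^{\star}_S)\big)$, and adding and subtracting $f(w_t,z_{i_t})$,
\[
F_S(\widehat w)-F_S(w^{\star}_S)\;\le\;\frac{D^2}{2\eta nK}+\frac{\eta G^2}{2}\;+\;\frac1T\sum_{t=1}^T\big(F_S(w_t)-f(w_t,z_{i_t})\big).
\]
The first two terms are exactly the ``non-bias'' terms in both claimed bounds, so it remains to bound $\mathbb{E}\big[\sum_t(F_S(w_t)-f(w_t,z_{i_t}))\big]$ by $O(\eta G^2 n^{3/2}K)$ in the multi-shuffle case and by $O\big(\eta G^2(nK)^{3/2}+\eta G^2 nK^2\big)$ in the single-shuffle case; dividing by $T=nK$ then gives the $O(\eta G^2\sqrt n)$ and $O(\eta G^2\sqrt{nK}+\eta G^2 K)$ terms respectively.

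\emph{Bounding the sampling bias (the crux).} Fix an epoch $k$ with starting iterate $v_k=w_{(k-1)n+1}$ and within-epoch iterates $W^{(k)}_1=v_k,W^{(k)}_2,\dots$ processed in order $\pi_k$, and let $P_j,R_j$ be the sets of already-used and not-yet-used indices before sub-step $j$. In the multi-shuffle case, conditioning on $S$ and on $\pi_1,\dots,\pi_{k-1}$ fixes $v_k$ and leaves $\pi_k$ uniform; since $W^{(k)}_j$ depends only on $z_{\pi_k(1)},\dots,z_{\pi_k(j-1)}$, taking the conditional expectation over $\pi_k(j)$ (uniform over $R_j$) yields
\[
\mathbb{E}\Big[F_S(W^{(k)}_j)-f(W^{(k)}_j,z_{\pi_k(j)})\,\Big|\,\pi_k(1{:}j{-}1)\Big]=\frac{j-1}{n}\Big(\tfrac1{|P_j|}\!\sum_{l\in P_j}\!f(W^{(k)}_j,z_l)-\tfrac1{|R_j|}\!\sum_{l\in R_j}\!f(W^{(k)}_j,z_l)\Big).
\]
The naive estimate---replace $W^{(k)}_j$ by $v_k$ in both averages (the frozen difference then has expectation $0$ over uniform $\pi_k$, both index sets being uniformly random of the prescribed sizes), at Lipschitz cost $2G\|W^{(k)}_j-v_k\|\le2(j-1)\eta G^2$---only gives $O(\eta G^2 n^2)$ per epoch once summed against the weights $\tfrac{j-1}{n}$. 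Obtaining the sharp $O(\eta G^2 n^{3/2})$ per epoch requires a more careful, martingale-based accounting that exploits the random order: the reshuffled gradients deviate from their running conditional means by a bounded-norm martingale-difference sequence of length $n$, so the sampling bias accumulates like $\sqrt n$ rather than like $n$---structurally analogous to analyses of online convex optimization with random-order arrivals. I expect this step to be the main obstacle, and it is exactly where non-smoothness costs us relative to the smooth case, in which a bounded-gradient-variance argument already delivers the improvement.

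\emph{Single-shuffle and suffix averaging.} For single-shuffle the same skeleton applies, but $v_k$ is now a function of the single shared permutation $\pi_1$, so it cannot be conditioned away and the frozen difference above is no longer mean-zero epoch by epoch; one instead runs the martingale argument globally over all $T=nK$ steps with respect to the lone source of randomness $\pi_1$, whose correlations span the whole horizon---this replaces $\sqrt n$ by $\sqrt{nK}$ and produces the additional additive $O(\eta G^2 K)$ term, giving $8G^2\eta\sqrt{nK}+8G^2\eta K$ after dividing by $T$. Finally, the extension from the full average to any suffix average with $\tau=\Omega(T)$ is routine: the regret bound is unchanged and Jensen is applied only over the last $\tau\ge cT$ iterates, at the cost of a constant factor.
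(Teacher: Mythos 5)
There is a genuine gap, and it sits exactly where you flag it. First, note that the paper does not prove this statement at all: it is imported verbatim as Theorem~6 of \citet{koren2022benign}, so the ``paper's proof'' is a citation, and the entire burden of a blind proof is the hard part you leave open. Your first block (projected-subgradient regret, telescoping, using that each epoch is a permutation of $S$ so $\sum_t f(w^{\star}_S,z_{i_t})=T F_S(w^{\star}_S)$, then Jensen) is correct and standard, and it correctly isolates the term $\frac{1}{T}\sum_t\mathbb{E}\bigl[F_S(w_t)-f(w_t,z_{i_t})\bigr]$, as is your exchangeability identity expressing the per-step bias as $\frac{j-1}{n}\bigl(A_{P_j}(w_j)-A_{R_j}(w_j)\bigr)$ with $A_P$ the average over the prefix/remainder sets. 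But the claim that this bias sums to $O(\eta G^2 n^{3/2})$ per epoch (multi-shuffle), respectively $O(\eta G^2((nK)^{3/2}+nK^2))$ globally (single-shuffle), is asserted, not proved: you yourself show the naive freeze-at-$v_k$ argument only yields $O(\eta G^2 n^2)$ per epoch, and then appeal to ``a more careful, martingale-based accounting'' without exhibiting it. That accounting is not routine: the difficulty is precisely that $w_j$ is a (non-smooth, adversarially steerable) function of the prefix, so $A_{P_j}(w_j)-A_{R_j}(w_j)$ is not a sum of martingale differences with respect to the permutation filtration, and a bounded-difference or Azuma-type bound does not directly apply; one needs a concentration argument for without-replacement sampling that holds uniformly along the data-dependent trajectory (this is the content of the random-order online optimization machinery underlying the cited theorem), and it is also where the specific constants $2$, $8$ and the extra $\eta G^2 K$ term in the single-shuffle case come from.

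So as it stands the proposal establishes only the $\frac{D^2}{2\eta nK}+\frac{\eta G^2}{2}$ part of the bound and reduces the lemma to an unproven claim that is essentially equivalent to the lemma itself. To close it you would either have to carry out the trajectory-level without-replacement concentration argument explicitly (per epoch for multi-shuffle, globally over the single shared permutation for single-shuffle, which is where $\sqrt{n}$ degrades to $\sqrt{nK}+K$), or do what the paper does and invoke Theorem~6 of \citet{koren2022benign} as a black box. A minor additional remark: the lemma as stated concerns the full average $\widehat{w}=\frac{1}{nK}\sum_{t=1}^{T+1}w_t$, so your closing paragraph about suffix averages, while harmless, is not needed for this statement.
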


So for both cases
$$\mathbb{E}\left[ F_S (\widehat{w})- F_S(w^{\star}_S)\right] \leq 8 G^2 \eta\sqrt{nK} + 8G^2\eta K+ \frac{D^2}{2\eta nK} + \frac{\eta G^2}{2}.$$
Using \cref{eq:generalization_gap_inequality}: 
\begin{align*}
    \mathbb{E}\left[ F(\widehat{w}) - F(w^{\star})\right] & \leq \epsilon_{\text{stab}} + \mathbb{E}\left[ F_S (\widehat{w})- F_S(w^{\star}_S)\right]
    \\
    & \leq 2G^2\left(\eta \sqrt{nK} + 2\eta K \right) + 8 G^2 \eta\sqrt{nK} + 8G^2\eta K+ \frac{D^2}{2\eta nK} + \frac{\eta G^2}{2} \\
    & = \frac{D^2}{2nK\eta} + 10G^2\eta\sqrt{nK} + 12G^2 \eta K + \frac{1}{2}G^2 \eta. 
\end{align*}
So we have: 
$$\mathbb{E} \left[ F(\widehat{w}) - F(w^{\star}) \right] = O\left(\frac{1}{nK\eta} + \eta \sqrt{nK} + \eta K\right) = O\left( \frac{1}{\eta T} + \eta \sqrt{T} + \frac{\eta T}{n}\right).$$
this concludes the proof.
\end{proof}

\begin{proof}[Proof of \cref{thm:upper_bound_with_replacement}]

We will prove the theorem for the average of the iterates. The proof for any suffix average $\tau = \Omega(T)$ can be derived using similar arguments. To bound the stability we will use the following Lemma 
\begin{lemma}[Theorem 3.3 in \citet{bassily2020stability}] \label{lemma:stability-bound-non-smooth-with-replacement}  
    Let $f: W \times Z \rightarrow \mathbb{R}$ be a convex $G$-Lipschitz function. Then the uniform stability of SGD with-replacement is bounded as: 
    $$
    \epsilon_{\text{stab}} \leq 4G^2\eta \sqrt{T} + 4G^2\frac{\eta T}{n}.
    $$
\end{lemma}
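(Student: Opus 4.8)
\textbf{Proof proposal for \cref{lemma:stability-bound-non-smooth-with-replacement}.}

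The plan is to bound the uniform stability directly by a coupling argument, tracking the Euclidean distance between the iterates of with-replacement SGD run on two datasets that differ in a single example. Fix $S$ and $S'$ differing only at index $i^\star$, and run both instances with the \emph{same} sampled indices $i_1,\dots,i_T$. Let $w_t,w_t'$ denote the two iterate sequences and $\delta_t=\|w_t-w_t'\|$, so $\delta_0=0$. Since $f$ is $G$-Lipschitz, $f(\widehat w_{T,\tau},z)-f(\widehat w_{T,\tau}',z)\le G\|\widehat w_{T,\tau}-\widehat w_{T,\tau}'\|$ pointwise in $z$, and by the triangle inequality applied to the suffix average, $\|\widehat w_{T,\tau}-\widehat w_{T,\tau}'\|\le \max_{t\le T}\delta_t$; hence $\epsilon_{\text{stab}}\le G\,\mathbb{E}[\max_{t\le T}\delta_t]$, and it remains to bound this expectation.

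First I would establish the two per-step recursions. Call step $t$ a \emph{common} step if $i_t\ne i^\star$ and a \emph{deviating} step if $i_t=i^\star$. At a common step both instances take a subgradient step on the \emph{same} function $f(\cdot,z_{i_t})$; writing $g_t\in\partial f(w_t,z_{i_t})$ and $g_t'\in\partial f(w_t',z_{i_t})$ for the oracle outputs, non-expansiveness of $\Pi_W$ gives $\delta_{t+1}^2\le\|(w_t-w_t')-\eta(g_t-g_t')\|^2=\delta_t^2-2\eta\langle w_t-w_t',g_t-g_t'\rangle+\eta^2\|g_t-g_t'\|^2$, and by monotonicity of the subdifferential of a convex function the cross term is nonnegative while $\|g_t-g_t'\|\le 2G$, so $\delta_{t+1}^2\le\delta_t^2+4\eta^2G^2$ (note this holds for any fixed first-order oracle, since monotonicity holds for all subgradients). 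At a deviating step the two instances use different functions, and only the crude bound $\delta_{t+1}\le\delta_t+\eta\|g_t\|+\eta\|g_t'\|\le\delta_t+2\eta G$ is available — which suffices because deviating steps are rare.

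Next I would combine these into a closed form by induction. Let $N_t$ be the number of deviating steps and $C_t=t-N_t$ the number of common steps among the first $t$ iterations; the claim is $\delta_t\le 2\eta G\,N_t+2\eta G\sqrt{C_t}$. The base case is $\delta_0=0$; at a deviating step it follows from $\delta_{t+1}\le\delta_t+2\eta G$; at a common step one squares the hypothesis, adds $4\eta^2G^2$, and checks this is at most $\bigl(2\eta G N_t+2\eta G\sqrt{C_t+1}\bigr)^2$, the gap being $8\eta^2G^2 N_t\bigl(\sqrt{C_t+1}-\sqrt{C_t}\bigr)\ge0$ — this is where concavity of $\sqrt{\cdot}$ is used. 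Since the bound is nondecreasing in $t$, $\max_{t\le T}\delta_t\le 2\eta G N_T+2\eta G\sqrt{C_T}$; using $C_T\le T$ deterministically and $\mathbb{E}[N_T]=T/n$ (each of the $T$ independent draws equals $i^\star$ with probability $1/n$), we get $\mathbb{E}[\max_{t\le T}\delta_t]\le 2\eta G\sqrt T+2\eta G\,T/n$, hence $\epsilon_{\text{stab}}\le 2G^2\eta\sqrt T+2G^2\eta\,T/n\le 4G^2\eta\sqrt T+4G^2\eta\,T/n$. The main obstacle is precisely the common-step recursion: in the smooth regime the step map is non-expansive for $\eta$ not too large, so common steps cost nothing and one recovers $O(\eta T/n)$ stability, whereas in the non-smooth regime the best available estimate is the \emph{diffusive} $\delta_{t+1}^2\le\delta_t^2+4\eta^2G^2$; the work is to see this inflates the bound only by an additive $O(\eta G\sqrt T)$ and composes cleanly with the $O(\eta G)$ jumps at deviating steps, which the ansatz $\delta_t\le 2\eta G N_t+2\eta G\sqrt{C_t}$ — separating a random-walk contribution over common steps from a ballistic one over deviating steps — is designed to capture.
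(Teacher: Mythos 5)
Your proof is correct, but note that the paper does not actually prove this statement: it is imported verbatim as Theorem~3.3 of \citet{bassily2020stability}, so the relevant comparison is with that source rather than with anything in the present paper. Your argument reconstructs essentially the same proof as the cited result: couple the two runs through a shared i.i.d.\ index sequence, use monotonicity of the subdifferential plus non-expansiveness of $\Pi_W$ to get the ``diffusive'' recursion $\delta_{t+1}^2\le\delta_t^2+4\eta^2G^2$ on steps where the common example is sampled, use the crude $\delta_{t+1}\le\delta_t+2\eta G$ on the (rare) steps hitting the differing example, and combine. Your induction ansatz $\delta_t\le 2\eta G\bigl(N_t+\sqrt{C_t}\bigr)$ is a clean way to interleave the two recursions, and the verification is right: the slack at a common step is exactly $8\eta^2G^2N_t\bigl(\sqrt{C_t+1}-\sqrt{C_t}\bigr)\ge 0$ (only monotonicity of $\sqrt{\cdot}$ is needed there, not concavity, so that aside is a red herring but harmless). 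The passage from iterate distances to $\epsilon_{\text{stab}}$ via the $G$-Lipschitz property, the bound $\|\widehat w_{T,\tau}-\widehat w_{T,\tau}'\|\le\max_{t\le T}\delta_t$ for any suffix average, $C_T\le T$ deterministically, and $\mathbb{E}[N_T]=T/n$ are all correct; you even end up with $2G^2\eta\sqrt{T}+2G^2\eta T/n$, i.e.\ a factor-$2$ better constant than the stated bound, which you then relax to match it. In short: a valid, self-contained derivation of the cited lemma, with no gaps beyond the cosmetic point noted above.
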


To bound the optimization error we will use the following classical result from \citet{nemirovski1983problem}.
\begin{lemma}\label{lemma:optimization-with-replacement}
    Assume we run SGD with stepsize $\eta > 0 $ on a convex function $F' = \mathbb{E}_{\mathcal{Z}}[f'(w,z)]$, for some distribution $\mathcal{Z}$, assume further that $f'$ is $G$ Lipschitz and $\|w_0 - w^{\star}\| \leq D$. Let $\widehat{w}$ denote the average of the $T$ iterates of the algorithm. Then we have:
    $$
    F'(\widehat{w}) - F'(w^{\star}) \leq \frac{D^2}{2\eta T} + \frac{1}{2}G^2 \eta.
    $$
\end{lemma}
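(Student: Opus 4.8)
The plan is to invoke the classical one-pass analysis of projected stochastic subgradient descent, which applies verbatim here because, as emphasized in the problem setup, with-replacement SGD draws a fresh, independent index $i_t \sim \mathrm{Unif}([n])$ at each step, so the stochastic subgradient $g_t = O_{z_{i_t}}(w_t)$ is an unbiased estimate of a subgradient of $F'$ at $w_t$ and satisfies $\|g_t\| \le G$ by $G$-Lipschitzness. I would set up the natural filtration $\mathcal{F}_{t-1}$ generated by $i_1,\dots,i_{t-1}$ (so $w_t$ is $\mathcal{F}_{t-1}$-measurable), and record that $\mathbb{E}[g_t \mid \mathcal{F}_{t-1}] \in \partial F'(w_t)$.

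First I would use non-expansiveness of the Euclidean projection $\Pi_W$ onto the convex set $W$ together with the update $w_{t+1} = \Pi_W[w_t - \eta g_t]$ to get, for the fixed comparator $w^\star \in W$,
\[
\|w_{t+1} - w^\star\|^2 \le \|w_t - w^\star\|^2 - 2\eta\langle g_t, w_t - w^\star\rangle + \eta^2\|g_t\|^2 .
\]
Taking $\mathbb{E}[\,\cdot \mid \mathcal{F}_{t-1}]$, replacing $\mathbb{E}[g_t\mid\mathcal{F}_{t-1}]$ by a subgradient of $F'$ at $w_t$, bounding $\|g_t\|^2 \le G^2$, and applying the subgradient inequality $\langle \nabla F'(w_t), w_t - w^\star\rangle \ge F'(w_t) - F'(w^\star)$ would give
\[
\mathbb{E}\big[F'(w_t) - F'(w^\star)\big] \le \tfrac{1}{2\eta}\big(\mathbb{E}\|w_t - w^\star\|^2 - \mathbb{E}\|w_{t+1} - w^\star\|^2\big) + \tfrac{\eta G^2}{2}.
\]
Summing over $t = 1,\dots,T$ telescopes the first term, leaving at most $\|w_0 - w^\star\|^2/(2\eta) \le D^2/(2\eta)$; dividing by $T$ and applying Jensen's inequality with $\widehat{w} = \tfrac1T\sum_t w_t$ and convexity of $F'$ yields $\mathbb{E}[F'(\widehat{w}) - F'(w^\star)] \le D^2/(2\eta T) + \eta G^2/2$, which is the claimed bound (understood in expectation over the algorithm's randomness).

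There is no genuine obstacle here — every step is textbook — so the only thing that needs care is precisely the unbiasedness step $\mathbb{E}[g_t \mid \mathcal{F}_{t-1}] \in \partial F'(w_t)$, which is exactly the property that distinguishes with-replacement SGD (whose gradients stay independent of past iterates) from the population loss of multi-pass without-replacement SGD; phrasing the conditioning via the filtration makes this transparent. To finish \cref{thm:upper_bound_with_replacement}, I would then apply this lemma with $\mathcal{Z}$ taken to be the uniform distribution over $S$, so $F' = F_S$ and $D \le \mathrm{diam}(W)$, and plug $\mathbb{E}[F_S(\widehat{w}) - F_S(w^\star_S)] \le D^2/(2\eta T) + \eta G^2/2$ together with the stability bound $\epsilon_{\mathrm{stab}} \le 4G^2\eta\sqrt{T} + 4G^2\eta T/n$ of \cref{lemma:stability-bound-non-smooth-with-replacement} into \cref{eq:generalization_gap_inequality}, obtaining $\mathbb{E}[F(\widehat{w}) - F(w^\star)] = O\!\big(\eta\sqrt{T} + \eta T/n + 1/(\eta T)\big)$; the extension to a general suffix average $\tau = \Omega(T)$ follows by the same argument summing only over the last $\tau$ iterates.
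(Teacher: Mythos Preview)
The paper does not actually prove this lemma; it merely cites it as a classical result of \citet{nemirovski1983problem}. Your argument is the standard textbook proof and is correct (with the bound understood in expectation, as you note). Your follow-on application to \cref{thm:upper_bound_with_replacement}---specializing to $F'=F_S$ with $\mathcal{Z}$ uniform on $S$, then combining with \cref{lemma:stability-bound-non-smooth-with-replacement} via \cref{eq:generalization_gap_inequality}---also matches the paper's proof exactly.
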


Applying \cref{lemma:optimization-with-replacement} on the empirical loss $F'=F_S$ with $\mathcal{Z}$ being the uniform distribution over $S$ we have: 
$$\mathbb{E}\left[ F_S(\widehat{w})- F_S(w^{\star}_S)\right] \leq \frac{D^2}{2nK\eta} + \frac{1}{2}G^2\eta.$$
Using \cref{eq:generalization_gap_inequality} and putting everything together:
\begin{align*}
    \mathbb{E}\left[ F(\widehat{w}) - F(w^{\star})\right] & \leq \epsilon_{\text{stab}} + \mathbb{E}\left[ F_S (\widehat{w})- F_S(w^{\star}_S)\right]
    \\
    &\leq \frac{D^2}{2nK\eta} + \frac{1}{2}G^2\eta + 4G^2 \left(\eta \sqrt{nK} + \eta K\right) 
    \\
    & \leq \frac{D^2}{2nK}\cdot \frac{1}{\eta} + 2G^2\eta(\sqrt{nK} + K). 
\end{align*}
So we have: 
$$\mathbb{E} \left[ F(\widehat{w}) - F(w^{\star}) \right] = O\left(\frac{1}{nK\eta} + \eta \sqrt{nK} + \eta K\right) = O\left( \frac{1}{\eta T} + \eta \sqrt{T} + \frac{\eta T}{n}\right).$$
this concludes the proof.
\end{proof}

\section{Proof of \texorpdfstring{\cref{lemma:general_epoch}}{Lemma 5.1}}\label{sec:proof_sample_dependent_k_geq_3}

We will prove now the Lemma for $2 \leq K \leq 34$. The proof for the $K\geq 34$ case is given in \cref{sec:proof_main_result}. From the proof of Lemma 8 in \citet{livni2024samplecomplexitygradientdescent} there exists $U \subset \{0,1\}^d$ such that for every $u\neq v \in U$: 
$$\langle u,v \rangle \leq \frac{5d}{16} \leq \frac{7d}{16} = \|v\|^2$$
and,
$$|U| \geq e^{d/258} \geq 2^n.$$
We will take the power set of $U$ as the sample space: $Z = P(U)$, identifying samples as subsets of $U$. Define a distribution $\mathcal{Z}$ over $Z$ such that for a random sample $V \sim \mathcal{Z}$, each $u \in U$ lies in $V$ with probability $\frac{1}{2}$. Let $\alpha = \min \left\{1, \frac{1}{\eta \sqrt{T}}\right\}$, denote $w^{(1)} = w[1:d]$ and $w^{(2)} = w[d+1:2d]$. We will introduce the following notation for a block of indices $I \subset [d]$:
$$e_{I} = \frac{1}{\sqrt{|I|}}\sum_{i\in I} e_i \text{ and } w(I) = \frac{1}{\sqrt{|I|}} \sum_{i\in I} w(i).$$
For $B = \frac{3d}{\tepoch}$ consider the following functions:
\begin{equation}\label{eq:g_first_case}
    g(w,V) = \frac{1}{\sqrt{d}}\max_{v\in V} \left\{ \frac{5\alpha}{16  \sqrt{B}}\eta d, \left(w^{(1)} + w^{(2)} + \frac{5\eta\alpha}{7\sqrt{B}} \vec{1} \right)\cdot v\right\},
\end{equation}
\begin{equation}\label{eq:h_first_case}
    h(w) = \frac{5}{7} \cdot \max_{I \subset[d] : |I| \leq B} \left\{0, w^{(1)}(I) \right\} + \frac{2}{7} \cdot \max_{I \subset[d] : |I| \leq B} \left\{0, -w^{(2)}(I) \right\}.
\end{equation}
Finally, our loss function will be:
\begin{equation}\label{eq:f_first_case}
    f\left(w,V\right) = g(w,V) +  \alpha h(w).
\end{equation}
It holds that $f$ is convex and $3$-Lipschitz. Next we define a sample dependent oracle $O_{\boldsymbol{S}}$. Given $w_t$ and all the examples seen so far $\boldsymbol{S}_{1:t} = \{S_1, \dots, S_t\}$: 
\begin{enumerate}
    \item If $|\boldsymbol{S}_{1:t}| \leq \tepoch$ output 0. 
    \item Otherwise  we can check if there exists $u_0 \in U$ such that $u_0 \notin (\bigcup_{t'=1}^{\tepoch} \bigcup_{V\in S_{t'}} V)$. We will check this in lexicographic order. 
    \begin{enumerate}
        \item If there doesn't exist such $u_0$, or $u_0 \in (\bigcup_{t'=\tepoch + 1}^{t} \bigcup_{V\in S_{t'}} V)$, output an arbitrary sub-gradient.
        \item \label{itm:good_event_geq_3} Otherwise we compute the following set:
        $$
        J_t^{0} = \left\{i\in [d] : u_0(i) = 0 \text{ and } \left[w_t^{(1)} + w_t^{(2)} + \frac{5\eta\alpha}{7\sqrt{B}}\vec{1}\right](i) > 0\right\}
        $$
        if $|J_t^{0}| > 0$ choose $\min\{ B, |J_t^{0}|\}$ indices out of $J_t^0$ denote them $Z_t$. If $|Z_t| < |J_t^0|$ let $O_t = \emptyset$. Otherwise we will let $O_t$ be at most $B$ elements of the following set: 
        $$
        J^{1}_t = \left\{2i : i\in[d] \text{ and } u_0(i) = 1 \text{ and } \left[w_t^{(1)} + w_t^{(2)} + \frac{5\eta\alpha}{7\sqrt{B}}\vec{1}\right](i) < \frac{\eta \alpha}{\sqrt{B}}\right\}.
        $$
        Finally, output 
        $$\alpha\left(\frac{5}{7}e(Z_t) - \frac{2}{7}e(O_t) \right).$$
    \end{enumerate}  
\end{enumerate} 
We will denote the following event: 
\begin{equation}\label{eq:good_event_geq_3}
    \mathcal{E} = \big\{\cup_{t=1}^{\tepoch} S_t = S \quad \text{and} \quad \exists u_0 \in U : u_0 \notin (\cup_{t = 1}^{\tepoch} \cup_{V \in S_t} V)\big\}
\end{equation}
We will prove $O_{\boldsymbol{S}}$ is a valid oracle and that under the event of $\mathcal{E}$ no projections take place in the following Lemma whose proof is deferred to the end of the proof.
\begin{lemma}\label{lemma:gradient_oracle_first_case}
    $O_{\boldsymbol{S}}$ stated above is a valid sample dependent first order oracle of $f$ defined in \cref{eq:f_first_case}. Furthermore it will induce a trajectory such that if $\mathcal{E}$ holds then we never leave the unit ball and no projections take place.  
\end{lemma}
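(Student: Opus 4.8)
I would split the statement into two parts — (i) that $O_{\boldsymbol S}$ always outputs a subgradient of $f(\cdot,z)$ at the current iterate for the relevant $z$, and (ii) that under $\mathcal E$ the iterates stay in the unit ball — and prove both by a single induction on $t$ tracking the coordinate structure of $w_t$.

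\textbf{Reducing validity to two subgradient memberships.} Since $f(\cdot,V)=g(\cdot,V)+\alpha h(\cdot)$ with $g,h$ convex, $\partial f(w,V)=\partial g(w,V)+\alpha\,\partial h(w)$, so at each step it suffices to write the oracle's output as a $\partial g(\cdot,v)$-vector plus $\alpha$ times a $\partial h$-vector. In the non-trivial branch the output is $\alpha\big(\tfrac57 e(Z_t)-\tfrac27 e(O_t)\big)$, so I would aim to show the $\partial g$-part can be taken to be $0$ and the $\partial h$-part to be $\tfrac57 e(Z_t)-\tfrac27 e(O_t)$. For ``$0\in\partial g(w,v)$'' it is enough that the $\max$ in $g$ is attained at its constant branch $\tfrac{5\alpha\eta d}{16\sqrt B}$; the key algebraic fact here is the packing identity $\tfrac{5\eta\alpha}{7\sqrt B}\,\vec 1\cdot v=\tfrac{5\eta\alpha}{7\sqrt B}\|v\|_1=\tfrac{5\eta\alpha}{7\sqrt B}\cdot\tfrac{7d}{16}=\tfrac{5\alpha\eta d}{16\sqrt B}$ for every $v\in U$ (using $v\in\{0,1\}^d$, $\|v\|^2=\tfrac{7d}{16}$), which reduces ``$0\in\partial g(w,v)$'' to the single inequality $(w^{(1)}+w^{(2)})\cdot v\le 0$. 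For the $\partial h$-part: $h$ is the sum of a block-$1$ term $\tfrac57\max_{|I|\le B}\{0,w^{(1)}(I)\}$ and a block-$2$ term $\tfrac27\max_{|I|\le B}\{0,-w^{(2)}(I)\}$, so I only need that $w^{(1)}\le 0$ coordinate-wise and $w^{(1)}$ vanishes on the coordinates indexed by $Z_t$ (so $Z_t$ is an argmax of the first term, with value $0$), and symmetrically that $w^{(2)}\ge 0$ coordinate-wise and vanishes on the coordinates indexed by $O_t$.

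\textbf{Inductive invariants.} I would maintain, for all $t$: (I1) $w_t^{(1)}\le 0$ coordinate-wise, and $w_t^{(1)}(i)=0$ except on the already-processed $\{u_0=0\}$-coordinates, where it equals $-\tfrac{5\eta\alpha}{7\sqrt B}$; (I2) $w_t^{(2)}\ge 0$ coordinate-wise, and $w_t^{(2)}(i)=0$ except on the already-processed $\{u_0=1\}$-coordinates, where it equals $\tfrac{2\eta\alpha}{7\sqrt B}$; (I3) $(w_t^{(1)}+w_t^{(2)})\cdot v\le 0$ for every $v\in S$. For $t\le\tepoch$ the oracle returns $0$, so $w_t=0$, (I1)--(I3) hold trivially, and $0\in\partial f(0,v)$ since the $g$-max is a tie at $0$ (by the identity above) and $0\in\partial h(0)$. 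For $t>\tepoch$ under $\mathcal E$ one first argues the oracle is forced into the good branch \cref{itm:good_event_geq_3}: the lexicographically-minimal bad vector $u_0$ exists, and because $\cup_{t'=1}^{\tepoch}S_{t'}=S$ and every $S_{t'}\subseteq S$, one has $u_0\notin\cup_{V\in S}V\supseteq\cup_{t'=\tepoch+1}^{t}\cup_{V\in S_{t'}}V$. Using (I1)--(I2) one checks that the sets $J^0_t$, $J^1_t$ the oracle extracts are exactly the \emph{unprocessed} $\{u_0=0\}$- and $\{u_0=1\}$-coordinates, so $Z_t,O_t$ consist of coordinates currently carrying value $0$; hence $\tfrac57 e(Z_t)-\tfrac27 e(O_t)\in\partial h(w_t)$, and $0\in\partial g(w_t,v)$ by (I3), giving validity at step $t$. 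Finally $w_{t+1}$ preserves the invariants: the step only lowers previously-zero coordinates of $w^{(1)}$ (those in $Z_t$) to $-\tfrac{5\eta\alpha}{7\sqrt B}$ and raises previously-zero coordinates of $w^{(2)}$ (those in $O_t$) to $\tfrac{2\eta\alpha}{7\sqrt B}$, which is (I1)--(I2); and (I3) is where the second packing bound is used — since the oracle empties all $\tfrac{9d}{16}$ coordinates with $u_0=0$ before touching any coordinate with $u_0=1$, while $\langle u_0,v\rangle\le\tfrac{5d}{16}$ forces each $v\in S$ to place at least $\tfrac{7d}{16}-\tfrac{5d}{16}=\tfrac d8$ of its support in $\{u_0=0\}$, one gets $-w_t^{(1)}\cdot v\ge\tfrac{5\eta\alpha}{7\sqrt B}\cdot\tfrac d8$ while $w_t^{(2)}\cdot v\le\tfrac{2\eta\alpha}{7\sqrt B}\cdot\tfrac{5d}{16}$, and the weights $\tfrac57,\tfrac27$ are tuned so that $\tfrac57\cdot\tfrac18=\tfrac27\cdot\tfrac5{16}$, so these balance and $(w^{(1)}_t+w^{(2)}_t)\cdot v\le 0$ persists.

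\textbf{No projections, and the main obstacle.} From (I1)--(I2), $w_t^{(1)}$ is supported on at most $\tfrac{9d}{16}$ coordinates of magnitude $\le\tfrac{5\eta\alpha}{7\sqrt B}$ and $w_t^{(2)}$ on at most $\tfrac{7d}{16}$ coordinates of magnitude $\le\tfrac{2\eta\alpha}{7\sqrt B}$, so a one-line computation gives $\|w_t\|^2\le\tfrac{253\,\eta^2\alpha^2 d}{784\,B}$, which with $B=\tfrac{3d}{\tepoch}$ and $\alpha\le\tfrac1{\eta\sqrt T}=\tfrac1{\eta\sqrt{\tepoch K}}$ is at most $\tfrac{253}{2352\,K}<1$ for $K\ge 2$, so $\Pi_W$ acts as the identity along the whole trajectory. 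I expect the delicate point to be the joint bookkeeping in the inductive step: showing simultaneously that the sets the oracle reads off from $w_t$ coincide with the ``unprocessed'' coordinate sets (so the returned vector truly lies in $\partial h(w_t)$) and that (I3) survives — both hinge on the precise processing order, on the offset $\tfrac{5\eta\alpha}{7\sqrt B}\vec 1$ and the threshold $\tfrac{\eta\alpha}{\sqrt B}$ appearing in the definitions of $J^0_t,J^1_t$, and on the numerology $\tfrac57\cdot\tfrac18=\tfrac27\cdot\tfrac5{16}$ tying the $h$-weights to the packing constants $\tfrac5{16},\tfrac7{16}$. A minor nuisance is that $B$ need not divide $\tfrac{9d}{16}$ or $\tfrac{7d}{16}$ evenly, so the last partial block of each phase must be handled separately; the slack in the $3$-Lipschitz bound and in the radius estimate above absorbs this.
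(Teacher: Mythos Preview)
Your proposal is correct and takes essentially the same approach as the paper: decompose the oracle's output as $0+\alpha\cdot(\text{subgradient of }h)$, verify $0\in\partial g(w_t,V)$ via the offset identity $\frac{5\eta\alpha}{7\sqrt B}\vec 1\cdot v=\frac{5\eta\alpha d}{16\sqrt B}$ together with the packing bound $\langle u_0,v\rangle\le\frac{5d}{16}$, and bound $\|w_t\|$ from the explicit coordinate values. The paper replaces your inductive invariants (I1)--(I3) by a terser two-case split (either $[w_t^{(1)}+w_t^{(2)}+\frac{5\eta\alpha}{7\sqrt B}\vec 1](i)\le\frac{5\eta\alpha}{7\sqrt B}$ for all $i$, or all $\{u_0=0\}$-entries are already zeroed and a uniform coordinate bound of $\frac{\eta\alpha}{\sqrt B}$ applies); your explicit numerology $\frac57\cdot\frac18=\frac27\cdot\frac5{16}$, tying the $h$-weights to the packing constants, is a nice structural observation that the paper leaves implicit.
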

Since $|U| \geq 2^n$ from \cref{lemma:high_probability} and the definition of $\tepoch$: 
\begin{align*}
    \Pr[\mathcal{E}] & = \Pr[\cup_{t=1}^{\tepoch} S_t = S] \cdot \Pr[\exists u_0 \in U : u_0 \notin (\cup_{t = 1}^{\tepoch} \cup_{V \in S_t} V) | \cup_{t=1}^{\tepoch} S_t = S] 
    \\
    & = \Pr[\cup_{t=1}^{\tepoch} S_t = S] \cdot \Pr[\exists u_0 \in U : u_0 \notin \cup_{V\in S} V] \geq \frac{1}{2}p
\end{align*}
Next we will assume that 
$\mathcal{E}$ holds. According to $O_{\boldsymbol{S}}$ in every step we change at least $B$ indices unless we output $0$. So after at most $T' = \tepoch + \lceil \frac{d}{B} \rceil$ steps we will have $|J_t^0| = |J^1_t| = 0$ meaning that $\left[w_{T'}^{(1)} + w_{T'}^{(2)} +\frac{5\eta\alpha}{7\sqrt{B}}\vec{1}\right] = \frac{\eta \alpha}{\sqrt{B}} u_0$. Note that:
\begin{align*}
    T' & =  \tepoch + \lceil \frac{d}{B} \rceil 
    \\
    & \leq \tepoch + \lceil \frac{\tepoch}{3} \rceil 
    \\
    &\leq \tepoch + \frac{\tepoch}{3} + 1
    \\
    & \leq \tepoch + \frac{\tepoch}{3} + \frac{\tepoch}{24}  & \tepoch \geq 24 \\
    & \leq \frac{3\tepoch}{2}.
\end{align*}
From the way we defined $O_{\boldsymbol{S}}$ for every $t' \geq T'$: $w_{t'} = w_{T'}$. For every $i\in [d]$ such that $u_0(i) = 1$: 
\begin{align*}
\left[\widehat{w}_{T,\tau}^{(1)} + \widehat{w}_{T,\tau}^{(2)} + \frac{5\eta\alpha}{7\sqrt{B}}\vec{1}\right](i) & = \frac{1}{\tau}\sum_{t= T-\tau +1}^T \left[w_{t}^{(1)} + w_{t}^{(2)} +\frac{5\eta\alpha}{7\sqrt{B}}\vec{1}\right](i) 
\\
& \geq \frac{1}{T}\sum_{t=1}^T \left[w_{t}^{(1)} + w_{t}^{(2)} +\frac{5\eta\alpha}{7\sqrt{B}}\vec{1}\right](i) 
\\
& = \frac{\eta \alpha}{T\sqrt{B}}\left(\frac{5}{7}\cdot T' + \left(T-T'\right)\right) 
\\
& \geq \frac{\eta \alpha}{\sqrt{B}}\left(\frac{5}{7}\cdot \frac{3}{2} + \frac{1}{2}\right) = \frac{11\eta \alpha}{14\sqrt{B}}
\end{align*}
Which implies:
\begin{align*}
    \left[\widehat{w}_{T,\tau}^{(1)} + \widehat{w}_{T,\tau}^{(2)} + \frac{5\eta\alpha}{7\sqrt{B}}\vec{1}\right]\cdot u_0 = \frac{7d}{16} \cdot \frac{11\eta \alpha}{14\sqrt{B}} = \frac{11 \eta \alpha d}{32\sqrt{B}}  
\end{align*}
From the definition of $\mathcal{Z}$, with probability $\frac{1}{2}$, $u_0$ will appear in a new random sample so for every suffix averaging $\tau$:
\begin{align*}
    F(w_{T,\tau}) - F(0) & \geq \frac{1}{2} \left( \frac{1}{\sqrt{d}}\left[\widehat{w}_{T,\tau}^{(1)} + \widehat{w}_{T,\tau}^{(2)} + \frac{5\eta\alpha}{7\sqrt{B}}\vec{1}\right]\cdot u_0 - \frac{5 \alpha \eta}{16 \cdot \sqrt{B}} \sqrt{d}\right)
    \\
    & = \frac{1}{2} \left( \frac{11 \eta \alpha\sqrt{d}}{32\sqrt{B}} -  \frac{5 \alpha \eta}{16 \cdot 2 \sqrt{c}} \sqrt{d}\right)
    \\
    &= \frac{\alpha \eta \sqrt{d}}{2 \cdot 32 \cdot \sqrt{B}}.
\end{align*}
Overall we have shown that with probability $\frac{1}{2}$ for every suffix averaging $\tau$:
$$F(\widehat{w}_{T,\tau}) - F(0) \geq  \frac{\alpha \eta}{2 \cdot 32}\cdot \sqrt{\frac{d}{B}}  = \frac{\alpha \eta}{2 \cdot 32} \cdot \sqrt{\frac{\tepoch}{3}} = \Omega\left(\min \left\{ 1,\eta \sqrt{T}\right\} \right),$$
since $T = O(\tepoch)$, the proof is complete.
\qed
\subsection{Proofs for Auxiliary Lemmas}
We will prove here \cref{lemma:gradient_oracle_first_case,lemma:gradient_oracle_second_case} that concern gradient oracles that were given in the proofs of \cref{lemma:general_epoch}.
\begin{proof}[Proof of \cref{lemma:gradient_oracle_first_case}]
First we note that for every $u \in U:$ 
\begin{equation}\label{eq:threshold}
    \frac{5\eta\alpha}{7\sqrt{B}}\vec{1} \cdot v = \frac{7}{16}d \cdot \frac{5\eta\alpha}{7\sqrt{B}} = \frac{5 \eta \alpha d}{16\sqrt{B}}
\end{equation}
So $0 \in \partial f(0,V)$ for every $V \in P(U)$. For that reason we can stay at $0$ for as long as we want. Note that it suffices to show that if $\mathcal{E}$ holds the oracle is valid since for every other scenario we output either $0$ or an arbitrary subgradient. If $\mathcal{E}$ holds we are clearly outputting a subgradient of $\alpha h(w)$ in \cref{itm:good_event_geq_3}, so it is left to prove that for every $V \in S$ we have that $0 \in \partial g(w_t,V)$. Indeed, 
\begin{itemize}
    \item If for all $i\in [d]$: $\left[w_t^{(1)} + w_t^{(2)} + \frac{5\eta\alpha}{7\sqrt{B}}\vec{1}\right](i) \leq \frac{5 \eta \alpha}{7\sqrt{B}}$ then as we saw in \cref{eq:threshold} this holds.
    \item If there exists $i \in [d]$ such that: $\left[w_t^{(1)} + w_t^{(2)} + \frac{5\eta\alpha}{7\sqrt{B}}\vec{1}\right](i) > \frac{5 \eta \alpha}{7\sqrt{B}}$ this can happen only after we have zeroed all the entries that $u_0$ has zeros on. So we are left only with positive coordinates of $u_0$ and the proof is completed by noticing that since $u_0 \notin \cup_{V\in S}V$, for every $v \in \cup_{V \in S}V$:
$$\left[w_t^{(1)} + w_t^{(2)} + \frac{5\eta\alpha}{7\sqrt{B}}\vec{1}\right] \cdot v \leq \frac{\eta \alpha}{\sqrt{B}} u_0 \cdot v = \frac{5\eta \alpha d}{16\sqrt{B}}.$$
\end{itemize}
This completes the proof that $O_{\boldsymbol{S}}$ is valid. To see that projections don't take place according to this oracle for every $t\in[T]$ and $i\in [2d]$ we have that: $w_t(i) \leq \frac{\eta \alpha}{\sqrt{B}}$ which implies:
$$
\|w_t\|^2 \leq \eta\alpha \cdot \sqrt{\frac{2d}{B}} \leq \eta \alpha \sqrt{\frac{2\tepoch}{3}} \leq \eta \alpha \sqrt{T} \leq 1,
$$
since $\alpha = \min \left\{1, \frac{1}{\eta \sqrt{T}}\right\}$.
\end{proof}

\begin{proof}[Proof of \cref{lemma:gradient_oracle_second_case}]\label{sec:proofs_for_gradient_oracles_second_case}
For the first part of the oracle note that $0\in \partial f(0, V)$ so we can stay at $0$ for as long as we like, in particular for $\tepoch$ steps. For the second part of the oracle, if there doesn't exists such $u_0$ or $u_0 \in (\bigcup_{t'=\tepoch+1}^{t}\bigcup_{V\in S_{t'}}V)$ we output a valid sub-gradient by definition. Otherwise, we are clearly taking gradient steps for $\alpha h(w)$. It is left to show that in this case  $0\in \partial g(w_t,V)$. Indeed, this event ensures that up to this point the oracle only output $0$ or executes \cref{itm:good_event_geq_34_a,itm:good_event_geq_34_b,itm:good_event_geq_34_c}, so our only nonzero coordinates are positive coordinates of $u_0$. Also, for every $v \in (\bigcup_{V\in S_t} V)$, we have that $v \neq u_0$ so $v\cdot u_0 \leq \frac{5d'}{16}$ which implies: 
\begin{align*}
    w_t \cdot v & \leq \sum_{i=1}^{d'} w_t(i) \mathbbm{1}\{v(i) = u_0(i) = 1\} \leq \sum_{i=1}^{\frac{5d'}{16B}} \sqrt{B}\eta \alpha \left(\frac{7d'}{16B} + 1 -t\right) \leq \sum_{i=0}^{\frac{5d'}{16B}} \sqrt{B}\eta \alpha \left(\frac{7d'}{16B} - t \right)
    \leq \\
     & \leq \sqrt{B}\eta \alpha \left( \frac{5d'}{16B}\cdot \frac{7d'}{16B} - \frac{1}{2} \left(\frac{5d'}{16B}\right)^2 \right) \leq \sqrt{B}\eta \alpha \frac{45 d'^2}{2 \cdot (16B)^2}.
\end{align*}
This concludes the proof that $O_{\boldsymbol{S}}$ is well defined. To prove we never leave the unit ball if $\mathcal{E}$ as depicted in \cref{eq:good_event_geq_34} holds, notice that this is exactly the event where we either output $0$ or execute one of \cref{itm:good_event_geq_34_a,itm:good_event_geq_34_b,itm:good_event_geq_34_c} in the oracle. We will show by induction that for all $t\in [T]$:
$$\|w_{t+1}\|^2  = \|w_t - \eta O_{\boldsymbol{S}}(S_{1:t}, w_t, V_t) \|^2 \leq 2\eta^2 \alpha^2 (t+1).$$
For the base case $\|w_0\| = 0 \leq 2\eta^2 \alpha^2$. Now assume it holds for $t$ and we will prove for $t+1$. Consider the case where the first type of update is performed with $w_t(I_j) = w_t(I_{j+1})$: 
\begin{align*}
    \|w_{t+1}\|^2 & = \|w_t - \eta\alpha e_{I_{j+1}} + \eta \alpha e_{I_j}\|^2 \\
    & = \sum_{i=1}^{j-1} (w_t(I_s))^2 + (w_t(I_j) + \eta \alpha)^2 + (w_t(I_{j+1}) - \eta \alpha)^2 + \sum_{s={j+2}}^{\frac{7d'}{16B}}(w_t(I_s))^2 \\
    & = \sum_{s=1}^{\frac{7d'}{16B}}(w_t(I_s))^2 + 2\eta\alpha(w_t(I_j) - w_t(I_{j+1})) + 2\eta^2 \alpha^2 \\
    & = \sum_{i=1}^{\frac{7d'}{16B}} (w_t(I_s))^2 + 2\eta^2\alpha^2 \\
    & \leq 2\eta^2\alpha^2\cdot t + \eta^2 \alpha^2 = \eta^2 \alpha^2(1+t).
\end{align*}
And if the second type of update occurs: 
\begin{align*}
    \|w_{t+1}\|^2 & = \|w_t + \eta \alpha e_{I_j} \|^2 \\
    & = \sum_{s\neq j}(w_t(I_s))^2 + \eta^2\alpha^2 \\
    & \leq 2\eta^2\alpha^2 t + \eta^2 \alpha^2  \leq 2\eta^2 \alpha^2.
\end{align*}
Since $\alpha = \min \left\{1, \frac{1}{\eta \sqrt{2T}}\right\}$, this shows that for all $t\in [T]$: $\|w_t\| \leq 2\eta^2\alpha^2 T =  2\eta^2T \cdot \left\{ 1, \frac{1}{2\eta^2 T} \right\} \leq 1.$
\end{proof}

\section{Proof of \texorpdfstring{\cref{thm:one_pass_SGD}}{Theorem 4.1}}\label{sec:proof_one_pass_SGD}
From the reduction in \cref{corollary:reduction} it suffices to prove the result for a sample-dependent oracle as defined in \cref{sec:reduction}. This is established in the following Lemma:
\begin{lemma}\label{lemma:sample_dependent_one_pass_SGD}
    For every $n \geq 17, d = 712 n\log n$ and $\eta > 0$, there are a finite datapoint set $Z$ and a distribution $\mathcal{Z}$ over $Z$, a $3$-Lipschitz convex function $f(w,z)$ in $\mathbb{R}^{2d}$ and sample-dependent gradient oracle $O_{\boldsymbol{S}}$ such that with probability $\frac{1}{2}$ if we run one-pass SGD with $\eta$ as a learning rate for $n$ steps then for every suffix averaging $\tau \in [n+1]$:
    $$
    F(\widehat{w}_{n,\tau}) - F(0) = \Omega\left(\min\left\{\eta \sqrt{n}, 1\right\}\right).
    $$
\end{lemma}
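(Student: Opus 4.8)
The plan is to adapt the sample-dependent construction behind \cref{lemma:general_epoch} (specifically its $2\le K\le 34$ incarnation from \cref{sec:proof_sample_dependent_k_geq_3}) to a single pass, the crucial difference being that in $n$ steps we cannot afford to observe the \emph{entire} sample before acting. Instead the oracle keeps $w_t=0$ over only a constant-fraction \emph{prefix} $V_1,\dots,V_{n/16}$, then computes a vector $u_0$ lying in the intersection $\bigcap_{i\le n/16}V_i$ and — provided $u_0$ does not reappear in the suffix — uses the remaining $\tfrac{15}{16}n$ steps to march the iterate towards $u_0$. Since $u_0$ occurs in a constant fraction of the sample it has large empirical risk, and since the march is uninterrupted the output $\widehat w_{n,\tau}$ inherits that risk; I would therefore establish $F_S(\widehat w_{n,\tau})-F_S(0)=\Omega(\min\{\eta\sqrt n,1\})$, which together with $F_S(w^\star_S)\le F_S(0)$ gives \cref{thm:one_pass_SGD} after the reduction of \cref{corollary:reduction}.

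For the construction I would take $U\subset\{0,1\}^d$ from Lemma 8 of \citet{livni2024samplecomplexitygradientdescent}, so that $u\cdot v\le \tfrac{5d}{16}<\tfrac{7d}{16}=\|v\|^2$ for distinct $u,v\in U$ and $|U|\ge e^{d/258}$, let $Z=P(U)$, and let $\mathcal Z$ include each $u\in U$ in a random sample independently with probability $\delta=\Theta(1/n^2)$. Split $w=(w^{(1)},w^{(2)})\in\mathbb R^d\times\mathbb R^d$, fix $\alpha=\min\{1,\Theta(1/(\eta\sqrt n))\}$ and a block size $B=\Theta(\log n)$ that is a power of two dividing $d$ (possible since $d=\Theta(n\log n)$), reuse $g,h$ of \cref{eq:g_first_case,eq:h_first_case} (with the $\vec{1}$-shift and the threshold inside $g$ scaled by $\tfrac{\eta\alpha}{\sqrt B}$ so that $0\in\partial g(0,V)$), and set $f=g+\alpha h$, which is convex and $O(1)$-Lipschitz. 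The sample-dependent oracle outputs $0$ for the first $n/16$ steps; at step $n/16+1$ it tests whether $\bigcap_{i\le n/16}V_i\ne\emptyset$ and, if so, fixes the lexicographically minimal $u_0$ in it; thereafter, as long as $u_0$ has not appeared among the samples seen after step $n/16$, it plays exactly the block-wise schedule of \cref{itm:good_event_geq_3} — first zeroing the effective vector $w^{\mathrm{eff}}:=w^{(1)}+w^{(2)}+\tfrac{5\eta\alpha}{7\sqrt B}\vec{1}$ on the zero-coordinates of $u_0$, then raising it on the one-coordinates — until $w^{\mathrm{eff}}=\tfrac{\eta\alpha}{\sqrt B}u_0$, and returns an arbitrary subgradient otherwise.

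Next I would introduce the good event $\mathcal E=\{\exists u_0\in U:u_0\in\bigcap_{i\le n/16}V_i\}\cap\{u_0\notin\bigcup_{n/16<j\le n}V_j\}$ and argue $\Pr[\mathcal E]\ge \tfrac12$. Since the inclusions are independent across $u\in U$, existence of such a $u_0$ fails with probability at most $(1-\delta^{n/16})^{|U|}\le e^{-|U|\delta^{n/16}}$, which is negligible once $d=\Theta(n\log n)$ makes $|U|\delta^{n/16}$ huge (this, in the spirit of \cref{lemma:high_probability}, is what pins down the constant in $d$); and, conditioned on the prefix, the suffix is still fresh, so $u_0$ lands in some later $V_j$ with probability at most $\tfrac{15n}{16}\delta=O(1/n)$, whence a union bound gives $\Pr[\mathcal E]\ge \tfrac12$ for $n\ge 17$. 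On $\mathcal E$ the oracle is valid by the same argument as in \cref{lemma:gradient_oracle_first_case}: whenever $h$-steps are taken $w^{\mathrm{eff}}_t$ is either supported on the ones of $u_0$ with entries $\le \tfrac{\eta\alpha}{\sqrt B}$, or $\le \tfrac{5\eta\alpha}{7\sqrt B}$ entrywise, so $w^{\mathrm{eff}}_t\cdot v\le \tfrac{\eta\alpha}{\sqrt B}(u_0\cdot v)\le \tfrac{\eta\alpha}{\sqrt B}\cdot\tfrac{5d}{16}$ (using $v\ne u_0$) or $\le \tfrac{5\eta\alpha}{7\sqrt B}\|v\|^2=\tfrac{5\eta\alpha d}{16\sqrt B}$, either way at most the threshold, so $0\in\partial g(w_t,V)$; and no projection occurs since $\|w_t\|^2\le \big(\tfrac{7d}{16B}\big)\eta^2\alpha^2\le n\eta^2\alpha^2\le 1$.

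Finally, on $\mathcal E$ the march finishes at step $T'=\tfrac n{16}+\tfrac{7d}{16B}$ with $w^{\mathrm{eff}}=\tfrac{\eta\alpha}{\sqrt B}u_0$ and $w_t=w_{T'}$ afterwards; choosing $B=\Theta(\log n)$ so that $T'\le n/2$, for \emph{every} suffix length $\tau$ the averaged effective vector has $\langle\overline{w^{\mathrm{eff}}},u_0\rangle\ge \tfrac{6}{16}\cdot\tfrac{\eta\alpha d}{\sqrt B}$ while $\overline{w^{\mathrm{eff}}}\cdot v\le \tfrac{5}{16}\cdot\tfrac{\eta\alpha d}{\sqrt B}$ (the threshold) for all $v\ne u_0$. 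Since $u_0\in V_i$ for all $i\le n/16$ — a $\tfrac1{16}$-fraction of the sample — while $g(0,\cdot)$ equals the threshold value identically and $h\ge h(0)=0$, we get
\[
F_S(\widehat w_{n,\tau})-F_S(0)\ \ge\ \frac{1}{16}\cdot\frac{1}{\sqrt d}\cdot\frac{\eta\alpha d}{16\sqrt B}\ =\ \frac{\eta\alpha}{256}\sqrt{\tfrac{d}{B}}\ =\ \Omega\!\left(\min\{\eta\sqrt n,1\}\right),
\]
using $d/B=\Theta(n)$, and then $F_S(w^\star_S)\le F_S(0)$ and \cref{corollary:reduction} finish \cref{thm:one_pass_SGD}. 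I expect the main obstacle to be exactly this budgeting in a single pass: the $\Theta(n)$ available steps must both ``memorize'' enough of the sample to locate $u_0$ and march the full distance to $\tfrac{\eta\alpha}{\sqrt B}u_0$, and moreover the march must complete before step $n/2$ so that \emph{every} suffix average retains a constant fraction of the displacement — reconciling these constraints forces $d=\widetilde{\Theta}(n)$ and $B=\Theta(\log n)$, and the requirement that the chosen $u_0$ miss the $\Theta(n)$-long suffix with probability bounded away from zero forces $\delta=\Theta(1/n^2)$.
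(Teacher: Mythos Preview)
Your proposal is correct and follows essentially the same strategy as the paper: keep $w_t=0$ over a constant-fraction prefix, pick $u_0$ as the lexicographically minimal element of $\bigcap_{i\le n/16}V_i$, and then march block-by-block towards $u_0$ over the remaining steps, with $\delta=\Theta(1/n^2)$ chosen so that the intersection is nonempty while $u_0$ misses the suffix with constant probability. Your accounting of the step budget ($T'\le n/2$ forcing $B=\Theta(\log n)$ and hence $d/B=\Theta(n)$) and the resulting $\sqrt{d/B}=\Theta(\sqrt n)$ gain is exactly the mechanism the paper exploits.

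The only substantive technical difference is the choice of packing set. You reuse Livni's $\{0,1\}^d$ set together with the $\tfrac{5\eta\alpha}{7\sqrt B}\vec 1$-shift in $g$ inherited from the $2\le K\le 34$ construction, so your march has a ``zero the off-support, then raise the on-support'' structure. The paper instead takes $U\subset\{\pm 1/\sqrt d\}^d$ from \citet{schliserman2024dimensionstrikesgradientsgeneralization}; since every coordinate of $u_0$ is then nonzero, no affine shift is needed (the threshold in $g$ is already met at $w=0$) and the two halves $w^{(1)},w^{(2)}$ simply absorb the negative and positive coordinates of $u_0$ in parallel. The paper also invokes the off-the-shelf \cref{lemma:good_event_n^2} for the good event, whereas you argue $\Pr[\mathcal E]\ge\tfrac12$ directly from $(1-\delta^{n/16})^{|U|}$ and a union bound on the suffix. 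Either route yields the same $\Omega(\min\{\eta\sqrt n,1\})$ empirical-risk gap in dimension $\widetilde\Theta(n)$; the paper's choice mainly buys a slightly cleaner $g$ (no shift) at the cost of a bipartite oracle, while yours recycles the multi-pass machinery verbatim.
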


\begin{proof}[Proof of \cref{lemma:sample_dependent_one_pass_SGD}]
Since $d \geq 256$, from Lemma 1 in \citet{schliserman2024dimensionstrikesgradientsgeneralization}, there exists $U \subset \left\{\frac{1}{\sqrt{d}}, -\frac{1}{\sqrt{d}} \right\}^d$ such that $|U| \geq 2^{\frac{d}{178}} \geq 2^{n}$ and for all $u\neq v \in U$: $|\langle u , v \rangle| \leq \frac{1}{8}$. This implies that:  
\begin{equation}\label{eq:upper_bound_one_same_sign_one_pass_SGD}
    \forall u\neq v \in U : |\{i \in [d] : u(i) = v(i)\}| \leq \frac{9}{16}d.
\end{equation}
Let $Z = P(U)$, and let $\mathcal{Z}$ be a distribution over $Z$ such just for a new sample $V\sim \mathcal{Z}$ every $u \in U$ will be in $V$ with probability $\delta = \frac{1}{4n^2}$. Let $\alpha = \min\{1, \frac{1}{\eta \sqrt{n}}\}$. Let $W \ \{x \in \mathbb{R}^{2d} : \|x\|\leq 1\}$. Denote $w^{(1)} = w[1:d], w^{(2)} = w[d+1: 2d]$. For a subset of indices $I \subset[d]$ denote the following:
$$e(I) = \frac{1}{|I|} \sum_{i\in I} e_i \quad \text{and} \quad w(I) = \frac{1}{|I|}\sum_{i\in I} w(i).$$
We will consider the following function with blocks of indices of size at most $8 \log n$:
\begin{align}\label{eq:f_one_pass_SGD}
    f(w,V) & = \max_{v\in V} \left\{ \frac{9\alpha}{16 \cdot 2\sqrt{2}}\eta \sqrt{n}, \left(w^{(1)} + w^{(2)}\right)\cdot v\right\} \\ &+ \max_{I \subset{d}:|I| \leq 8 \log n}\left\{0, w^{(1)}(I)\right\} + \max_{I \subset{d}:|I| \leq 8 \log n}\left\{0, -w^{(2)}(I)\right\} \notag .
\end{align}
It holds that $f$ is convex and $3$-Lipschitz. Now assume we have an order over the vectors in $U$ which is lexicographic. Denote by $u_{(r)}$ the $r$-th vector in such an order. We will be interested in the following event: 
\begin{equation}\label{eq:good_event_one_pass}
    \mathcal{E} = \left\{ \cap_{i=1}^{\lceil\frac{n}{16}\rceil} V_i \neq \emptyset \text{ and } \min_{r \in [|U|]}\{u_{(r)} : u_{(r)} \in \cap_{i=1}^{\lceil\frac{n}{16}\rceil} V_i\} \in \cap_{i=\lceil \frac{n}{16}+1 \rceil}^ n \overline{V_i}\right\}.
\end{equation}
From \cref{lemma:good_event_n^2}, $\mathcal{E}$ occurs with probability at least $\frac{1}{2}$. From now on we will assume $\mathcal{E}$ occurs. Let $S = \{V_1, \dots ,V_n\}$ be some training set. Next we define a sample dependent oracle $O_{\boldsymbol{S}}$. Given $w_t$ and at all past samples $\boldsymbol{S}_{1:t} = \{V_1, \dots, V_t\}$: 
\begin{enumerate}
    \item If $|\boldsymbol{S}_{1:t}| \leq \lceil\frac{n}{16}\rceil$ output 0. 
    \item Otherwise check if the exists $u_0 \in \cap_{i=1}^n V_i$. We will check this in lexicographic order over vectors in $U$, so we will find the minimal such vector.     
    \begin{enumerate}
        \item If $\cap_{i=1}^{\lceil\frac{n}{16}\rceil} V_i = \emptyset$ or $u_0 \in \cup_{t'= \lceil \frac{n}{16} \rceil +1}^{t} V_{i_{t'}}$, output an arbitrary sub-gradient. 
        \item  \label{itm:good_event_one_pass} Otherwise compute the following sets: 
        \begin{align*}
        J^p_t & = \left\{2i: i\in[d] \text{ and } u_0(i) > 0 \text{ and } \left[w_t^{(1)} + w_t^{(2)}\right](i) = 0\right\} \\
        J^{n}_t &= \left\{i \in [d] : u_0(i) < 0 \text{ and } \left[w_t^{(1)} + w_t^{(2)}\right](i) = 0\right\}.
        \end{align*}
        We will use 
        $w^{(1)}$ to take steps in the negative coordinates of $u_0$  - $J^n_t$ - and $w^{(2)}$ to take steps in the positive coordinates of $u_0$ - $j^p_t$. 
        Choose $\min\{8 \log n, |J^p_t|\}$ indices out of $J^p_t$ denote them $P_t$ and $\min\{8 \log n, |J^n_t|\}$ out of $J^n_t$ denote them $N_t$. Output 
        $$
        \alpha\left(e(N_t) - e(P_t) \right).
        $$ 
    \end{enumerate}  
\end{enumerate} 
We will prove this oracle is valid and when $\mathcal{E}$ holds not projections take place. Its proof is deferred to \cref{sec:proof_for_sub_gradient_one_pass_SGD}.
\begin{lemma}\label{lemma:gradient_oracle_one_pass_SGD}
    $O_{\boldsymbol{S}}$ stated above is a valid sample dependent first order oracle of $f$ defined in \cref{eq:f_one_pass_SGD}. Furthermore if $\mathcal{E}$ holds it will induce a trajectory such that we never leave the unit ball and no projections take place.  
\end{lemma}
Assuming $\mathcal{E}$ occurs, for $T' = \lceil\frac{n}{16}\rceil + \lceil \frac{d}{8\log n} \rceil \leq \frac{3n}{16} + 2$ we have that 
$$\forall t \geq T': \left[w_t^{(1)} + w_t^{(2)}\right] = \frac{\alpha  \eta \sqrt{d}}{ \sqrt{8\log n}}u_0 = \frac{\alpha \eta \sqrt{n}}{2\sqrt{2}}.$$
So for every suffix averaging $\tau$:
\begin{align*}
\left[\widehat{w}_{n,\tau}^{(1)} + \widehat{w}_{n,\tau}^{(2)}\right] \cdot u_0 & \geq \left(1 - \frac{T'}{n} \right) \left[w_{T}^{(1)} + w_{T'}^{(2)}\right] \cdot u_0 \\
& = \left(1 - \frac{\frac{3n}{16} + 2}{n} \right) \cdot\frac{\alpha \eta \sqrt{n}}{2\sqrt{2}} \| u_0 \|^2 \\
& \geq \left( 1- \frac{\frac{3n}{16} + \frac{n}{8}}{n} \right) \cdot \frac{\alpha \eta \sqrt{n}}{2\sqrt{2}} & n\geq 16\\
& = \frac{11}{16} \cdot \frac{\alpha \eta \sqrt{n}}{2\sqrt{2}}.
\end{align*}
We now have that with probability $\frac{1}{2}$:
$$
F_S(w_{n,\tau}) - F_S(0) \geq \frac{1}{n} \left( \frac{n}{16} \cdot \frac{11 \alpha \eta \sqrt{n}}{16 \cdot 2\sqrt{2} } + \frac{7n}{8} \cdot \frac{9\alpha \eta \sqrt{n}}{16\cdot 2\sqrt{2}}\right) - \frac{9 \alpha \eta \sqrt{n}}{16\cdot 2\sqrt{2}} \geq \frac{\alpha \eta \sqrt{n}}{365} = \frac{1}{365} \cdot \min \{1, \eta \sqrt{n}\}.
$$
\end{proof}

\subsection{Proofs for Auxiliary Lemmas}
\begin{proof}[Proof of \cref{lemma:gradient_oracle_one_pass_SGD}]\label{sec:proof_for_sub_gradient_one_pass_SGD}
    For the first part of the oracle note that $0\in \partial f(0,V)$ so we can stay at $0$ for as long as we like, in particular for $\lceil \frac{n}{16} \rceil$ steps. For the second part if $\cap_{i=1}^{\lceil \frac{n}{16} \rceil} V_i = \emptyset$ we keep outputting $0$ which is allowed as we saw before. If $\cap_{i=1}^{\lceil \frac{n}{16} \rceil} V_i \neq \emptyset$ we choose some $u_0$. If $u_0 \in \cup_{i= \lceil \frac{n}{16} \rceil +1}^{t} V_{i}$ we output a valid sub gradient by definition, if it is not we are clearly a taking gradient step for $\alpha h(w)$. It is thus left to show that in this case $0\in \partial g(w_t,V)$. Indeed, in this event until this point the gradient only output $0$ or executed \cref{itm:good_event_one_pass} in the oracle so the nonzero coordinates in $w_t$ have the same sign as $u_0$. Also for all $v \in V_{i_t}$ we have that $v \neq u_0$ which suggests using \cref{eq:upper_bound_one_same_sign_one_pass_SGD}: 
    $$
    \left(w_t^{(1)} + w_t^{(2)}\right) \cdot v \leq \sum_{i : v(i) = u_0(i)} \left(w_t^{(1)}(i) + w_t^{(2)}(i)\right) \cdot v(i) \leq \frac{\alpha\eta}{\sqrt{8 \log n}}\sqrt{d} \cdot \sum_{i : v(i) = u_0(i)} u_0 \cdot v(i) \leq \frac{9\alpha \eta}{16 \cdot 2\sqrt{2}}\sqrt{n}.
    $$  
    This concludes the proof that $O_{\boldsymbol{S}}$ is well defined. To see no projections take place if $\mathcal{E}$ as depicted in \cref{eq:good_event_one_pass} holds, note that this is exactly the event where the oracle either outputs $0$ or executes \cref{itm:good_event_one_pass} in the oracle. In this event:
    $$
    \forall t\in[n], i\in [d] :|w_t(i)| \leq \frac{\alpha\eta}{ \sqrt{8\log n}} \Longrightarrow \|w_t\|^2 \leq \frac{\alpha \eta \sqrt{2d}}{\sqrt{8\log n}} = \frac{1}{2} \min\{\eta \sqrt{n}, 1\} \leq \frac{1}{2}.
    $$
\end{proof}

\section{Reduction to Sample-Dependent Oracle}\label{sec:reduction}
For the reduction we will further formalize the notion of sample-dependent oracle when a gradient oracle $O_z$ is data-dependent and at step $t$ it is allowed to depend on the examples seen up to this step. We denote the sample-dependent oracle by $O_{\boldsymbol{S}}$ when $\boldsymbol{S} = (S_1, \dots, S_T)$ and for $t\in[T]$, $S_t$ is the set of examples given to the algorithm at step $t$. For example, for SGD $S_t = \{z_{i_t}\}$ and for Gradient Descent $S_t = S$. Then we denote the following: 
 $$O_{\boldsymbol{S}}(S_{1:t-1}; S_t,w_t) = \frac{1}{|S_t|}\sum_{z \in S_t} O_z(S_{1:t-1}; w) \quad \text{when } O_z(S_{1:t-1};w) \in \partial f(w,z).$$ 
when $S_{1:0} = \emptyset$, $S_{1:t} = (S_1, \dots, S_t)$. Finally we denote the trajectory induced by $O_{\boldsymbol{S}}$ which is initialized at $w_0 = 0$ and is specified by the following equation: 
\begin{equation}\label{eq: w_t^s}
    w_{t+1}^{\boldsymbol{S}} = w_t^{\boldsymbol{S}} - \eta O_{\boldsymbol{S}}(S_{1:t-1}; w_t, S_t).
\end{equation}
The following Lemma from \citet{livni2024samplecomplexitygradientdescent} will prove the reduction to the sample dependent oracle case:
\begin{lemma}[Lemma 9 in \citet{livni2024samplecomplexitygradientdescent}]\label{lemma:reduction_from_sample_dependent}
    Suppose $q \in \mathbb{R}^T, \|q\|_{\infty}\leq 1$ and $Z$ is finite. And suppose that $f(w,z)$ is a convex, L-Lipschitz function over $w \in \mathbb{R}^d$, let $\eta > 0$, let $O_{\boldsymbol{S}}$ be a sample dependent first order oracle, and for every sequence of samples $\boldsymbol{S} = (S_1, \dots, S_T)$ define the sequence $\{w_t^{\boldsymbol{S}}\}_{t=1}^{T+1}$ as in \cref{eq: w_t^s}. 

    Then, for every $\epsilon > 0$ there exists an $L+1$-Lipschitz convex function $\bar{f}((w,x),z)$ over $\mathbb{R}^{d+1}$ (that depends on $q,f,T,\eta, n, O_{\boldsymbol{S}},\epsilon$) such that for any oracle $O_z$ for $\bar{f}(z,\cdot)$, define $u_0 = 0\in\mathbb{R}^{d}$ and $x_0 = 0 \in \mathbb{R}$ and 
    $$(u_t,x_t) = (u_{t-1}, x_{t-1}) - \frac{\eta}{|S_t|} \sum_{z\in S_t} O_z((u_t,x_t))$$ then if we define 
    $$u_q = \sum_{t=1}^T q(t)u_t \quad x_q = \sum_{t=1}^T q(t)x_t \quad w_q^{\boldsymbol{S}} = \sum_{t=1}^T w_t^{\boldsymbol{S}}$$
    we have that $u_q = w_q^{\boldsymbol{S}}$ and for all $z$:
    $$|\bar{f}((u_q,x_q),z) - f(w_q^{\boldsymbol{S}},z)|\leq \epsilon$$
    $$|\bar{f}((0,0),z) - f(0,z)| \leq \epsilon.$$
\end{lemma}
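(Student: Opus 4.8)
The plan is to prove this by a reduction that turns \emph{memory} into an extra \emph{coordinate}: we augment the domain by a single real coordinate $x$ whose job is to store an encoding of the entire prefix of samples seen so far, so that a standard (memoryless) first-order oracle — which only sees the current query point $(w,x)$ and the current sample $z$ — can first \emph{decode} the history $S_{1:t-1}$ from $x$ and then answer with exactly the sample-dependent subgradient $O_z(S_{1:t-1};w)$ in the $w$-block. The finiteness of $Z$ together with the fact that $T$ is fixed is what makes this possible: the set of all prefixes $\{S_{1:t}:t\le T\}$ is finite, hence embeds into a bounded subset of $\mathbb{R}$; the only real work is to arrange that this encoding is \emph{generated} by the $x$-components of the very gradient steps that SGD takes on $\bar f$.

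Concretely, I would take
\[
\bar f\big((w,x),z\big) = f(w,z) + r_z(x),
\]
where $r_z:\mathbb{R}\to\mathbb{R}$ is a convex, $1$-Lipschitz, piecewise-linear potential (with slopes in fact far below $1$ at the relevant points) built as follows. Fix an injection $b$ from subsets of $Z$ into a digit set $\{1,\dots,\rho-1\}$ for a large base $\rho$ (e.g.\ the binary indicator of the subset), and choose the break-points of $r_z$ so that the subgradient of $r_z$ at the point reached by the trajectory after $t-1$ steps equals, up to the global scaling $\eta$, a quantity of order $\rho^{-t}$ whose per-element average over $z\in S_t$ recovers $b(S_t)$ — with $t$ itself being read off from the magnitude of $x$. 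Running SGD on $\bar f$ then produces $x_t = x_{t-1} - \eta\,\tfrac1{|S_t|}\sum_{z\in S_t} r_z'(x_{t-1})$, which telescopes to a base-$\rho$ fractional expansion $\propto \sum_{s\le t} b(S_s)\,\rho^{-s}$; since the digits never overlap, $x_t$ determines $(S_1,\dots,S_t)$ uniquely. The oracle $O_z$ for $\bar f$ is defined to return $r_z'(x)$ in the $x$-block and, after decoding $S_{1:t-1}$ from $x$, the vector $O_z(S_{1:t-1};w)$ in the $w$-block; the latter is a legitimate subgradient of $\bar f$ in $w$ precisely because the $w$-dependence of $\bar f$ is exactly $f(\cdot,z)$ and $O_z(S_{1:t-1};w)\in\partial f(w,z)$ by hypothesis.

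The verification is then an induction on $t$. Assuming $u_t=w_t^{\boldsymbol S}$ and that $x_t$ correctly encodes $S_{1:t}$, the $w$-step gives $u_{t+1}=u_t-\eta\,\tfrac1{|S_t|}\sum_{z\in S_t}O_z(S_{1:t-1};u_t)=u_t-\eta\,O_{\boldsymbol S}(S_{1:t-1};S_t,u_t)=w_{t+1}^{\boldsymbol S}$, and the $x$-step appends the next digit, preserving the encoding. Taking the $q$-weighted combination yields $u_q=\sum_t q(t)u_t=\sum_t q(t)w_t^{\boldsymbol S}=w_q^{\boldsymbol S}$, which is the stated identity (exact, not approximate). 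For the $\epsilon$-guarantee, all $x$-increments have size $O(\epsilon\eta\rho^{-s})$, so after a suitable global rescaling of $r_z$ the coordinate $x$ — and hence $x_q$ — stays within $\epsilon$ of $0$; since $r_z$ may be normalized with $r_z(0)=0$ and is $1$-Lipschitz, $|\bar f((u_q,x_q),z)-f(w_q^{\boldsymbol S},z)|=|r_z(x_q)|\le\epsilon$ and $|\bar f((0,0),z)-f(0,z)|=0\le\epsilon$. Convexity of $\bar f$ is immediate, being a sum of a term convex in $w$ and constant in $x$ and a term convex in $x$ and constant in $w$, and its Lipschitz constant is $\sqrt{L^2+1}\le L+1$.

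The main obstacle I expect is making the $x$-potential $r_z$ satisfy three competing constraints simultaneously: \emph{convexity} (the slopes at successive break-points must be monotone in the direction the trajectory travels), \emph{unique decodability} (the successive increments must behave like the non-overlapping digits of a positional numeral system, and when $|S_t|>1$ the per-element average $\tfrac1{|S_t|}\sum_{z\in S_t}r_z'$ must still recover $S_t$), and \emph{fidelity} (keeping $\bar f$ within $\epsilon$ of $f$ and $(L+1)$-Lipschitz). Reconciling convexity with digit magnitudes $\rho^{-t}$ that shrink in $t$ is the fiddly part; I would handle it by orienting the $x$-updates so the trajectory passes through the break-points in the order in which the required slopes are monotone, placing the break-points slightly off the actual iterates so no subgradient ambiguity arises, and — since all cases of interest here use $|S_t|=1$ — letting the single extra coordinate carry exactly the information $z$ rather than a general batch average.
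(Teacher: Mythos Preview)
The paper does not prove this lemma: it is quoted as Lemma~9 of \citet{livni2024samplecomplexitygradientdescent} and used as a black box, so there is no in-paper argument to compare against. I can therefore only assess your sketch on its own merits.

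There is a genuine gap. The statement requires the conclusion to hold for \emph{every} first-order oracle $O_z$ of $\bar f$, whereas you construct one particular oracle that decodes the history from $x$ and then returns the sample-dependent subgradient in the $w$-block. With your separable choice $\bar f((w,x),z)=f(w,z)+r_z(x)$, the $w$-component of $\partial\bar f$ at $(w,x)$ is exactly $\partial_w f(w,z)$ and is independent of $x$. An arbitrary oracle is therefore free to return \emph{any} element of $\partial_w f(u_t,z)$; it is under no obligation to decode anything from $x$ or to output the specific history-dependent choice $O_z(S_{1:t-1};u_t)$. Since the whole point of the sample-dependent oracle in the downstream constructions is precisely to make different subgradient selections at non-differentiable points of $f$ based on past samples (e.g., choosing which coordinate of a $\max$ to step along), a generic oracle for your $\bar f$ will in general produce a different $w$-trajectory, and the claimed identity $u_q=w_q^{\boldsymbol S}$ fails.

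To force the conclusion for every oracle, one must arrange that $\bar f$ is \emph{differentiable} at each trajectory point $(w_t^{\boldsymbol S},x_t)$ with gradient equal to the prescribed pair $(O_z(S_{1:t-1};w_t^{\boldsymbol S}),\, r_z'(x_t))$. That cannot happen with a separable sum when $f$ itself is non-differentiable at $w_t^{\boldsymbol S}$; the $w$-part of $\bar f$ must genuinely depend on $x$, so that for $x$ near the code $x_t$ the function in the $w$-direction is locally smoothed with the right gradient, while joint convexity in $(w,x)$ is preserved. This coupling between the encoding coordinate and the $w$-geometry is the crux of the reduction, and it is absent from your plan.
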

The following Lemma easily follows:
\begin{lemma}\label{corollary:reduction}
    Suppose $Z$ is finite, $f(w,z)$ is a convex, L-Lipschitz function over $w \in \mathbb{R}^d$, let $\eta > 0$, let $O_{\boldsymbol{S}}$ be a sample dependent first order oracle, and for every sequence of samples $\boldsymbol{S} = (S_1, \dots, S_T)$ define the sequence $\{w_t^{\boldsymbol{S}}\}_{t=1}^{T+1}$ as in \cref{eq: w_t^s}. Suppose also that for some number of steps $T$ and $\tau \in [T+1]$ we have with probability $p$: 
    $$F(\widehat{w}_{T, \tau}^{\boldsymbol{S}}) - F(0) \geq \ell$$
    Where $\ell > 0$. Then there exists an $L+1$-Lipschitz convex function $\bar{f}((w,x),z)$ over $\mathbb{R}^{d+1}$ (that depends on $\tau,f,T,\eta, n, O_{\boldsymbol{S}},\epsilon$) such that for any oracle $O_z$ for $\bar{f}(z,\cdot)$ if we define $v_0 = 0 \in\mathbb{R}^{d+1}$ and 
    $$v_{t+1} = v_t - \frac{\eta}{|S_t|} \sum_{z\in S_t} O_z(v_t),$$
    we will have: 
    $$\bar{F}(\widehat{v}_{T, \tau}) - \bar{F}(0) \geq \frac{\ell}{2}.$$
\end{lemma}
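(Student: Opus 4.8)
The plan is to invoke \cref{lemma:reduction_from_sample_dependent} with a carefully chosen averaging vector $q$ and a sufficiently small approximation parameter $\epsilon$. Concretely, I would take $q \in \mathbb{R}^T$ to be the $\tau$-suffix-averaging weights: $q(t) = 1/\tau$ for $t \in \{T-\tau+1,\dots,T\}\cap[T]$ and $q(t) = 0$ otherwise (for the boundary case $\tau = T+1$ this amounts to $q(t) = 1/(T+1)$ on all of $[T]$, which is consistent with the definition of $\widehat{w}_{T,\tau}$ since $w_0 = 0$). Then $\|q\|_\infty = 1/\tau \le 1$, so $q$ is admissible for the lemma, and for any trajectory initialized at $w_0 = 0$ we have $\sum_{t=1}^T q(t) w_t = \widehat{w}_{T,\tau}$. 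Apply \cref{lemma:reduction_from_sample_dependent} to $f$, $O_{\boldsymbol{S}}$, $\eta$, $n$, this $q$, and $\epsilon := \ell/4 > 0$, obtaining an $(L+1)$-Lipschitz convex function $\bar{f}((w,x),z)$ over $\mathbb{R}^{d+1}$ that is constructed deterministically from $(f, O_{\boldsymbol{S}}, q, T, \eta, n, \epsilon)$ and in particular does not depend on the realization of $\boldsymbol{S}$.

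Next I would unwind the identifications supplied by the lemma. Writing $v_t = (u_t,x_t)$ for the trajectory of $\bar f$ under an arbitrary oracle $O_z$, this matches the recursion $v_{t+1} = v_t - \tfrac{\eta}{|S_t|}\sum_{z\in S_t} O_z(v_t)$ in the statement after re-indexing, so $\widehat{v}_{T,\tau} = \sum_{t=1}^T q(t) v_t = (u_q,x_q)$. On the $f$-side, $\sum_{t=1}^T q(t) w_t^{\boldsymbol{S}} = \widehat{w}_{T,\tau}^{\boldsymbol{S}} = w_q^{\boldsymbol{S}}$, and the lemma guarantees the exact identity $u_q = w_q^{\boldsymbol{S}}$ together with the uniform bounds $|\bar f((u_q,x_q),z) - f(w_q^{\boldsymbol{S}},z)| \le \epsilon$ and $|\bar f((0,0),z) - f(0,z)| \le \epsilon$ for every $z$. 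Taking expectations over $z \sim \mathcal{Z}$ turns these into $|\bar F(\widehat{v}_{T,\tau}) - F(\widehat{w}_{T,\tau}^{\boldsymbol{S}})| \le \epsilon$ and $|\bar F(0) - F(0)| \le \epsilon$.

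Finally I would assemble the bound. Since $\bar f$ is fixed while the two approximation estimates above hold pathwise in $\boldsymbol{S}$, the event on which $F(\widehat{w}_{T,\tau}^{\boldsymbol{S}}) - F(0) \ge \ell$ — which by hypothesis has probability at least $p$ — is exactly an event on which
\[
\bar F(\widehat{v}_{T,\tau}) - \bar F(0) \;\ge\; \big(F(\widehat{w}_{T,\tau}^{\boldsymbol{S}}) - \epsilon\big) - \big(F(0) + \epsilon\big) \;=\; F(\widehat{w}_{T,\tau}^{\boldsymbol{S}}) - F(0) - 2\epsilon \;\ge\; \ell - \tfrac{\ell}{2} \;=\; \tfrac{\ell}{2},
\]
and this holds simultaneously for every choice of oracle $O_z$ for $\bar f$, which is the claimed conclusion.

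The only delicate points — where I would spend care rather than computation — are (i) verifying that $q$ is a legal averaging vector ($\|q\|_\infty \le 1$) and genuinely reproduces $\widehat{w}_{T,\tau}$, including the $\tau = T+1$ boundary case; and (ii) confirming that the probabilistic statement transfers. The latter works precisely because $\bar f$ is built deterministically from $f$ and $O_{\boldsymbol{S}}$ and the trajectory correspondence $u_q = w_q^{\boldsymbol{S}}$ is a pointwise-in-$\boldsymbol{S}$ equality, so no independence or measurability subtleties arise and the ``$\ge \ell/2$'' conclusion simply inherits the same probability $\ge p$. Everything else is a direct substitution into \cref{lemma:reduction_from_sample_dependent}.
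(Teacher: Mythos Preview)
Your proposal is correct and follows essentially the same route as the paper's own proof: choose $\epsilon = \ell/4$, pick $q$ to be the $\tau$-suffix-averaging weights, invoke \cref{lemma:reduction_from_sample_dependent}, and chain the two $\epsilon$-approximations to lose $2\epsilon = \ell/2$. If anything you are more careful than the paper about the indexing of $q$ and the $\tau = T+1$ boundary case.
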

\begin{proof}
    Let $\epsilon = \frac{\ell}{4} > 0$, define $q \in \mathbb{R}^d$ as follows: 
    $$q(t) = \begin{cases}
        \frac{1}{T-\tau+2} & \tau \leq t \leq T+1  \\
        0 & \text{otherwise} 
    \end{cases}.$$
    For this $q, \epsilon$ let $\bar{f}$ be the function whose existence follows from \cref{lemma:reduction_from_sample_dependent}. It is easy to see that $\widehat{w}_{\tau}^{\boldsymbol{S}} = w_q^{\boldsymbol{S}}$ and $\widehat{v}_{\tau} = (u_q, x_q)$. Then with probability $p$ we have: 
    $$\bar{F}(\widehat{v}_{T,\tau}) - \bar{F}(0) = \bar{F}(u_q,x_q) - \bar{F}(0) \geq F(w_q^{\boldsymbol{S}}) - F(0) - 2\epsilon \geq \ell - 2\cdot \frac{\ell}{4} = \frac{\ell}{2}.$$
\end{proof}

\section{Additional Lemmas}

\begin{lemma}[Lemma 14 in \citet{koren2022benign}] \label{lemma:lower_bound_one_over_eta_t}
    For any step-size $\eta > 0$, $T \in \mathbb{N}$ and $d = \lceil 16\eta^2 T^2 \rceil$ there exists a deterministic convex optimization
    problem $h: W \rightarrow \mathbb{R}$ where $W \subset \mathbb{R}^{d+1}$ is of constant diameter such that:
    $$
    h(\widehat{w}) - \min_{w\in W} h(w) \geq \frac{1}{8} \min \left\{\frac{1}{\eta T} + \eta, 1 \right\}.
    $$
\end{lemma}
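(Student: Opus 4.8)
Since the function $h$ is allowed to depend on $\eta$ and $T$, the plan is to realize the two terms $\tfrac{1}{\eta T}$ and $\eta$ by two separate constructions placed on disjoint blocks of coordinates, and to take $h$ to be their sum over a product domain of constant diameter; in each regime of $(\eta,T)$ one of the two pieces dominates. The truncation $\min\{\cdot,1\}$ is almost free: once the Lipschitz constant and diameter are normalized to be $O(1)$, the range of $h$ is $O(1)$, so it suffices to prove the lower bound $\Omega\!\big(\tfrac{1}{\eta T}+\eta\big)$ when this quantity is $O(1)$, and to separately exhibit $\Omega(1)$ suboptimality in the complementary regimes of very large step size or very few steps — which, as indicated below, comes out of the same constructions. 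A final rescaling absorbs all constants into the factor $\tfrac18$.

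For the $\tfrac{1}{\eta T}$ term the idea is a piecewise–linear function whose minimizer cannot be closely approached in $T$ gradient steps. A linear $h_1(w)=-\langle v,w\rangle$ on the unit ball already works for the averaged iterate: started at $0$, GD advances along $v$ by at most $\eta$ per step, so whenever the averaging window has length $\Omega(T)$ the early, far-from-optimal iterates drag $\widehat w$ back and $h_1(\widehat w)-\min h_1=\Omega(\tfrac{1}{\eta T})$ (and $\Omega(1)$ when $\eta T\lesssim 1$, since then the whole trajectory stays in a ball of radius $\eta T$). For the large–step regime $\eta\gtrsim 1$, where GD overshoots the boundary, a ``tent'' variant $h_1(w)=\big|\langle v,w\rangle-a\big|$ with $a$ a small constant makes GD oscillate between antipodal boundary points, so $\widehat w$ lies near the origin and $h_1(\widehat w)-\min h_1=\Omega(1)$. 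To get robustness to \emph{any} suffix window, one can alternatively use an $\ell_\infty$-type function $h_1(w)=\|w-c\|_\infty$ with $c=\tfrac{1}{\eta T}(1,\dots,1)\in\mathbb{R}^{d}$ and $d=\Theta((\eta T)^2)$: then $\|c\|_2=\Theta(1)$, so $c$ is feasible and $\min h_1=0$, while GD can touch only $O(T)$ of the $d$ coordinates, so every iterate — hence every suffix average — stays at $\ell_\infty$-distance $\Omega(\tfrac{1}{\eta T})$ from $c$ on the large block of untouched coordinates. This last construction is one natural place where the dimension $\Theta((\eta T)^2)$ enters.

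The $\eta$ term, relevant for $\tfrac{1}{\sqrt T}\lesssim\eta\lesssim 1$, is the crux. Here one wants a piecewise–linear $\max$-type potential $h_2(w)=\max_k\{\,\mathrm{affine}_k(w)\,\}$ whose offsets force the GD trajectory to keep moving — activating fresh coordinates and being re-projected onto the ball — for all $T$ steps, never settling, in the spirit of the staircase/memorization constructions already used in this paper and in \citet{amir2021sgd,koren2022benign,livni2024samplecomplexitygradientdescent}. The minimizer of $h_2$ over the ball is placed at a ``fully traversed'' configuration, whereas every sufficiently long suffix average is a convex combination of partially traversed configurations whose $h_2$-value exceeds the minimum by $\Theta(\eta)$; the key point is that the per-coordinate biases along the trajectory all point the same way, so no averaging window can cancel the $\Theta(\eta)$ gap. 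The delicate parts are (i) controlling the dynamics at ties of the $\max$ and after projections, so that GD deterministically advances rather than oscillating in place — this fixes the rate at which coordinates are consumed and hence the calibration $d\asymp(\eta T)^2$ — and (ii) the bookkeeping of the arithmetic/geometric sums that describe the averaged iterate and the resulting value gap. I expect (ii) to be routine once (i) is pinned down, and (i) to be where the real work lies.

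Finally I would assemble $h$ from $h_1$ and $h_2$ on disjoint coordinate blocks inside a constant-diameter domain, check that the combined function is $O(1)$-Lipschitz, rescale it to be $O(1)$-Lipschitz on a constant-diameter set, and conclude by taking in each regime whichever of the two pieces dominates, so that $h(\widehat w)-\min_{w\in W}h(w)\ \ge\ \tfrac18\min\{\tfrac{1}{\eta T}+\eta,\,1\}$. I expect the main obstacle to be the $\Omega(\eta)$ construction — specifically, designing the coordinate-activation/projection dynamics and proving that suffix averaging does not wash out the $\Theta(\eta)$ suboptimality.
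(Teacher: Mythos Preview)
The paper does not actually prove this lemma: it is stated as ``Lemma 14 in \citet{koren2022benign}'' in the appendix and no argument is supplied---the result is simply imported. So there is no in-paper proof to compare your proposal against; the ``paper's proof'' is a citation.

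On your plan itself: the two-block decomposition (one piece for $1/(\eta T)$, one for $\eta$, then take the larger) is the natural shape, and you are right that the $\Omega(\eta)$ term under \emph{arbitrary} suffix averaging is the nontrivial part. Two concrete caveats on what you do spell out. First, the linear $h_1(w)=-\langle v,w\rangle$ only controls suffix windows of length $\Omega(T)$; for short $\tau$---in particular the last iterate---GD with projections can sit at the boundary minimizer, so this piece alone does not cover all $\tau$ as the surrounding theorems require. Second, in your $\ell_\infty$ variant the claim that ``GD can touch only $O(T)$ coordinates'' presumes the oracle activates a single coordinate per step; at $w=0$ all coordinates tie, and a valid subgradient can be the uniform combination $-\tfrac{1}{d}\mathbf{1}$, which moves every coordinate at once. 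The bound still goes through in that case (a short calculation with $d=\Theta(\eta^2T^2)$ shows the uniform trajectory is still $\Omega(1/(\eta T))$ away from $c$ after $T$ steps), but your argument as written does not cover it, and more generally you would need to handle all valid subgradient choices, including overshoot on a single coordinate when $\eta>2/(\eta T)$. None of this is fatal, but the write-up is a plan rather than a proof; since the lemma you would be reproducing is exactly the one the paper cites, the cleanest route---and the one the paper takes---is to defer to \citet{koren2022benign}.
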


\begin{lemma}\label{lemma:high_probability} Let $U$ be a subspace such that $|U| \geq 2^{n}$. Let $Z = P(U)$ and let $\mathcal{Z}$ be a distribution of $Z$ such that for a random sample $V\sim \mathcal{Z}$ every $u\in U$ is in $V$ with probability $\frac{1}{2}$. Then for a sample $S = \{V_1, \dots, V_n\} \sim \mathcal{Z}^n$ with probability at least $\frac{1}{2}$ it holds that: $\cup_{i=1}^n V_i \neq U$ 
\end{lemma}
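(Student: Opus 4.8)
The plan is to fix a single element $u \in U$, compute the probability that it is missed by the entire sample $S$, and then combine over all of $U$ using the product structure of the sampling distribution. Concretely, under $\mathcal{Z}$ the membership of $u$ in a fresh sample is decided by an independent fair coin, so for a fixed $u$ the events $\{u \notin V_i\}_{i=1}^n$ are independent, each of probability $\tfrac12$, whence $\Pr[u \notin \bigcup_{i=1}^n V_i] = 2^{-n}$.

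Next I would observe that the indicator of the event $A_u := \{u \notin \bigcup_{i=1}^n V_i\}$ depends only on the $n$ coin flips governing membership of $u$ in $V_1,\dots,V_n$, and these sets of coins are disjoint across distinct $u \in U$; hence the events $\{A_u\}_{u\in U}$ are mutually independent. Therefore
$$
\Pr\Big[\bigcup_{i=1}^n V_i = U\Big] = \Pr\Big[\bigcap_{u\in U}\overline{A_u}\Big] = \prod_{u\in U}\big(1-2^{-n}\big) = \big(1-2^{-n}\big)^{|U|} \le \big(1-2^{-n}\big)^{2^n} \le e^{-1},
$$
where we used $|U|\ge 2^n$ together with monotonicity, and the standard inequality $(1-1/m)^m\le 1/e$ with $m=2^n$. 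Since $e^{-1}<\tfrac12$, this gives $\Pr[\bigcup_{i=1}^n V_i \ne U] \ge 1-e^{-1} > \tfrac12$, which is the claim.

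There is essentially no difficult step here; the only point deserving care is the independence of the events $A_u$ across $u$, which is exactly what the coordinate-wise independent sampling of elements of $U$ into each $V$ provides. If one wished to avoid invoking full independence, an alternative is a second-moment argument: setting $X=\sum_{u\in U}\mathbbm{1}[A_u]$ one has $\mathbb{E}[X]=|U|\,2^{-n}\ge 1$, and pairwise independence of the $\mathbbm{1}[A_u]$ controls $\operatorname{Var}(X)$, so the Paley–Zygmund inequality lower-bounds $\Pr[X\ge 1]$ by a constant; a mild strengthening of the hypothesis $|U|\ge 2^n$ by a constant factor (available in all the applications in this paper) then pushes this constant above $\tfrac12$.
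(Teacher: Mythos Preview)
Your proof is correct and essentially identical to the paper's: both compute $\Pr[u \notin \bigcup_i V_i]=2^{-n}$, use independence across $u\in U$ to obtain $\Pr[\bigcup_i V_i = U]=(1-2^{-n})^{|U|}\le (1-2^{-n})^{2^n}\le e^{-1}<\tfrac12$. You are slightly more explicit than the paper about why the events $A_u$ are independent, and your aside on a second-moment alternative is extra but not needed.
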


\begin{proof}
    First it holds that:
    $$\Pr \left[ u \in \cup_{i=1}^n V_i \right] = 1- \Pr \left[ u \notin \cup_{i=1}^n V_i \right] = 1 - 2^{-n}.$$
    This implies: 
    $$\Pr \left[ \cup_{i=1}^d V_i = U \right] = \Pr \left[ \forall u \in U, u\in \cup_{i=1}^n V_i \right] = (1-2^{-n})^{|U|} \leq (1-2^{-n})^{2^n} \leq \frac{1}{e} < \frac{1}{2}.$$
\end{proof}

\begin{lemma}[Lemma 9 in \citet{schliserman2024dimensionstrikesgradientsgeneralization}]\label{lemma:good_event_n^2}
    For  $d = 712n\log n$ and $U_{d}$ as depicted in Lemma 1 from \citet{schliserman2024dimensionstrikesgradientsgeneralization}, let $Z = P(U)$ and let a distribution $\mathcal{Z}$ over $Z$ be such that for $V \sim \mathcal{Z}$ every $u \in U_{d'}$ is in $V$ with probability $\frac{1}{4n^2}$. Let $S = \{V_1, \dots, V_n\}$ be a training set drawn i.i.d from $\mathcal{Z}$. Denote $P_t = \cap_{i=1}^{t-1}V_i$ and $S_t = \cap_{i=t}^n \overline{V_i}$. More over if $P_t \neq \emptyset$ we denote according to some ordering $i \mapsto v_i$ of the vectors in $U$, $r_t = \arg\min\{r: v_r \in P_t\}$, and $J_t = v_{r_t} \in U_{d}$, denote the following event: 
    $$\mathcal{E} = \{ \forall t \leq T, P_t \neq \emptyset \text{ and } J_t \in S_t\}.$$
    Then if $T = n$, $\Pr[\mathcal{E}] \geq \frac{1}{2}$. In particular with probability at least $\frac{1}{2}$: 
    $$P_{\lceil n/16 \rceil} \neq \emptyset \text{ and } J_{\lceil n/16 \rceil} \in S_{\lceil n/16 \rceil}.$$.
\end{lemma}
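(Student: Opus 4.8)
The plan is to prove $\Pr[\mathcal{E}] \ge \tfrac12$ by a union bound over $t \in [n]$, splitting the failure of the conjunction at each index into the two cases $P_t = \emptyset$ and $\{P_t \neq \emptyset \text{ but } J_t \notin S_t\}$. The structural fact I would lean on throughout is that $P_t$ --- and hence $J_t$, whenever it is defined --- is a deterministic function of the prefix $(V_1,\dots,V_{t-1})$, whereas $S_t = \bigcap_{i=t}^n \overline{V_i}$ is a deterministic function of the disjoint suffix $(V_t,\dots,V_n)$; since the $V_i$ are i.i.d., these two blocks of samples are independent.

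First I would bound $\Pr[P_t = \emptyset]$. For a fixed $u \in U$ the events $\{u \in V_i\}$, $i \in [t-1]$, are independent, each of probability $\delta = \tfrac{1}{4n^2}$, so $\Pr[u \in P_t] = \delta^{t-1}$; moreover the events $\{u \in P_t\}$ are independent across $u \in U$, since distinct vectors are included in the samples independently. Hence $\Pr[P_t = \emptyset] = (1-\delta^{t-1})^{|U|} \le \exp(-|U|\,\delta^{t-1}) \le \exp(-|U|\,\delta^{n-1})$. Using $|U| \ge 2^{d/178}$ with $d = 712\,n\log n$ together with $\delta^{-(n-1)} = (4n^2)^{n-1}$, one checks that $|U|\,\delta^{n-1}$ is super-exponentially large in $n$ --- this is exactly where the $\log n$ factor in the dimension is needed, since $\delta^{-(n-1)}$ grows like $n^{2n}$ --- and therefore $\Pr[P_t = \emptyset] \le \tfrac{1}{4n^2}$ uniformly for $t \le n$ once $n \ge 17$.

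Next I would condition on an arbitrary realization of the prefix $(V_1,\dots,V_{t-1})$ on which $P_t \neq \emptyset$; then $J_t$ is a fixed vector, and by independence of the suffix $\Pr[J_t \in S_t \mid P_t \neq \emptyset] = (1-\delta)^{\,n-t+1}$, hence $\Pr[J_t \notin S_t \mid P_t \neq \emptyset] \le (n-t+1)\,\delta$. Combining the two estimates, $\Pr[\overline{\mathcal{E}}] \le \sum_{t=1}^n\big(\Pr[P_t=\emptyset] + (n-t+1)\,\delta\big) \le n\cdot\tfrac{1}{4n^2} + \delta\cdot\tfrac{n(n+1)}{2} = \tfrac{1}{4n} + \tfrac{n+1}{8n} < \tfrac12$ for $n \ge 17$, which yields $\Pr[\mathcal{E}] \ge \tfrac12$. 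The ``in particular'' clause is simply the restriction of $\mathcal{E}$ to the single index $t = \lceil n/16\rceil$.

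I do not anticipate a genuine obstacle; the two points that need care are (a) invoking the prefix/suffix independence correctly --- because $\{P_t \neq \emptyset\}$ is measurable with respect to the prefix, conditioning on it leaves the suffix i.i.d.\ and renders $J_t$ constant relative to it, which is what legitimizes the factor $(1-\delta)^{n-t+1}$ --- and (b) verifying that the concrete choice $d = 712\,n\log n$ makes $|U|\,\delta^{n-1}$ overwhelming simultaneously for all $t \le n$, so that the $\Pr[P_t = \emptyset]$ terms contribute negligibly to the union bound.
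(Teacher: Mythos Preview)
The paper does not give its own proof of this lemma; it is quoted as Lemma~9 of \citet{schliserman2024dimensionstrikesgradientsgeneralization} and used as a black box, so there is no in-paper argument to compare against. Your proof is correct and is the natural one: the prefix/suffix decomposition (so that $J_t$ is fixed once $(V_1,\dots,V_{t-1})$ is, while $S_t$ depends only on the independent suffix), together with the union bound over $t$, gives exactly $\Pr[\overline{\mathcal E}]\le \sum_{t}\Pr[P_t=\emptyset]+\sum_t(n{-}t{+}1)\delta\le \tfrac{1}{4n}+\tfrac{n+1}{8n}<\tfrac12$, and the $\log n$ factor in $d=712\,n\log n$ is precisely what makes $|U|\,\delta^{n-1}$ overwhelm $(4n^2)^{n-1}$ so that the $\Pr[P_t=\emptyset]$ terms are negligible.
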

\end{document}